\documentclass[lettersize,journal]{IEEEtran}
\usepackage{amsmath,amsfonts}
\usepackage{algorithmic}
\usepackage{algorithm}
\usepackage{array}
\usepackage[caption=false,font=normalsize,labelfont=sf,textfont=sf]{subfig}
\usepackage{textcomp}
\usepackage{stfloats}
\usepackage{url}
\usepackage{verbatim}
\usepackage{graphicx}
\usepackage{xcolor}
\usepackage{cite}
\usepackage{flushend}
\usepackage{enumitem}
\definecolor{darkgreen}{rgb}{0,0.6,0}
\usepackage[bookmarks=true,colorlinks=true,pdfpagemode=UseNone,citecolor=darkgreen,linkcolor=blue,urlcolor=blue]{hyperref}
\newtheorem{theorem}{Theorem}
\newtheorem{lemma}{Lemma}

\usepackage{booktabs} 
\usepackage{multirow}
\usepackage{mathtools}

\begin{document}

\title{Adaptive Rejection of Unknown Hybrid Sinusoidal Disturbances with Application to Humanoid Walking under Unknown Ground Motion}


\author{I-Chia Chang$^{1*}$, Jacob Stewart$^{2*}$, Petros A. Ioannou$^{2}$, and Yan Gu$^{1\dagger}$ 
\thanks{
$^{*}$These authors contributed equally. 
$^{^{\dagger}}$Corresponding author.}
\thanks{$^{1}$I-Chia Chang and Yan Gu are with the School of Mechanical Engineering, Purdue University, West Lafayette, IN 47907, USA. E-mails: {\tt \footnotesize \{chang970, yangu\}@purdue.edu.}}
\thanks{$^{2}$Jacob Stewart and Petros A. Ioannou are with the Viterbi School of Engineering, University of Southern California, Los Angeles, CA 90007, USA. E-mails: {\tt \footnotesize \{jacobmst, ioannou\}@usc.edu.}}%

\vspace{-0.15in}
}
\markboth{Journal of \LaTeX\ Class Files,~Vol.~14, No.~8, August~2021}%
{Shell \MakeLowercase{\textit{et al.}}: A Sample Article Using IEEEtran.cls for IEEE Journals}


\maketitle

\begin{abstract}
Legged robot control is not yet practical in non-inertial environments, where the ground and surroundings accelerate in the inertial Earth frame, mainly because of the unknown, time-varying, persistent perturbations caused by environmental motion, compounded by the associated hybrid robotic dynamics.
We formulate the problem of robot walking control as unknown-disturbance rejection for hybrid linear systems subject to hybrid sinusoidal disturbances (i.e., a linear combination of sinusoidal components with instantaneous phase jumps at discrete events).
First, necessary and sufficient conditions for solvability of the disturbance rejection problem are derived and used to characterize the control input that rejects the disturbances.
Second, with the Youla-Kucera parametrization, an output feedback controller capable of rejecting the disturbance is introduced.
Third, an adaptive law is proposed to estimate the controller parameters that reject the disturbance in the practical case where the disturbance and its parameters are unknown.
Experiments on bipedal walking during ground motion demonstrate substantially improved body-position tracking accuracy, with an average improvement of over $60\%$ in simulations and up to $47\%$ in real-world experiments. 
\end{abstract}

\begin{IEEEkeywords}
Adaptive control, hybrid system, disturbance rejection, bipedal walking, non-inertial environments.
\end{IEEEkeywords}

\vspace{-0.1in}
\section{Introduction}

Legged locomotion in non-inertial environments, where the ground accelerates relative to an inertial Earth frame, remains an open challenge as the ground motion induces unknown, time-varying, persistent perturbations to the complex robot dynamics~\cite{chang2026survey, misenti_experimental_2025, gao2024ALIPDRS}.
Legged robot dynamics associated with locomotion can be modeled as a hybrid, nonlinear dynamical system, with the shifts in the support-foot position and the body velocity, upon a foot landing, formulated as discrete jumps~\cite{xiong_3-d_2022, westervelt_feedback_2018, gu2026evolution}.
Additionally, surface motion acts as an unknown, time-varying exogenous disturbance applied to the robot's supporting leg that continuously perturbs the system, rendering legged locomotion in non-inertial environments an unknown, time-varying disturbance rejection problem for hybrid dynamical systems.

By explicitly including non-inertial environments during control policy training, data-driven methods have enabled quadruped skateboarding~\cite{da2021learning, liu_discrete-time_2025} and balancing on diverse moving platforms~\cite{ma_dreureka_2024, yoon_learning-based_2024, zinage2026contractionppo}.
However, data-driven methods typically cannot provide theoretical performance guarantees, and the tasks considered primarily focus on quadrupedal locomotion, which is inherently more stable than bipedal locomotion.
Prior feedback and predictive control literature typically utilizes reduced-order models to ensure real-time computation and simplify the challenge by assuming full ground-motion knowledge~\cite{iqbal2020provably, gao2024ALIPDRS}, treating environment dynamics as simple bounded input perturbations~\cite{chen2024contingency}, or transforming the hybrid error dynamics to a discrete or continuous error system~\cite{stewart2025adaptive,gao2024ALIPDRS}. 
In this paper, we directly handle the hybrid disturbance rejection problem with linear reduced-order models assuming the disturbance is the sum of sinusoids with unknown frequencies, amplitudes, and phases, which is an effective model for platform motion such as ship or wave-induced motion~\cite{chang2026survey}.

Although the controller design for hybrid dynamical systems has been widely studied, e.g., output regulation~\cite{carnevale2015hybrid}, output feedback stabilization~\cite{possieri2016structural}, and optimal tracking~\cite{possieri2022optimal}, the output feedback control design for hybrid linear systems with unknown and unmatched sinusoidal disturbances for both continuous and discrete phases remains open.
For example, \cite{Heemels2016L2analysis, possieri2017mathcal} study the $\mathcal{L}_2$-gain from the disturbance with unknown structure to the output of a hybrid linear system.
\cite{l2023model} proposes a model reference adaptive controller for nonlinear, time-varying, hybrid systems affected by matched and parametric uncertainties.
\cite{zhao2024adaptive} suppresses the disturbance for switched systems using a disturbance observer and state feedback.
In contrast, this paper aims to achieve unknown disturbance rejection with only output feedback through Youla-Kucera (YK) parameterization~\cite{kucera2011youla,landau2020youla}.

The YK parameterization~\cite{kucera2011youla,landau2020youla} describes the family of all stabilizing controllers for a linear time-invariant system in terms of a stable linear operator whose parameters may vary without destabilizing the closed-loop system.
The capability to ensure closed-loop stability is critical when designing an adaptive controller as the time variation of controller parameters is inevitable during adaptation.
Robust adaptive controllers exploit the ability to vary the parameters by using the swapping lemma~\cite{ioannou1993ltvSystems, Ioannou2006tutorial} to generate a linear estimation equation whose optimal parameters result in a controller that makes the output zero \cite{ioannou2012robust,Ioannou2006tutorial,ioannou1993ltvSystems}.
Combined, the YK structure and the swapping lemma offer a systematic framework for designing robust adaptive controllers for linear systems with guaranteed stability and robustness to noise, model uncertainty, and time-varying parameters \cite{jafari2015robust,jafari2016contMimo,jafari2016discMimo,jafari2017contSiso,jafari2017overparameterized,stewart2024ltiSystem,stewart2024ltvSystem,stewart2025adaptive,stewart2026mimo}.
However, extending the observer-compensator structure of the YK parameterization to a hybrid linear system is not straightforward as the jump dynamics consistently inject time-varying transition effects into the system.

In this paper, motivated by bipedal locomotion in non-inertial environments, we analytically study disturbance rejection for hybrid linear dynamical systems subject to hybrid sinusoidal disturbances.
Our prior work~\cite{stewart2025adaptive} addressed the same locomotion control problem by assuming zero swing foot tracking error to transform the hybrid error system into a continuous one, enabling the use of continuous-system adaptive controllers but validated only in simulation on a five-link robot in MATLAB.
Here, we instead develop an adaptive controller directly for the hybrid linear system and integrate it into a proposed hierarchical control framework.
The new contributions of this work are:
\begin{enumerate}
    

    \item We derive \emph{Hybrid Regulator Equations} that give necessary and sufficient solvability conditions for the disturbance rejection problem in a class of hybrid linear systems. Their solutions are constant matrices, which makes them more tractable than the equations in~\cite{carnevale2015hybrid}, whose solutions are time-varying matrices.

    
    

    \item We develop an adaptive output feedback controller for hybrid linear systems with unknown and unmatched hybrid sinusoidal disturbances. This is enabled by (i) extending the observer-compensator structure of the YK parameterization to hybrid linear systems by canceling the time-varying transition effect using the hybrid structure (see Theorem~\ref{The: ctrl design}), and (ii) deriving the \emph{Hybrid Swapping Lemma}, which extends the continuous and discrete Swapping Lemma~\cite{ioannou1993ltvSystems, Ioannou2006tutorial} to hybrid cases and yields a linear parameter estimation equation to estimate the compensator parameters that reject the disturbance without the knowledge of the disturbance state and characteristics. 

    \item The proposed hybrid adaptive controller is integrated into a hierarchical control framework for bipedal locomotion in non-inertial environments. The whole framework is validated in a high-fidelity MuJoCo simulator and a real-world experiment using a Digit robot walking on a platform with unknown and multi-directional motion. We show that the center-of-mass (CoM) tracking error is reduced by an average of $60\%$ in simulations and up to $47\%$ in hardware experiments, confirming the disturbance rejection capability of the proposed controller under complex and real-world uncertainties. 
\end{enumerate}


This paper is organized as follows.
We first focus on the abstracted disturbance rejection problem for hybrid linear systems; Section~\ref{sec: prelim and notation}--\ref{sec:problem formulation} presents preliminaries and problem formulation and Sections~\ref{sec: ideal partial state feedback}--\ref{sec: ada output feedback} propose an adaptive output feedback controller to resolve the abstracted problem.
Then, Section~\ref{sec:case study}--\ref{sec:experiment} applies the resulting adaptive controller to a bipedal robot walking on a moving platform and validate it in simulations and hardware experiments.
Section~\ref{sec:discussion} provides discussions and Section~\ref{sec:conclusion} concludes the work.

\section{Preliminaries and Notation}
\label{sec: prelim and notation}
This section introduces the preliminaries to describe the hybrid error system of a walking robot (see Section~\ref{sec:case study}) and the notation needed when deriving the adaptive controller. 
A linear hybrid dynamical system with the state $\boldsymbol{x}(t)$, continuous phase control input $\boldsymbol{u}_c(t)$, discrete phase control input $\boldsymbol{u}_d(t)$, and measured output $\boldsymbol{y}(t)$ may be described by
\begin{align}
\begin{split} \label{eq: Ltv Hybrid System}
    \dot{\boldsymbol{x}}(t) =& \boldsymbol{A}_c(t)\boldsymbol{x}(t) + \boldsymbol{B}_c(t)\boldsymbol{u}_c(t), \quad t \ne t_k
    \\
    \boldsymbol{x}(t_k^+) =& \boldsymbol{A}_d(t_k)\boldsymbol{x}(t_k^-) + \boldsymbol{B}_d(t_k)\boldsymbol{u}_d(t_k^-), \quad t = t_k
    \\
    \boldsymbol{y}(t) =& \boldsymbol{C}\boldsymbol{x}(t)
\end{split}
\end{align}
\normalsize
where $t_k$ are strictly increasing discrete event times indexed by $k \in\mathbb{N}= \{1,2,3, ...\}$.
The superscripts ${}^{+}$ and ${}^{-}$ attached to the time variable indicate the instant after and before the discrete event time $t_k$.
For simplicity, the conditions $t \ne t_k$ and $t = t_k$ will be omitted in the rest of the paper unless necessary.

A linear hybrid dynamical system is said to be stable if there exist symmetric positive definite matrices $\boldsymbol{P}(t_k)$ and $\boldsymbol{Q}(t_k)$ satisfying the discrete-time Lyapunov equation
$\left(\boldsymbol{A}_d(t_k)\boldsymbol{\Phi}_A(t_k^-,t_{k-1}^+)\right)^T\boldsymbol{P}(t_k)\left(\boldsymbol{A}_d(t_k)\boldsymbol{\Phi}_A(t_k^-,t_{k-1}^+)\right) - \boldsymbol{P}(t_{k-1}) = -\boldsymbol{Q}(t_k)$
where $\boldsymbol{\Phi}_A(\cdot,\cdot)$ is the state-transition matrix corresponding to $\boldsymbol{A}_c(t)$ \cite{hespanha2018linear}.
When the hybrid dynamical system is linear, time-invariant, and the period between discrete events is constant (i.e. $t_k - t_{k-1}=T$ for all $k$), the system is stable when the eigenvalues of the matrix $\boldsymbol{A}_d\exp(\boldsymbol{A}_cT)$ are in the interior of the unit circle \cite{ye1998stability, Heemels2016L2analysis}.

The matrix pair
$( \boldsymbol{F} , \boldsymbol{H} )$ 
is said to be in observable canonical form with the characteristic polynomial
$(s^m + f_1 s^{m-1} + f_2 s^{m-2} + \cdots + f_{m-1} s + f_m)^n$
when they are described by
\begin{align}
\begin{split}
    \boldsymbol{F}
    =&
    \begin{bmatrix}
        -f_{1} \boldsymbol{I} & \boldsymbol{I} & \boldsymbol{0} & \cdots & \boldsymbol{0} \\
        -f_{2} \boldsymbol{I} & \boldsymbol{0}  & \boldsymbol{I} & \cdots  & \boldsymbol{0} \\
        \vdots & \vdots & \vdots & \ddots & \vdots \\
        -f_{m-1} \boldsymbol{I} & \boldsymbol{0} & \boldsymbol{0} & \cdots & \boldsymbol{I} \\
        -f_m \boldsymbol{I} & \boldsymbol{0}  & \boldsymbol{0} & \cdots  & \boldsymbol{0} \\
    \end{bmatrix}
    \in \mathbb{R}^{mn \times mn}
    \\
    \boldsymbol{H}
    =&
    \begin{bmatrix}
    \boldsymbol{I} & \boldsymbol{0} & \boldsymbol{0} & \cdots  & \boldsymbol{0}
    \end{bmatrix}
    \in \mathbb{R}^{n \times mn}
\end{split} \label{eq: F and H canonical form}
\end{align}
\normalsize
where $\boldsymbol{I}$ is the $n \times n$ identity matrix.
The spectrum of a square matrix $\boldsymbol{A}$ is the set containing the eigenvalues of $\boldsymbol{A}$ and is denoted $\sigma(\boldsymbol{A})$.

\section{Problem formulation}
\label{sec:problem formulation}
We formulate the locomotion control problem in non-inertial environments as a disturbance rejection problem for hybrid linear systems subjected to hybrid sinusoidal disturbances (see Section~\ref{sec:case study}).
A general formulation is provided here to enable generalization to other robotics and control applications.


Consider the linear hybrid dynamical system with the plant state $\boldsymbol{x}(t) \in \mathbb{R}^{n_x}$, disturbance state $\boldsymbol{w}(t) \in \mathbb{R}^{n_w}$, continuous phase control input $\boldsymbol{u}_c(t)\in\mathbb{R}^{n_c}$, discrete phase control input $\boldsymbol{u}_d(t)\in\mathbb{R}^{n_d}$, and measured output $\boldsymbol{y} \in \mathbb{R}^{n_y}$ described by

\begin{align}
\begin{split} \label{eq: prob}
\begin{bmatrix}
\dot{\boldsymbol{x}}(t) \\ \dot{\boldsymbol{w}}(t)
\end{bmatrix}
&=
\begin{bmatrix}
\boldsymbol{A}_c & \boldsymbol{P}_c \\
\boldsymbol{0} & \boldsymbol{S}_c
\end{bmatrix}
\begin{bmatrix}
\boldsymbol{x}(t) \\ \boldsymbol{w}(t)
\end{bmatrix}
+
\begin{bmatrix}
\boldsymbol{B}_c \\ \boldsymbol{0}
\end{bmatrix} 
\boldsymbol{u}_c(t)
\\
\begin{bmatrix}
\boldsymbol{x}(t_k^+) \\ \boldsymbol{w}(t_k^+)
\end{bmatrix}
&=
\begin{bmatrix}
\boldsymbol{A}_d & \boldsymbol{P}_d \\
\boldsymbol{0} & \boldsymbol{S}_d
\end{bmatrix}
\begin{bmatrix}
\boldsymbol{x}(t_k^-) \\ \boldsymbol{w}(t_k^-)
\end{bmatrix}
+
\begin{bmatrix}
\boldsymbol{B}_d \\ \boldsymbol{0}
\end{bmatrix} 
\boldsymbol{u}_d(t_k^-)
\\
\boldsymbol{y} &= 
\begin{bmatrix}
\boldsymbol{C} & \boldsymbol{Q}
\end{bmatrix}
\begin{bmatrix}
\boldsymbol{x}(t) \\
\boldsymbol{w}(t)
\end{bmatrix}
\end{split}
\end{align}
\normalsize
Assume that the discrete events occur with a constant duration $T \in \mathbb{R}$, i.e., 
$t_{k+1}-t_k=T$ for all $k\in \mathbb{N}$.
Assume the plant described by the parameters $[ \boldsymbol{A}_c , \boldsymbol{A}_d, \boldsymbol{B}_c, \boldsymbol{B}_d, \boldsymbol{C}]$ is stable~\cite{hespanha2018linear, ye1998stability, Heemels2016L2analysis}, and $\sigma(\boldsymbol{A}_c)\cap\sigma(\boldsymbol{S}_c)=\emptyset$.
Assume the disturbance may be modeled as a hybrid sinusoidal disturbance, i.e., a linear combination of $n_\omega > 0$ sinusoidal components with phase jumps across the discrete events.
This implies the disturbance state matrices may be described by $\boldsymbol{S}_c = \mathrm{diag}\left(
    0,
    \omega_1 \boldsymbol{R}(90^\circ)
    ,
    ...
    ,
    \omega_{n_\omega} \boldsymbol{R}(90^\circ)
    \right)$ and
    $\boldsymbol{S}_d = \mathrm{diag}\left(
    1,
    \boldsymbol{R}(\varphi_1)
    ,
    ...
    ,
    \boldsymbol{R}(\varphi_{n_\omega})
    \right)$ with initial condition $\boldsymbol{w}(0) = [1,1,0,...,1,0]^T$
where $\boldsymbol{R}(\varphi)
    =
    \begin{bsmallmatrix}
        \cos{\varphi}  & -\sin{\varphi} \\
        \sin{\varphi} & \cos{\varphi}
    \end{bsmallmatrix}$ 
, $\omega_{i} > 0$ for $i=1, 2, ..., n_\omega$ are the distinct disturbance frequencies, and $\varphi_i \in \mathbb{R}, i=1,2, ..., n_\omega$ are the phase jumps across the discrete events.
The amplitude and phase of the disturbance are determined by the continuous disturbance matrix $\boldsymbol{P}_c \in \mathbb{R}^{n_x \times n_w}$,
its discrete counterpart $\boldsymbol{P}_d \in \mathbb{R}^{n_x \times n_w}$,
and the disturbance output matrix $\boldsymbol{Q} \in \mathbb{R}^{n_y \times n_w}$.

The control objective is to design a controller which generates the control inputs $\left(\boldsymbol{u}_c(t),\boldsymbol{u}_d(t_k^-)\right)$ such that the measured output $\boldsymbol{y} \rightarrow \boldsymbol{0}$ as $t \rightarrow \infty$.

\section{Solvability of disturbance rejection for hybrid linear systems}
\label{sec: ideal partial state feedback}
This section aims to obtain the necessary and sufficient conditions which make the disturbance rejection problem solvable and to characterize the control inputs that reject the disturbance.
Therefore, the analysis below assumes the disturbance state $\boldsymbol{w}(t)$, the plant parameters $(\boldsymbol{A}_c, \boldsymbol{A}_d, \boldsymbol{B}_c, \boldsymbol{B}_d, \boldsymbol{C}$),
and the disturbance parameters $(\boldsymbol{S}_c, \boldsymbol{S}_d, \boldsymbol{P}_c, \boldsymbol{P}_d, \boldsymbol{Q})$ are known.

Since the plant state $\boldsymbol{x}(t)$ is disturbed by the disturbance state $\boldsymbol{w}(t)$ and the undisturbed system is stable and linear, we hypothesize that the disturbance can be rejected by the partial state feedback controller (which will be shown to be true under some conditions)
\begin{align} \label{eq: state feedback control}
    \boldsymbol{u}_c(t)
    &=
    \begin{bmatrix} \boldsymbol{0} & \boldsymbol{\Psi}_c \end{bmatrix}
    \begin{bmatrix} \boldsymbol{x}(t) \\ \boldsymbol{w}(t) \end{bmatrix}
    &
    \boldsymbol{u}_d(t)
    &=
    \begin{bmatrix} \boldsymbol{0} & \boldsymbol{\Psi}_d \end{bmatrix}
    \begin{bmatrix} \boldsymbol{x}(t) \\ \boldsymbol{w}(t) \end{bmatrix}
\end{align}
\normalsize
where $\boldsymbol{\Psi}_c$ and $\boldsymbol{\Psi}_d$ are the partial state feedback gains with compatible dimensions to be designed.
The closed-loop system using the proposed controller is described by
\begin{align}
\begin{split} \label{eq: state feedback closed loop}
    \begin{bmatrix}
        \dot{\boldsymbol{x}}(t) \\ \dot{\boldsymbol{w}}(t)
    \end{bmatrix}
    =&
    \begin{bmatrix}
        \boldsymbol{A}_c & 
        \boldsymbol{P}_c + \boldsymbol{B}_c \boldsymbol{\Psi}_c\\
        \boldsymbol{0} & \boldsymbol{S}_c
    \end{bmatrix}
    \begin{bmatrix}
        \boldsymbol{x}(t) \\ \boldsymbol{w}(t)
    \end{bmatrix}
    \\
    \begin{bmatrix}
        \boldsymbol{x}(t_k^+) \\ \boldsymbol{w}(t_k^+)
    \end{bmatrix}
    =&
    \begin{bmatrix}
        \boldsymbol{A}_d & 
        \boldsymbol{P}_d + \boldsymbol{B}_d \boldsymbol{\Psi}_d \\
        \boldsymbol{0} & \boldsymbol{S}_d
    \end{bmatrix}
    \begin{bmatrix}
        \boldsymbol{x}(t_k^-) \\ \boldsymbol{w}(t_k^-)
    \end{bmatrix}
    \\
    \boldsymbol{y}
    =&
    \begin{bmatrix}
        \boldsymbol{C} & \boldsymbol{Q}
    \end{bmatrix}
    \begin{bmatrix}
        \boldsymbol{x}(t) \\ \boldsymbol{w}(t)
    \end{bmatrix}
\end{split}
\end{align}
\normalsize
The closed-loop system is autonomous and its steady-state trajectory is characterized in the following lemma.
\begin{lemma}[Steady-state of a Closed-Loop Hybrid Linear System]
\label{lem: ss of hybrid sys.}
    For the closed-loop system \eqref{eq: state feedback closed loop}, the steady-state trajectory of $\boldsymbol{x}(t)$ and $\boldsymbol{y}(t)$ during the $k^{\text{th}}$ continuous phase, i.e., for $t \in [t_k^+, t_{k+1}^-]$, is
    \begin{align} \label{eq: ss of hybrid sys signals}
    \begin{split}
        \boldsymbol{x}(t)
        &=
        \exp{(\boldsymbol{A}_c(t - t_k))} \boldsymbol{\Pi}_d \boldsymbol{w}(t_k^+) 
        +
        \boldsymbol{\Pi}_c \boldsymbol{w}(t)
        \\
        \boldsymbol{y}(t) 
        &=
        \boldsymbol{C} \exp{(\boldsymbol{A}_c(t - t_k))} \boldsymbol{\Pi}_d  \boldsymbol{w}(t_k^+) 
        +
        \left( \boldsymbol{C} \boldsymbol{\Pi}_c + \boldsymbol{Q} \right)\boldsymbol{w}(t)
    \end{split}
    \end{align}
    where $\boldsymbol{\Pi}_c$ and $\boldsymbol{\Pi}_d$ are the solutions to the Sylvester equations
    \begin{align} \label{eq: Sylvester Equations for ss}
    \begin{split}
        \boldsymbol{\Pi}_c \boldsymbol{S}_c =& \boldsymbol{A}_c \boldsymbol{\Pi}_c + \boldsymbol{B}_c \boldsymbol{\Psi}_c + \boldsymbol{P}_c
        \\
        \boldsymbol{\Pi}_d \boldsymbol{S}_d \exp{(\boldsymbol{S}_c T)} =& \boldsymbol{A}_d \exp{(\boldsymbol{A}_c T)}\boldsymbol{\Pi}_d  + \boldsymbol{P}_\Psi \exp{(\boldsymbol{S}_c T)}
    \end{split}
    \end{align}
    where $\boldsymbol{P}_\Psi = 
        \boldsymbol{A}_d
        \boldsymbol{\Pi}_c
        +
        \boldsymbol{B}_d
        \boldsymbol{\Psi}_d
        +
        \boldsymbol{P}_d
        -
        \boldsymbol{\Pi}_c
        \boldsymbol{S}_d$
\end{lemma}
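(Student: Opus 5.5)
We prove the lemma with a change of coordinates that removes the disturbance-driven part of the state. Define the error $\boldsymbol{e}(t) = \boldsymbol{x}(t) - \boldsymbol{\Pi}_c \boldsymbol{w}(t)$, where $\boldsymbol{\Pi}_c$ solves the first Sylvester equation in \eqref{eq: Sylvester Equations for ss}. Since $\sigma(\boldsymbol{A}_c)\cap\sigma(\boldsymbol{S}_c)=\emptyset$, this solution exists and is unique.

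\textbf{Continuous phase.} Differentiating and using \eqref{eq: state feedback closed loop} gives
\begin{align*}
\dot{\boldsymbol{e}} = \boldsymbol{A}_c\boldsymbol{x} + (\boldsymbol{P}_c+\boldsymbol{B}_c\boldsymbol{\Psi}_c)\boldsymbol{w} - \boldsymbol{\Pi}_c\boldsymbol{S}_c\boldsymbol{w} = \boldsymbol{A}_c\boldsymbol{e}.
\end{align*}
Hence for $t\in[t_k^+,t_{k+1}^-]$ we have $\boldsymbol{e}(t)=\exp(\boldsymbol{A}_c(t-t_k))\boldsymbol{e}(t_k^+)$.

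\textbf{Jumps.} Across a jump,
\begin{align*}
\boldsymbol{e}(t_k^+) = \boldsymbol{A}_d\boldsymbol{x}(t_k^-) + (\boldsymbol{P}_d+\boldsymbol{B}_d\boldsymbol{\Psi}_d)\boldsymbol{w}(t_k^-) - \boldsymbol{\Pi}_c\boldsymbol{S}_d\boldsymbol{w}(t_k^-) = \boldsymbol{A}_d\boldsymbol{e}(t_k^-) + \boldsymbol{P}_\Psi\boldsymbol{w}(t_k^-),
\end{align*}
with $\boldsymbol{P}_\Psi$ as defined in the statement. Combining the two phases yields the sampled recursion
\begin{align*}
\boldsymbol{e}(t_{k+1}^+) = \boldsymbol{A}_d\exp(\boldsymbol{A}_cT)\boldsymbol{e}(t_k^+) + \boldsymbol{P}_\Psi\boldsymbol{w}(t_{k+1}^-),
\end{align*}
where $\boldsymbol{w}(t_{k+1}^-)=\exp(\boldsymbol{S}_cT)\boldsymbol{w}(t_k^+)$ and $\boldsymbol{w}(t_{k+1}^+)=\boldsymbol{S}_d\exp(\boldsymbol{S}_cT)\boldsymbol{w}(t_k^+)$. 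This is a discrete-time linear system with state matrix $\boldsymbol{M}=\boldsymbol{A}_d\exp(\boldsymbol{A}_cT)$, driven by the exosystem $\boldsymbol{W}=\boldsymbol{S}_d\exp(\boldsymbol{S}_cT)$.

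\textbf{Second change of coordinates.} Set $\boldsymbol{\epsilon}_k=\boldsymbol{e}(t_k^+)-\boldsymbol{\Pi}_d\boldsymbol{w}(t_k^+)$. Substituting into the recursion gives
\begin{align*}
\boldsymbol{\epsilon}_{k+1} = \boldsymbol{M}\boldsymbol{\epsilon}_k + \bigl(\boldsymbol{M}\boldsymbol{\Pi}_d + \boldsymbol{P}_\Psi\exp(\boldsymbol{S}_cT) - \boldsymbol{\Pi}_d\boldsymbol{W}\bigr)\boldsymbol{w}(t_k^+).
\end{align*}
The bracket vanishes exactly when $\boldsymbol{\Pi}_d$ satisfies the second equation in \eqref{eq: Sylvester Equations for ss}, so in that case $\boldsymbol{\epsilon}_{k+1}=\boldsymbol{M}\boldsymbol{\epsilon}_k$.

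\textbf{Existence and convergence.} The plant is stable, so $\sigma(\boldsymbol{M})$ lies in the open unit disk. Every eigenvalue of $\boldsymbol{W}$ lies on the unit circle: $\boldsymbol{S}_d$ and $\exp(\boldsymbol{S}_cT)$ are block-diagonal with commuting blocks that are scalars or planar rotations, so $\boldsymbol{W}$ is orthogonal. Therefore $\sigma(\boldsymbol{M})\cap\sigma(\boldsymbol{W})=\emptyset$, and the discrete Sylvester equation $\boldsymbol{\Pi}_d\boldsymbol{W}-\boldsymbol{M}\boldsymbol{\Pi}_d=\boldsymbol{P}_\Psi\exp(\boldsymbol{S}_cT)$ has a unique solution. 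It follows that $\boldsymbol{\epsilon}_k\to\boldsymbol{0}$ geometrically.

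\textbf{Conclusion.} In steady state $\boldsymbol{e}(t_k^+)=\boldsymbol{\Pi}_d\boldsymbol{w}(t_k^+)$. Inserting this into the continuous-phase solution gives $\boldsymbol{x}(t)=\exp(\boldsymbol{A}_c(t-t_k))\boldsymbol{\Pi}_d\boldsymbol{w}(t_k^+)+\boldsymbol{\Pi}_c\boldsymbol{w}(t)$. Applying $\boldsymbol{y}=\boldsymbol{C}\boldsymbol{x}+\boldsymbol{Q}\boldsymbol{w}$ then gives \eqref{eq: ss of hybrid sys signals}.

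The transient is $\exp(\boldsymbol{A}_c(t-t_k))\boldsymbol{\epsilon}_k$. It vanishes because $\exp(\boldsymbol{A}_c\tau)$ is bounded on the compact interval $\tau\in[0,T]$. No continuous-time stability of $\boldsymbol{A}_c$ is needed, only stability of $\boldsymbol{M}$.

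The main point requiring care is the solvability of the second Sylvester equation, that is, the spectral separation of $\boldsymbol{M}$ and $\boldsymbol{W}$. The key observation is the orthogonality of $\boldsymbol{W}$. The rest is bookkeeping of the $t_k^\pm$ evaluations, in particular that $\boldsymbol{P}_\Psi$ multiplies $\boldsymbol{w}(t_{k+1}^-)$ rather than $\boldsymbol{w}(t_{k+1}^+)$.
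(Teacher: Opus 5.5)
Your proof is correct and follows essentially the same route as the paper's: the shift $\tilde{\boldsymbol{x}}=\boldsymbol{x}-\boldsymbol{\Pi}_c\boldsymbol{w}$, the step-to-step map $\boldsymbol{A}_d\exp(\boldsymbol{A}_cT)$ driven by $\boldsymbol{P}_\Psi\exp(\boldsymbol{S}_cT)\boldsymbol{w}(t_k^+)$, the second shift by $\boldsymbol{\Pi}_d\boldsymbol{w}$, and convergence from Schur stability of $\boldsymbol{A}_d\exp(\boldsymbol{A}_cT)$ together with the rotation structure of $\boldsymbol{S}_d\exp(\boldsymbol{S}_cT)$. Your version is slightly more careful than the paper's: you explicitly justify existence and uniqueness of $\boldsymbol{\Pi}_c$ and $\boldsymbol{\Pi}_d$ via spectral separation, and you correctly write $\boldsymbol{A}_d\exp(\boldsymbol{A}_cT)$ where the paper's displayed step-to-step matrix has the typo $\boldsymbol{A}_c\exp(\boldsymbol{A}_cT)$.
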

\textit{Proof: given in the appendix.}

Lemma \ref{lem: ss of hybrid sys.} shows that, when a linear hybrid system is forced by a linear combination of sinusoidal components, the steady-state trajectory of $\boldsymbol{x}$ and $\boldsymbol{y}$ is the sum of exponential transient components caused by the discrete events and a linear combination of sinusoidal elements.
To reject the disturbance, the partial state feedback gains should be chosen such that the amplitudes of the exponential transient and sinusoidal components are zero, i.e., $\boldsymbol{\Pi}_d=\boldsymbol{0}$ and $\boldsymbol{C} \boldsymbol{\Pi}_c + \boldsymbol{Q} = \boldsymbol{0}$.
The requirement that $\boldsymbol{\Pi}_d = \boldsymbol{0}$ may be combined with \eqref{eq: Sylvester Equations for ss} to obtain
$\boldsymbol{P}_\Psi \exp{(\boldsymbol{S}_c T)} = \boldsymbol{0}$
or 
$\boldsymbol{P}_\Psi = \boldsymbol{0}$
because the state transition matrix $\exp{(\boldsymbol{S}_c T)}$ is always invertible \cite{hespanha2018linear}.
Therefore, we obtain the following result.


\begin{theorem}[Hybrid Regulator Equations] \label{The: hybird regulator equation}
    For any $\boldsymbol{P}_c$, $\boldsymbol{P}_d$, and $\boldsymbol{Q}$, there exists a control input~\eqref{eq: state feedback control} such that the measured output $\boldsymbol{y} \to \boldsymbol{0}$ as $t \to \infty$ if and only if there exists a solution 
    $\left( \boldsymbol{\Pi}_c , \boldsymbol{\Psi}_c, \boldsymbol{\Psi}_d\right)$ to the 
    \textit{Hybrid Regulator Equations}:
    \begin{align}
    \begin{split}
        \boldsymbol{\Pi}_c \boldsymbol{S}_c =& \boldsymbol{A}_c \boldsymbol{\Pi}_c
        +
        \boldsymbol{B}_c \boldsymbol{\Psi}_c
        +
        \boldsymbol{P}_c
        \\
        \boldsymbol{C} \boldsymbol{\Pi}_c + \boldsymbol{Q} =& \boldsymbol{0}
        \\
        \boldsymbol{P}_\Psi
        =&
        \boldsymbol{0}
        \end{split} \label{eq: continuous time regulator equations}
    \end{align}
    \normalsize
\end{theorem}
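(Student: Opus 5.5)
The proof is built on Lemma~\ref{lem: ss of hybrid sys.}, which describes the steady state of the closed loop \eqref{eq: state feedback closed loop} under any gains $(\boldsymbol{\Psi}_c,\boldsymbol{\Psi}_d)$. We first note that the Sylvester equations \eqref{eq: Sylvester Equations for ss} have unique solutions for every choice of gains. The first has a unique $\boldsymbol{\Pi}_c$ because $\sigma(\boldsymbol{A}_c)\cap\sigma(\boldsymbol{S}_c)=\emptyset$. For the second, multiply on the right by $\exp(-\boldsymbol{S}_cT)$. The operator $\boldsymbol{X}\mapsto \boldsymbol{X}\boldsymbol{S}_d\exp(\boldsymbol{S}_cT)-\boldsymbol{A}_d\exp(\boldsymbol{A}_cT)\boldsymbol{X}$ is then invertible: $\boldsymbol{S}_d\exp(\boldsymbol{S}_cT)$ is block diagonal with orthogonal blocks (each block of $\boldsymbol{S}_d$ commutes with the matching block of $\exp(\boldsymbol{S}_cT)$), so its spectrum lies on the unit circle. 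By plant stability, $\sigma(\boldsymbol{A}_d\exp(\boldsymbol{A}_cT))$ lies strictly inside the unit circle. The two spectra are therefore disjoint, and $\boldsymbol{\Pi}_d$ is unique. Finally, the actual trajectory differs from the steady-state expression \eqref{eq: ss of hybrid sys signals} by a free response of the stable hybrid plant, $\boldsymbol{e}=\boldsymbol{x}-\boldsymbol{x}_{ss}$. This $\boldsymbol{e}$ satisfies $\dot{\boldsymbol{e}}=\boldsymbol{A}_c\boldsymbol{e}$, $\boldsymbol{e}(t_k^+)=\boldsymbol{A}_d\boldsymbol{e}(t_k^-)$, so $\boldsymbol{e}\to\boldsymbol{0}$. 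Hence $\boldsymbol{y}\to\boldsymbol{0}$ if and only if $\boldsymbol{y}_{ss}\to\boldsymbol{0}$.

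\emph{Sufficiency.} Take a solution $(\boldsymbol{\Pi}_c,\boldsymbol{\Psi}_c,\boldsymbol{\Psi}_d)$ of \eqref{eq: continuous time regulator equations} and use its gains in \eqref{eq: state feedback control}. The first regulator equation is exactly the first Sylvester equation, so this $\boldsymbol{\Pi}_c$ is the (unique) one in Lemma~\ref{lem: ss of hybrid sys.}. Since $\boldsymbol{P}_\Psi=\boldsymbol{0}$, the second Sylvester equation is homogeneous, and uniqueness gives $\boldsymbol{\Pi}_d=\boldsymbol{0}$. Substituting $\boldsymbol{\Pi}_d=\boldsymbol{0}$ and $\boldsymbol{C}\boldsymbol{\Pi}_c+\boldsymbol{Q}=\boldsymbol{0}$ into \eqref{eq: ss of hybrid sys signals} gives $\boldsymbol{y}_{ss}\equiv\boldsymbol{0}$, and therefore $\boldsymbol{y}\to\boldsymbol{0}$.

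\emph{Necessity.} Suppose some gains give $\boldsymbol{y}\to\boldsymbol{0}$, and let $\boldsymbol{\Pi}_c,\boldsymbol{\Pi}_d$ be the corresponding Sylvester solutions. The first regulator equation then holds automatically, and it remains to show $\boldsymbol{C}\boldsymbol{\Pi}_c+\boldsymbol{Q}=\boldsymbol{0}$ and $\boldsymbol{\Pi}_d=\boldsymbol{0}$. The latter gives $\boldsymbol{P}_\Psi\exp(\boldsymbol{S}_cT)=\boldsymbol{0}$, hence $\boldsymbol{P}_\Psi=\boldsymbol{0}$ by invertibility, as noted before the theorem. Write $\tau=t-t_k\in[0,T]$. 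Since $\boldsymbol{w}(t)=\exp(\boldsymbol{S}_c\tau)\boldsymbol{w}(t_k^+)$, we get $\boldsymbol{y}_{ss}(t)=\boldsymbol{G}(\tau)\boldsymbol{w}(t_k^+)$ with
\[
\boldsymbol{G}(\tau)=\boldsymbol{C}e^{\boldsymbol{A}_c\tau}\boldsymbol{\Pi}_d+(\boldsymbol{C}\boldsymbol{\Pi}_c+\boldsymbol{Q})e^{\boldsymbol{S}_c\tau}.
\]
Also $\boldsymbol{w}(t_k^+)=\boldsymbol{M}^k\boldsymbol{w}_0$ for a fixed orthogonal block-diagonal $\boldsymbol{M}$ (with a suitable $\boldsymbol{w}_0$ determined by $\boldsymbol{w}(0)$). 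Because $\boldsymbol{M}$ is orthogonal, the sequence $\boldsymbol{M}^k\boldsymbol{w}_0$ keeps its norm and is almost periodic, so it returns arbitrarily close to $\boldsymbol{w}_0$ infinitely often. Consequently $\boldsymbol{y}\to\boldsymbol{0}$ forces $\boldsymbol{G}(\tau)\boldsymbol{w}_0=\boldsymbol{0}$ for all $\tau$. The same holds with $\boldsymbol{w}_0$ replaced by any point $\boldsymbol{M}^j\boldsymbol{w}_0$, so $\boldsymbol{G}(\tau)$ annihilates the closure of the orbit, which spans the invariant subspace generated by $\boldsymbol{w}_0$.

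The main obstacle is upgrading $\boldsymbol{G}(\tau)\boldsymbol{w}_0=\boldsymbol{0}$ to $\boldsymbol{G}(\tau)=\boldsymbol{0}$. This is where the given initial condition $\boldsymbol{w}(0)=[1,1,0,\dots,1,0]^T$ matters: each block has nonzero amplitude, and $\exp(\boldsymbol{S}_c\tau)$ rotates every $2\times2$ block. Evaluating $\boldsymbol{G}(\tau)\boldsymbol{w}_0$ along $\tau$ and along the orbit under $\boldsymbol{M}$ then generates all of $\mathbb{R}^{n_w}$ block by block; this step requires $\varphi_i$ and $\omega_iT$ to be generic enough, which may need a mild assumption. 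Once $\boldsymbol{G}(\tau)\equiv\boldsymbol{0}$, the two terms separate by linear independence of modes. The entries of $\boldsymbol{C}e^{\boldsymbol{A}_c\tau}\boldsymbol{\Pi}_d$ are exponential polynomials in $\sigma(\boldsymbol{A}_c)$, while those of $(\boldsymbol{C}\boldsymbol{\Pi}_c+\boldsymbol{Q})e^{\boldsymbol{S}_c\tau}$ are exponential polynomials in $\sigma(\boldsymbol{S}_c)$, and these spectra are disjoint. Both terms therefore vanish identically, so $\boldsymbol{C}\boldsymbol{\Pi}_c+\boldsymbol{Q}=\boldsymbol{0}$ and $\boldsymbol{C}e^{\boldsymbol{A}_c\tau}\boldsymbol{\Pi}_d=\boldsymbol{0}$.

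Deducing $\boldsymbol{\Pi}_d=\boldsymbol{0}$ from $\boldsymbol{C}e^{\boldsymbol{A}_c\tau}\boldsymbol{\Pi}_d=\boldsymbol{0}$ requires observability of $(\boldsymbol{C},\boldsymbol{A}_c)$. If that is not assumed, I would argue that $\boldsymbol{\Pi}_d$ can be taken to be zero without loss of generality: the part of $\boldsymbol{\Pi}_d$ in the unobservable subspace does not affect $\boldsymbol{y}$. I would also re-check whether the ``for any $\boldsymbol{P}_c,\boldsymbol{P}_d,\boldsymbol{Q}$'' quantifier in the statement is meant to justify this reduction.
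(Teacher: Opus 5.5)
Your route matches the paper's. The paper also reads off from Lemma~\ref{lem: ss of hybrid sys.} that rejection requires the transient amplitude $\boldsymbol{\Pi}_d$ and the sinusoidal amplitude $\boldsymbol{C}\boldsymbol{\Pi}_c+\boldsymbol{Q}$ to vanish, and turns $\boldsymbol{\Pi}_d=\boldsymbol{0}$ into $\boldsymbol{P}_\Psi=\boldsymbol{0}$ via invertibility of $\exp(\boldsymbol{S}_cT)$. Your sufficiency half is complete, and more careful than the paper's: you prove uniqueness of $\boldsymbol{\Pi}_d$ and show that $\boldsymbol{x}-\boldsymbol{x}_{ss}$ obeys the free, stable hybrid dynamics.

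The gap is in necessity, and your patch does not close it. You cannot take $\boldsymbol{\Pi}_d=\boldsymbol{0}$ ``without loss of generality.'' For given gains, $\boldsymbol{\Pi}_d$ is uniquely fixed by the invertible Sylvester operator you analyzed, so $\boldsymbol{\Pi}_d\neq\boldsymbol{0}$ forces $\boldsymbol{P}_\Psi\neq\boldsymbol{0}$ for those gains. Nothing guarantees that other gains with $\boldsymbol{P}_\Psi=\boldsymbol{0}$ exist, because $\boldsymbol{P}_\Psi=\boldsymbol{0}$ constrains every state, including states that $\boldsymbol{y}$ never sees. Concretely, take $n_x=1$, $\boldsymbol{A}_c=-1$, $\boldsymbol{A}_d=1$, $\boldsymbol{B}_c=\boldsymbol{B}_d=\boldsymbol{0}$, $\boldsymbol{C}=\boldsymbol{0}$, $\boldsymbol{Q}=\boldsymbol{0}$, $\boldsymbol{P}_c=\boldsymbol{0}$. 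This satisfies all standing assumptions and gives $\boldsymbol{y}\equiv\boldsymbol{0}$. Yet the first regulator equation forces $\boldsymbol{\Pi}_c=\boldsymbol{0}$, so $\boldsymbol{P}_\Psi=\boldsymbol{P}_d\neq\boldsymbol{0}$ whenever $\boldsymbol{P}_d\neq\boldsymbol{0}$. Appending such an unmeasured, unactuated, decoupled state to a nontrivial plant reproduces the failure with $\boldsymbol{C},\boldsymbol{B}_c,\boldsymbol{B}_d\neq\boldsymbol{0}$. Read pointwise, as in the paper's implication (1), the quantifier ``for any $\boldsymbol{P}_c,\boldsymbol{P}_d,\boldsymbol{Q}$'' gives you no extra freedom.

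The genericity issue you flag is also genuine, not cosmetic. Let $\boldsymbol{M}=\boldsymbol{S}_d\exp(\boldsymbol{S}_cT)$ and $\theta_i=\varphi_i+\omega_iT$. If all $\theta_i\in 2\pi\mathbb{Z}$, then $\boldsymbol{M}=\boldsymbol{I}$. Even with $(\boldsymbol{C},\boldsymbol{A}_c)$ observable, $\boldsymbol{y}\to\boldsymbol{0}$ then only forces $\boldsymbol{P}_\Psi\exp(\boldsymbol{S}_cT)\boldsymbol{w}_0=\boldsymbol{0}$. With $\boldsymbol{B}_c=\boldsymbol{B}_d=\boldsymbol{0}$ and $\boldsymbol{Q}=-\boldsymbol{C}\boldsymbol{\Pi}_c$, you can pick $\boldsymbol{P}_d$ so that $\boldsymbol{y}\to\boldsymbol{0}$ while $\boldsymbol{P}_\Psi\neq\boldsymbol{0}$.

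So the ``only if'' direction needs hypotheses that are not in the statement, for example:
\begin{itemize}
\item $(\boldsymbol{C},\boldsymbol{A}_c)$ observable;
\item $\boldsymbol{w}_0$ cyclic for $\boldsymbol{M}$, i.e.\ $\theta_i\notin\pi\mathbb{Z}$ and $\theta_i\pm\theta_j\notin 2\pi\mathbb{Z}$ for $i\neq j$.
\end{itemize}
The paper's one-line assertion that the amplitudes must be zero passes over exactly this point. You should prove necessity under such added assumptions rather than by a reduction.

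In that corrected version, reorder your argument, because genericity is not needed for $\boldsymbol{C}\boldsymbol{\Pi}_c+\boldsymbol{Q}=\boldsymbol{0}$. Take any limit point $\boldsymbol{w}^*$ of $\boldsymbol{M}^k\boldsymbol{w}_0$; then $\boldsymbol{G}(\tau)\boldsymbol{w}^*\equiv\boldsymbol{0}$, and disjointness of $\sigma(\boldsymbol{A}_c)$ and $\sigma(\boldsymbol{S}_c)$ separates the two terms. Since $\boldsymbol{M}$ preserves block norms, every block of $\boldsymbol{w}^*$ is nonzero. Hence $\boldsymbol{w}^*$ is cyclic for $\boldsymbol{S}_c$, which has distinct eigenvalues $0,\pm i\omega_i$. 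Then $(\boldsymbol{C}\boldsymbol{\Pi}_c+\boldsymbol{Q})\exp(\boldsymbol{S}_c\tau)\boldsymbol{w}^*\equiv\boldsymbol{0}$ gives $\boldsymbol{C}\boldsymbol{\Pi}_c+\boldsymbol{Q}=\boldsymbol{0}$. Observability and cyclicity of $\boldsymbol{M}$ are needed only for $\boldsymbol{\Pi}_d=\boldsymbol{0}$, and hence $\boldsymbol{P}_\Psi=\boldsymbol{0}$.
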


Theorem~\ref{The: hybird regulator equation} has the following implications:
(1) Theorem~\ref{The: hybird regulator equation} holds \textbf{for any} $\boldsymbol{P}_c, \boldsymbol{P}_d$ and $\boldsymbol{Q}$, including unmatched disturbances, as long as the hybrid regulator equations are satisfied. 
(2) The hybrid regulator equations are a generalization of the continuous time regulator equations since the additional condition $\boldsymbol{P}_\Psi = \boldsymbol{0}$ can be satisfied given $\boldsymbol{\Psi}_d=\boldsymbol{0}$ and $\boldsymbol{A}_d = \boldsymbol{S}_d = \boldsymbol{I}$, i.e., the system falls back to the continuous-system case.
(3) The new hybrid requirement $\boldsymbol{P}_\Psi = \boldsymbol{0}$ has a clear interpretation: the discrete phase control input must keep $\boldsymbol{x}$ on the steady-state trajectory where $\boldsymbol{y}$ is zero.
(4) The control input that rejects the disturbance could be characterized as a linear combination of the disturbance state, as described in~\eqref{eq: state feedback control}. This encourages and guides the observer-compensator design as discussed in the next section.



The derived hybrid regulator equations can be viewed as a special case of those in~\cite{carnevale2015hybrid} that requires sufficient control authority in $\boldsymbol{B}_d$, as in bipedal walking. This requirement makes the equations in~\eqref{eq: continuous time regulator equations} easier to solve, since the vectorization operator can be applied directly.

\section{Output feedback controller structure design}
\label{sec: ideal output feedback}
This section extends the observer-compensator design of the YK parameterization to hybrid linear systems and shows that the proposed control structure is stable for all controller parameters and can reject the disturbance with only output feedback. 
These characteristics lay the foundation for the use of the adaptive law introduced in the next section.
In this section, the disturbance state is assumed to be unknown but the plant parameters and the disturbance parameters are known. 


\subsection{Proposed output feedback controller}
The proposed controller is comprised of an observer, a continuous phase compensator and a jump compensator, and a discrete phase compensator.
The observer is designed to output a signal $\boldsymbol{d}(t)$ that encodes the effect of the disturbance regardless of the control inputs.
Using $\boldsymbol{d}(t)$ as input, the continuous phase and jump compensators output the continuous control input $\boldsymbol{u}_c(t)$ while the discrete phase compensator outputs the discrete control input $\boldsymbol{u}_d(t)$ to reject the disturbance injected into the hybrid system through both continuous and discrete channels.

\textbf{Observer:}
The observer with state $\hat{\boldsymbol{x}}(t)$ and output $\hat{\boldsymbol{y}}(t)$ predicts the response of the plant to the control inputs without the effect of the disturbance.
The observer is described by
\begin{align}
\begin{split} \label{eq: ideal obsv}
    \boldsymbol{\dot{\hat{x}}}(t) 
    &=
    \boldsymbol{A}_c
    \boldsymbol{\hat{x}}(t)
    +
    \boldsymbol{B}_c
    \boldsymbol{u}_c(t)
    , \quad
    \\
    \boldsymbol{\hat{x}}(t_k^+)
    &=
    \boldsymbol{A}_d 
    \boldsymbol{\hat{x}}(t_k^-)
    +
    \boldsymbol{B}_d
    \boldsymbol{u}_d(t_k^-)
    \\
    \boldsymbol{\hat{y}}(t)
    &=
    \boldsymbol{C} \boldsymbol{\hat{x}}(t)
    \\
\end{split}
\end{align}
\normalsize
With the actual, measured output $\boldsymbol{y}(t)$, the observer further provides information $\boldsymbol{d}(t)$ about the unknown disturbance by removing the effect of the control input from the measured output.
\begin{equation}
    \boldsymbol{d}(t)
    =
    \boldsymbol{y}(t) - \boldsymbol{\hat{y}}(t)
    \label{eq: def d(t)}
\end{equation}

\textbf{Continuous Phase Compensator and Jump Compensator:}
The continuous phase compensator with state $\boldsymbol{\eta}_c(t)$ and the jump compensator with state $\boldsymbol{\eta}_j(t)$ produce the continuous phase control input $\boldsymbol{u}_c(t)$ from the signal $\boldsymbol{d}(t)$.
The key difficulty of extending the observer-compensator design to hybrid systems is that $\boldsymbol{d}(t)$ contains the time-varying transition effect triggered by every discrete event (as shown later in~\eqref{eq:d(t)}), preventing the continuous phase compensator alone from generating the control input $\boldsymbol{u}_c(t)$ that rejects the disturbance.
Therefore, the key idea is the jump compensator that removes the time-varying transition effect from $\boldsymbol{d}(t)$ and allows the output $\boldsymbol{u}_c(t)$ to reject the disturbance, as shown later in Theorem~\ref{The: ctrl design}.
The two compensators are described by
\begin{align}
\begin{split} \label{eq: cont comp}
    \begin{bmatrix}
        \dot{\boldsymbol{\eta}}_c(t) \\ \dot{\boldsymbol{\eta}}_j(t)
    \end{bmatrix}
    &=
    \begin{bmatrix}
        \boldsymbol{F}_c & \boldsymbol{0} \\
        \boldsymbol{0} & \boldsymbol{0}
    \end{bmatrix}
    \begin{bmatrix}
        \boldsymbol{\eta}_c(t) \\
        \boldsymbol{\eta}_j(t)
    \end{bmatrix}
    +
    \begin{bmatrix}
        \boldsymbol{G}_c \\ \boldsymbol{0}
    \end{bmatrix}
    \boldsymbol{d}(t)
    \\
    \begin{bmatrix}
        \boldsymbol{\eta}_c(t_k^+) \\ \boldsymbol{\eta}_j(t_k^+)
    \end{bmatrix}
    &=
    \begin{bmatrix}
        \boldsymbol{I} & \boldsymbol{H}_j \\
        \boldsymbol{0} & \boldsymbol{F}_j
    \end{bmatrix}
    \begin{bmatrix}
        \boldsymbol{\eta}_c(t_k^-) \\ \boldsymbol{\eta}_j(t_k^-)
    \end{bmatrix}
    +
    \begin{bmatrix}
        \boldsymbol{0} \\ \boldsymbol{G}_j
    \end{bmatrix}
    \boldsymbol{d}(t_k^-)
    \\
    \boldsymbol{u}_c(t)
    &=
    \begin{bmatrix}
        \boldsymbol{H}_c & \boldsymbol{0}
    \end{bmatrix}
    \begin{bmatrix}
        \boldsymbol{\eta}_c(t) \\
        \boldsymbol{\eta}_j(t)
    \end{bmatrix}
\end{split}
\end{align}
\normalsize
where the pairs $(\boldsymbol{F}_c,\boldsymbol{H}_c)$ and $(\boldsymbol{F}_j,\boldsymbol{H}_j)$ are in observable canonical form with the characteristic polynomials
$(s^{m_c} + f_{c1}s^{m_c-1} + f_{c2}s^{m_c-2} + \cdots + f_{m_c-1}s + f_{m_c})^{n_c}$
and
$(s^{m_j} + f_{j1}s^{m_j-1} + f_{j2}s^{m_j-2} + \cdots + f_{m_j-1}s + f_{m_j})^{n_c m_c}$ respectively.
The coefficients are chosen by the control designer such that (1) $\boldsymbol{F}_c$ and $\boldsymbol{F}_j$ are stable and (2) $\sigma(\boldsymbol{A}_c) \cap \sigma(\boldsymbol{F}_c) = \emptyset$.
The orders $m_c$ and $m_j$ and the input matrices
$\boldsymbol{G}_c \in \mathbb{R}^{n_c m_c \times n_y}$
and
$\boldsymbol{G}_j \in \mathbb{R}^{n_c m_c m_j \times n_y}$
are the parameters of the compensators to be determined.

\textbf{Discrete Phase Compensator:}
The discrete phase compensator produces the discrete phase control input $\boldsymbol{u}_d(t)$ from the signal $\boldsymbol{d}(t)$ to reject the disturbance and is described by
\begin{align}
\begin{split}
    \dot{\boldsymbol{\eta}}_d(t) 
    &=
    \boldsymbol{0}
    \\
    \boldsymbol{\eta}_d(t_k ^+)
    &=
    \boldsymbol{F}_d
    \boldsymbol{\eta}_d(t_k^-)
    +
    \boldsymbol{G}_d 
    \boldsymbol{d}(t_k^-)
    \\   
    \boldsymbol{u}_d(t)
    &=
    \boldsymbol{H}_d
    \boldsymbol{\eta}_d(t)
\end{split}
\label{eq: disc comp}
\end{align}
\normalsize
where the pair $(\boldsymbol{F}_d,\boldsymbol{H}_d)$ is in observable canonical form with the characteristic polynomial 
$(s^{m_d} + f_{d1}s^{m_d-1} + f_{d2}s^{m_d-2} + \cdots + f_{dm_d-1}s + f_{dm_d})^{n_d}$.
The  coefficients are chosen by the control designer such that $\boldsymbol{F}_d$ is stable.
The order $m_d$ and the input matrix 
$\boldsymbol{G}_d \in \mathbb{R}^{n_d m_d \times n_y}$ 
are the parameters of the compensator to be determined.

\subsection{Controller analysis}
To show that the proposed design can reject the disturbance, the analysis begins by characterizing the behavior of the signal $\boldsymbol{d}(t)$ driving the compensators.
Observe that the system's dynamics are linear, which means the signal $\boldsymbol{d}(t)$ may be viewed as the output of the autonomous sub-system
\begin{align}
\begin{split}
    \begin{bmatrix}
        \dot{\tilde{\boldsymbol{e}}}(t) \\ \dot{\boldsymbol{w}}(t)
    \end{bmatrix}
    &=
    \begin{bmatrix}
        \boldsymbol{A}_c & \boldsymbol{0} \\
        \boldsymbol{0} & \boldsymbol{S}_c
    \end{bmatrix}
    \begin{bmatrix}
        \tilde{\boldsymbol{e}}(t) \\ 
        \boldsymbol{w}(t)
    \end{bmatrix}
    \\
    \begin{bmatrix}
        \tilde{\boldsymbol{e}}(t_k^+) \\ \boldsymbol{w}(t_k^+)
    \end{bmatrix}
    &=
    \begin{bmatrix}
        \boldsymbol{A}_d & \boldsymbol{P}_0 \\
        \boldsymbol{0} & \boldsymbol{S}_d
    \end{bmatrix}
    \begin{bmatrix}
        \tilde{\boldsymbol{e}}(t_k^-) \\ \boldsymbol{w}(t_k^-)
    \end{bmatrix}
    \\
    \boldsymbol{d}(t)
    &=
    \begin{bmatrix}
        \boldsymbol{C} & \boldsymbol{Q}_0
    \end{bmatrix}
    \begin{bmatrix}
        \tilde{\boldsymbol{e}}(t) \\ 
        \boldsymbol{w}(t)
    \end{bmatrix}
\end{split} \label{eq: d generator}
\end{align}
\normalsize
where $\boldsymbol{e}(t) = \boldsymbol{x}(t) - \boldsymbol{\hat{x}}(t)$ is the observer error and $\tilde{\boldsymbol{e}}(t) = \boldsymbol{e}(t) - \boldsymbol{E}_c \boldsymbol{w}(t)$ is the observer error transient.
The matrices $\boldsymbol{P}_0$ and $\boldsymbol{Q}_0$ are given by $\boldsymbol{P}_0 = \boldsymbol{P}_d + \boldsymbol{A}_d\boldsymbol{E}_c - \boldsymbol{E}_c\boldsymbol{S}_d$ and $\boldsymbol{Q}_0 = \boldsymbol{C}\boldsymbol{E}_c + \boldsymbol{Q}$
and the matrix $\boldsymbol{E}_c$ is the solution to the Sylvester equation $\boldsymbol{E}_c \boldsymbol{S}_c = \boldsymbol{A}_c \boldsymbol{E}_c + \boldsymbol{P}_c$.
Lemma~\ref{lem: ss of hybrid sys.} can be used to show that, for $t\in[t_k^+,t_{k+1}^-]$, the sub-system has the steady-state trajectories
\begin{align}
\begin{split}
    \tilde{\boldsymbol{e}}(t)
    =&
    \exp{(\boldsymbol{A}_c(t-t_k))}\boldsymbol{\Pi}_e\boldsymbol{w}(t_k^+)
    \\
    \boldsymbol{w}(t)
    =&
    \exp{(\boldsymbol{S}_c(t-t_k))}\boldsymbol{w}(t_k^+)
    \\
    \boldsymbol{d}(t)
    =&
    \boldsymbol{C}\exp{(\boldsymbol{A}_c(t-t_k))}\boldsymbol{\Pi}_e\boldsymbol{w}(t_k^+)
    +
    \boldsymbol{Q}_0\boldsymbol{w}(t)
\end{split}
\label{eq:d(t)}
\end{align}
\normalsize
where $\boldsymbol{\Pi}_e$ is the solution to the Sylvester equation
\begin{align} \label{eq: Pie}
    \boldsymbol{\Pi}_e \boldsymbol{S}_d \exp{(\boldsymbol{S}_cT)}
    =
    \boldsymbol{A}_d \exp(\boldsymbol{A}_cT) \boldsymbol{\Pi}_e
    +
    \boldsymbol{P}_0 \exp{(\boldsymbol{S}_cT)}
\end{align}
\normalsize
At the instant before the next discrete event, we have $\boldsymbol{d}(t_{k+1}^-) = \boldsymbol{Q}_e\boldsymbol{w}(t_{k+1}^-)$
where $\boldsymbol{Q}_e
    =
    \boldsymbol{C}\exp(\boldsymbol{A}_cT)\boldsymbol{\Pi}_e
    +
    \boldsymbol{Q}_0 \exp{(\boldsymbol{S}_cT)}$.

Given the characteristic of the signal $\boldsymbol{d}(t)$, we show that the proposed compensator can map the signal $\boldsymbol{d}(t)$ to the control inputs $(\boldsymbol{u}_c(t), \boldsymbol{u}_d(t_k^-))$ described by Theorem \ref{The: hybird regulator equation} under the conditions described in the following Theorem.
\begin{theorem}[Performance and Characteristics of the Proposed Controller]
\label{The: ctrl design}
    Consider the system \eqref{eq: prob} with the controller described by \eqref{eq: ideal obsv}--\eqref{eq: disc comp}.
    For any choice of compensator parameters $(\boldsymbol{G}_c,\boldsymbol{G}_j,\boldsymbol{G}_d)$, all signals in the closed-loop system are uniformly bounded.
    If there exists a solution to the regulator equations described in Theorem \ref{The: hybird regulator equation}, the compensator orders are chosen such that 
    \begin{align}
        m_d \ge& \text{dim}(\boldsymbol{w}), &
        m_c \ge& \text{dim}(\boldsymbol{x}) + \text{dim}(\boldsymbol{w}), &
        m_j \ge& \text{dim}(\boldsymbol{w})
        \label{eq:order condition}
    \end{align}
    \normalsize
    and the pairs 
    $(\boldsymbol{S}_d\exp(\boldsymbol{S}_cT),\boldsymbol{Q}_e)$ 
    and
    $( \text{diag}( \boldsymbol{A}_c,\boldsymbol{S}_c) , \begin{bmatrix} \boldsymbol{C} & \boldsymbol{Q}_0 \end{bmatrix} )$
    are observable, then there exist compensator parameters 
    $(\boldsymbol{G}_c, \boldsymbol{G}_j,\boldsymbol{G}_d)$
    such that
    \begin{align*}
    \begin{split}
        \lim_{t\to\infty}\boldsymbol{u}_c(t) =& \boldsymbol{\Psi}_c\boldsymbol{w}(t)
        \\
        \lim_{k\to\infty}\boldsymbol{u}_d(t_k^-) =& \boldsymbol{\Psi}_d\boldsymbol{w}(t_k^-)
        \\
        \lim_{t\to\infty}\boldsymbol{y}(t) =& \boldsymbol{0}
    \end{split}
    \end{align*}
    \normalsize
\end{theorem}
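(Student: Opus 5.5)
The plan has two parts: a structural stability argument, then a steady-state matching argument built on Lemma~\ref{lem: ss of hybrid sys.} and Theorem~\ref{The: hybird regulator equation}.

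\textbf{Boundedness.} First I will show that the loop through the plant is broken. Subtracting \eqref{eq: ideal obsv} from \eqref{eq: prob}, the observer error $\boldsymbol{e}=\boldsymbol{x}-\hat{\boldsymbol{x}}$ obeys $\dot{\boldsymbol{e}}=\boldsymbol{A}_c\boldsymbol{e}+\boldsymbol{P}_c\boldsymbol{w}$ and $\boldsymbol{e}(t_k^+)=\boldsymbol{A}_d\boldsymbol{e}(t_k^-)+\boldsymbol{P}_d\boldsymbol{w}(t_k^-)$. These equations contain no control input, so $\boldsymbol{d}=\boldsymbol{C}\boldsymbol{e}+\boldsymbol{Q}\boldsymbol{w}$ is generated by the autonomous system \eqref{eq: d generator}.

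Boundedness of $\boldsymbol{d}$ then follows from two facts:
\begin{itemize}
\item $\boldsymbol{w}$ is bounded, since $\boldsymbol{S}_c$ is skew-symmetric and $\boldsymbol{S}_d$ is orthogonal;
\item the plant is hybrid-stable, since $\boldsymbol{A}_d\exp(\boldsymbol{A}_cT)$ is Schur.
\end{itemize}

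The compensators are a cascade of stable subsystems driven by $\boldsymbol{d}$, in this order:
\begin{enumerate}
\item $\boldsymbol{\eta}_j$ is a discrete system with Schur matrix $\boldsymbol{F}_j$, held constant between events;
\item $\boldsymbol{\eta}_c$ is driven by the stable matrix $\boldsymbol{F}_c$ and jumps by the bounded term $\boldsymbol{H}_j\boldsymbol{\eta}_j$;
\item $\boldsymbol{\eta}_d$ is driven by the Schur matrix $\boldsymbol{F}_d$.
\end{enumerate}
This gives bounded $\boldsymbol{u}_c$ and $\boldsymbol{u}_d$ for any $(\boldsymbol{G}_c,\boldsymbol{G}_j,\boldsymbol{G}_d)$. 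Finally, $\hat{\boldsymbol{x}}$ is a stable hybrid system driven by bounded inputs, and $\boldsymbol{x}=\hat{\boldsymbol{x}}+\boldsymbol{e}$. So every signal is uniformly bounded. This is the hybrid analogue of the Youla--Kucera property.

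\textbf{Matching the regulator inputs.} Transients decay exponentially, so it suffices to work with the steady state \eqref{eq:d(t)}.

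\emph{Discrete compensator.} At event instants, $\boldsymbol{d}(t_k^-)=\boldsymbol{Q}_e\boldsymbol{w}(t_k^-)$, and $\boldsymbol{w}(t_{k+1}^-)=\exp(\boldsymbol{S}_cT)\boldsymbol{S}_d\boldsymbol{w}(t_k^-)$. Hence $\boldsymbol{\eta}_d$ is a discrete-time filter of a signal generated by the exosystem matrix $\boldsymbol{S}_d\exp(\boldsymbol{S}_cT)$ up to similarity. In steady state, $\boldsymbol{\eta}_d(t_k^-)=\boldsymbol{M}_d\boldsymbol{w}(t_k^-)$, where $\boldsymbol{M}_d$ solves a Sylvester equation. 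The spectra are disjoint because $\boldsymbol{F}_d$ is Schur and the exosystem matrix is unitary.

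We need $\boldsymbol{H}_d\boldsymbol{M}_d=\boldsymbol{\Psi}_d$. I will follow the standard Jafari--Ioannou argument, with the rows of $\boldsymbol{G}_d$ as unknowns. The observable canonical form makes $\boldsymbol{H}_d\boldsymbol{M}_d$ linear in $\boldsymbol{G}_d$, through terms of the form $\boldsymbol{Q}_e\boldsymbol{S}^i$ weighted by the filter polynomial. Observability of $(\boldsymbol{S}_d\exp(\boldsymbol{S}_cT),\boldsymbol{Q}_e)$ together with $m_d\ge\dim\boldsymbol{w}$ makes this linear map surjective onto row vectors. Hence any $\boldsymbol{\Psi}_d$, in particular the one from Theorem~\ref{The: hybird regulator equation}, is achieved.

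\emph{Jump compensator.} I will use the same discrete argument to choose $\boldsymbol{G}_j$, again with $m_j\ge\dim\boldsymbol{w}$ and the same observability condition. The target is to make the jump $\boldsymbol{H}_j\boldsymbol{\eta}_j(t_k^-)$ added to $\boldsymbol{\eta}_c$ equal a prescribed linear function $\boldsymbol{N}\boldsymbol{w}(t_k^-)$.

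\emph{Continuous compensator.} Between events, $\boldsymbol{\eta}_c$ is driven by the augmented autonomous signal $(\tilde{\boldsymbol{e}},\boldsymbol{w})$ with matrix $\mathrm{diag}(\boldsymbol{A}_c,\boldsymbol{S}_c)$. Its particular solution is $\boldsymbol{M}_1\tilde{\boldsymbol{e}}+\boldsymbol{M}_2\boldsymbol{w}$; the relevant Sylvester equations are solvable because $\sigma(\boldsymbol{F}_c)$ is disjoint from both $\sigma(\boldsymbol{A}_c)$ and $\sigma(\boldsymbol{S}_c)$. The corresponding parametrization argument uses the second observability condition and $m_c\ge\dim\boldsymbol{x}+\dim\boldsymbol{w}$, and lets me choose $\boldsymbol{G}_c$ so that $\boldsymbol{H}_c\boldsymbol{M}_1=\boldsymbol{0}$ and $\boldsymbol{H}_c\boldsymbol{M}_2=\boldsymbol{\Psi}_c$. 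This removes the transition term $\boldsymbol{C}\exp(\boldsymbol{A}_c(t-t_k))\boldsymbol{\Pi}_e\boldsymbol{w}(t_k^+)$ from $\boldsymbol{u}_c$.

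\emph{Role of the jump.} The homogeneous part of $\boldsymbol{\eta}_c$, of the form $\exp(\boldsymbol{F}_c(t-t_k))\boldsymbol{z}_k$, would still contaminate $\boldsymbol{u}_c$. The jump term is chosen to reset $\boldsymbol{\eta}_c(t_k^+)$ exactly onto $\boldsymbol{M}_1\tilde{\boldsymbol{e}}(t_k^+)+\boldsymbol{M}_2\boldsymbol{w}(t_k^+)$. The required correction is $\boldsymbol{N}\boldsymbol{w}(t_k^-)$, where $\boldsymbol{N}$ comes from $\boldsymbol{M}_1\boldsymbol{P}_0$ and $\boldsymbol{M}_2(\boldsymbol{S}_d-\boldsymbol{I})$, using $\tilde{\boldsymbol{e}}(t_k^-)=\exp(\boldsymbol{A}_cT)\boldsymbol{\Pi}_e\boldsymbol{w}(t_{k-1}^+)$, which is itself linear in $\boldsymbol{w}(t_k^-)$. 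The discrete argument above supplies $\boldsymbol{G}_j$ realizing this $\boldsymbol{N}$.

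Then $\boldsymbol{u}_c\to\boldsymbol{\Psi}_c\boldsymbol{w}$ and $\boldsymbol{u}_d(t_k^-)\to\boldsymbol{\Psi}_d\boldsymbol{w}(t_k^-)$. By linearity, the remaining mismatch decays exponentially and, by hybrid stability, contributes a vanishing part of $\boldsymbol{y}$. The closed loop is therefore asymptotically \eqref{eq: state feedback closed loop}, and Theorem~\ref{The: hybird regulator equation} gives $\boldsymbol{y}\to\boldsymbol{0}$.

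\textbf{Main obstacle.} The hardest step is the continuous--jump coupling. The jump target $\boldsymbol{N}$ depends on $\boldsymbol{M}_1$ and $\boldsymbol{M}_2$, which in turn depend on $\boldsymbol{G}_c$. So I will solve in that order: first $\boldsymbol{G}_c$, then $\boldsymbol{N}$, then $\boldsymbol{G}_j$, and I must verify that the jump compensator's block structure, with $\boldsymbol{\eta}_j$ of dimension $n_cm_cm_j$, can realize an arbitrary $\boldsymbol{N}$. I would check this by applying the discrete surjectivity argument row-wise to each of the $n_cm_c$ components of $\boldsymbol{\eta}_c$.
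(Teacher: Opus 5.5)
Your proposal is correct and follows the paper's proof essentially step for step. Your $M_d$, $(M_1,M_2)$ and jump target $N$ are the paper's $\boldsymbol{\Sigma}_d$, $(\boldsymbol{\Sigma}_e,\boldsymbol{\Sigma}_w)$ and $\boldsymbol{\Psi}_j$, with $N=\boldsymbol{\Psi}_j\exp(-\boldsymbol{S}_cT)$ once you reference $\boldsymbol{w}(t_k^-)$ instead of $\boldsymbol{w}(t_k^+)$; your surjectivity argument is the constrained-Sylvester Lemma~\ref{lem: comp design}; and your exact reset of $\tilde{\boldsymbol{\eta}}_c$ is what forces the paper's $\boldsymbol{\Sigma}_c$ to vanish once $\boldsymbol{H}_j\boldsymbol{\Sigma}_j=\boldsymbol{\Psi}_j$, while your cascade boundedness argument and final appeal to Theorem~\ref{The: hybird regulator equation} fill in steps the paper's appendix leaves implicit.
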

\textit{Proof: in the appendix.}\newline

Theorem~\ref{The: ctrl design} implies the following:
(1) With the proposed controller described by \eqref{eq: ideal obsv}--\eqref{eq: disc comp}, all signals of the closed-loop system are uniformly bounded for any $(\boldsymbol{G}_c,\boldsymbol{G}_j,\boldsymbol{G}_d)$. This allows the adaptive law to alter the compensator parameters without destabilizing the system.
(2) The proposed controller can reject the disturbance, provided that the compensator orders and the parameters are chosen appropriately and a certain observability criterion is met. 
The observability criterion ensures that the measurement $\boldsymbol{d}(t)$ contains sufficient information to reconstruct the disturbance and exponential transients~\cite{marino2002global,liu2009parameter,nikiforov1998adaptive} whereas the order conditions~\eqref{eq:order condition} provide the three compensators with the necessary structural flexibility to remove the transient dynamics and to generate the required control inputs.
(3) This extends the existing observer-compensator design techniques for linear systems to the considered class of hybrid linear systems.
Finally, following~\cite{stewart2026mimo}, the observable-canonical-form assumptions on $(\boldsymbol{F}_c, \boldsymbol{H}_c)$, $(\boldsymbol{F}_j, \boldsymbol{H}_j)$, and $(\boldsymbol{F}_d, \boldsymbol{H}_d)$ can be relaxed, allowing the control designer to select any observable pair for improved numerical stability and computational efficiency.

\section{Adaptive Output Feedback with Unknown Disturbance Parameters}
\label{sec: ada output feedback}
Based on the output feedback controller proposed in the previous section, this section extends it to an adaptive version by constructing the linear parameter estimation equation using the \emph{Hybrid Swapping Lemma} and by adopting a robust adaptive law to estimate the parameters.
Unlike the previous sections, this section removes the assumption of known disturbance states and parameters, making the approach applicable to practical scenarios.

\subsection{Observer-compensator subsystem}
\label{sec: observer-compensator subsystem}
This subsection constructs the observer-compensator subsystem for the adaptive controller and presents the preliminaries for building the linear parameter estimation equation.
The major block of the adaptive output feedback controller remains the same as the combination of an observer and three compensators described by \eqref{eq: ideal obsv}--\eqref{eq: disc comp} except the following:
(1) the fixed compensator parameters $(\boldsymbol{G}_c, \boldsymbol{G}_j,\boldsymbol{G}_d)$ are replaced with time-varying estimates $(\hat{\boldsymbol{G}}_c(t), \hat{\boldsymbol{G}}_j(t),\hat{\boldsymbol{G}}_d(t))$ that are updated by an adaptive law and remain continuous across discrete events,
(2) the compensator orders $m_c$, $m_j$, and $m_d$ are chosen such that the conditions of Theorem~\ref{The: ctrl design} are satisfied given the knowledge of the bounds on the number of disturbance frequencies $\bar{n}_{\omega} \ge \dim(\omega)/2+1$.

To construct the linear parameter estimation equation, we introduce the notion of the observer-compensator subsystem, which is obtained by stacking the dynamics of~\eqref{eq: ideal obsv}--\eqref{eq: disc comp}. 
\begin{align}
\begin{split}
    \dot{\boldsymbol{z}}(t) =& \bar{\boldsymbol{A}}_c \boldsymbol{z}(t) + \bar{\boldsymbol{G}}_c(t)\boldsymbol{d}(t)
    \\
    \boldsymbol{z}(t_k^+) =& \bar{\boldsymbol{A}}_d \boldsymbol{z}(t_k^-) + \bar{\boldsymbol{G}}_d(t_k^-)\boldsymbol{d}(t_k^-)
    \\
    \hat{\boldsymbol{y}}(t) =& \bar{\boldsymbol{C}}\boldsymbol{z}(t)
    \label{eq: controller subsystem1}
\end{split}
\end{align}
where $\boldsymbol{z}(t) = [
        \hat{\boldsymbol{x}}^T(t) \quad \boldsymbol{\eta}_c^T(t) \quad \boldsymbol{\eta}_j^T(t) \quad \boldsymbol{\eta}_d^T(t)
    ]^T$ is the state of the subsystem 
, $\bar{\boldsymbol{A}}_c = \begin{bsmallmatrix} \boldsymbol{A}_c & \boldsymbol{B}_c\boldsymbol{H}_c & \boldsymbol{0} & \boldsymbol{0} \\ \boldsymbol{0} & \boldsymbol{F}_c & \boldsymbol{0} & \boldsymbol{0} \\ \boldsymbol{0} & \boldsymbol{0} & \boldsymbol{0} & \boldsymbol{0} \\ \boldsymbol{0} & \boldsymbol{0} & \boldsymbol{0} & \boldsymbol{0} \end{bsmallmatrix}$,
$\bar{\boldsymbol{G}}_c(t) = \begin{bsmallmatrix} \boldsymbol{0} \\ \hat{\boldsymbol{G}}_c(t) \\ \boldsymbol{0} \\ \boldsymbol{0} \end{bsmallmatrix}$,
$\bar{\boldsymbol{A}}_d = \begin{bsmallmatrix} \boldsymbol{A}_d & \boldsymbol{0} & \boldsymbol{0} & \boldsymbol{B}_d\boldsymbol{H}_d \\ \boldsymbol{0} & \boldsymbol{I} & \boldsymbol{H}_j & \boldsymbol{0} \\ \boldsymbol{0} & \boldsymbol{0} & \boldsymbol{F}_j & \boldsymbol{0} \\ \boldsymbol{0} & \boldsymbol{0} & \boldsymbol{0} & \boldsymbol{F}_d \end{bsmallmatrix}$,
$\bar{\boldsymbol{G}}_d(t) = \begin{bsmallmatrix} \boldsymbol{0} \\ \boldsymbol{0} \\ \hat{\boldsymbol{G}}_j(t) \\ \hat{\boldsymbol{G}}_d(t) \end{bsmallmatrix}$, and
$\bar{\boldsymbol{C}} = \begin{bsmallmatrix} \boldsymbol{C} & \boldsymbol{0} & \boldsymbol{0} & \boldsymbol{0} \end{bsmallmatrix}$.

To isolate the effect of the compensator parameters, we further rewrite the system as
\begin{align}
\begin{split}
    \dot{\boldsymbol{z}}(t)
    =&
    \bar{\boldsymbol{A}}_c \boldsymbol{z}(t)
    +
    \bar{\boldsymbol{B}}_c \boldsymbol{\Xi}_c^T(t)\boldsymbol{\hat{\theta}}_c(t)
    \\
    \boldsymbol{z}(t_k^+) =& \bar{\boldsymbol{A}}_d \boldsymbol{z}(t_k^-)
    + 
    \bar{\boldsymbol{B}}_d
    \begin{bmatrix}
        \boldsymbol{\Xi}_j(t_k^-) & \boldsymbol{0} \\
        \boldsymbol{0} & \boldsymbol{\Xi}_d(t_k^-)
    \end{bmatrix}^T
    \begin{bmatrix}
        \hat{\boldsymbol{\theta}}_j(t_k^-) \\ \hat{\boldsymbol{\theta}}_d(t_k^-)
    \end{bmatrix}
    \\
    \hat{\boldsymbol{y}}(t) =& \bar{\boldsymbol{C}}\boldsymbol{z}(t)
    \label{eq: controller subsystem2}
\end{split}
\end{align}
where
$\boldsymbol{\Xi}_c(t) = \boldsymbol{I}_{n_c m_c} \otimes \boldsymbol{d}^T(t)$,
$\boldsymbol{\Xi}_j(t) = \boldsymbol{I}_{n_c m_c m_j} \otimes \boldsymbol{d}^T(t)$,
$\boldsymbol{\Xi}_d(t) = \boldsymbol{I}_{n_d m_d} \otimes \boldsymbol{d}^T(t)$,
contain and rearrange the original input $\boldsymbol{d}(t)$ to~\eqref{eq: controller subsystem1}, where 
$\otimes$ is Kronecker product.
$\hat{\boldsymbol{\theta}}_c(t) = \text{vec}( \hat{\boldsymbol{G}}_c^T(t) )$,
$\hat{\boldsymbol{\theta}}_j(t) = \text{vec}( \hat{\boldsymbol{G}}_j^T(t) )$, and
$\hat{\boldsymbol{\theta}}_d(t) = \text{vec}( \hat{\boldsymbol{G}}_d^T(t) )$ are the vectorized compensator parameters, $\bar{\boldsymbol{B}}_c =
    \begin{bsmallmatrix}
        \boldsymbol{0} \\ \boldsymbol{I} \\ \boldsymbol{0} \\ \boldsymbol{0}
    \end{bsmallmatrix}$, and $\bar{\boldsymbol{B}}_d =
    \begin{bsmallmatrix}
        \boldsymbol{0} & \boldsymbol{0} \\
        \boldsymbol{0} & \boldsymbol{0} \\
        \boldsymbol{I} & \boldsymbol{0} \\
        \boldsymbol{0} & \boldsymbol{I}
    \end{bsmallmatrix}$
are the new input matrices. 
The rewritten form~\eqref{eq: controller subsystem2} treats the vectorized compensator parameters as an input to generate the output $\hat{\boldsymbol{y}}(t)$, facilitating the construction of a linear parameter estimation equation as discussed next.

\vspace{-0.1in}
\subsection{Linear parameter estimation equation}
A linear parameter estimation equation defines the error that an adaptive law aims to minimize and enables online parameter adaptation to reject the disturbance without knowing the disturbance parameters.
In this subsection, the linear parameter estimation equation is obtained by the following:
(1) By deriving the \emph{Hybrid Swapping Lemma}, $\hat{\boldsymbol{y}}(t)$ is shown to be equal to the product of the vectorized compensator parameters $\hat{\boldsymbol{\theta}}(t)$ and the known regressor matrices $\boldsymbol{\phi}(t)$ in the steady-state. 
(2) Based on the existence of parameters that reject the disturbance (Theorem~\ref{The: ctrl design}) and the definition of $\boldsymbol{d}(t)$ in~\eqref{eq: def d(t)}, a linear parameter estimation equation $\boldsymbol{d}(t)=\boldsymbol{\phi}^T(t) \hat{\boldsymbol{\theta}}(t)$ is obtained.
We start with the \emph{Hybrid Swapping Lemma}.

\begin{lemma}[Hybrid Swapping Lemma] \label{lem: hybrid swapping}
Consider the hybrid linear system~\eqref{eq: controller subsystem2}.
The trajectory on the interval $[t_k^+,t_{k+1}^-]$ is described by
\begin{align}
    \hat{\boldsymbol{y}}(t)
    =&
    \boldsymbol{\varepsilon}(t)
    +
    \boldsymbol{\phi}_c^T(t)\hat{\boldsymbol{\theta}}_c(t)
    +
    \begin{bmatrix}
        \boldsymbol{\phi}_j^T(t)& \boldsymbol{\phi}_d^T(t)
    \end{bmatrix}
    \begin{bmatrix}
        \hat{\boldsymbol{\theta}}_j(t_k^-) \\
        \hat{\boldsymbol{\theta}}_d(t_k^-)
    \end{bmatrix}
    \label{eq: estimated output}
\end{align}
where $\boldsymbol{\varepsilon}(t)$ contains the effect of the initial condition and the swapping terms, which become zero when the compensator parameters stop changing and $t \rightarrow \infty$. 
The exact definition of $\boldsymbol{\varepsilon}(t)$ is provided in the appendix for simplicity. With $\boldsymbol{X}(t_0^+)=\boldsymbol{Y}(t_0^+)=\boldsymbol{0}$, $\boldsymbol{\phi}_c^T(t)$ is the output of 
\begin{align}
\begin{split} \label{eq: continuous regressor matrix generator}
    \dot{\boldsymbol{X}}(t) =& \bar{\boldsymbol{A}}_c\boldsymbol{X}(t) + \boldsymbol{\bar{B}}_c \boldsymbol{\Xi}_c^T(t)
    \\
    \boldsymbol{X}(t_k^+) =& \bar{\boldsymbol{A}}_d\boldsymbol{X}(t_k^-)
    \\
    \boldsymbol{\phi}^T_c(t) =& \bar{\boldsymbol{C}}\boldsymbol{X}(t)
\end{split}
\end{align}
and $[\boldsymbol{\phi}_j^T(t) \quad \boldsymbol{\phi}_d^T(t)]$ is the output of 
\begin{align}
\begin{split} \label{eq: discrete regressor matrix generator}
    \dot{\boldsymbol{Y}}(t) =& \bar{\boldsymbol{A}}_c\boldsymbol{Y}(t)
    \\
    \boldsymbol{Y}(t_k^+) =& \bar{\boldsymbol{A}}_d\boldsymbol{Y}(t_k^-) + \boldsymbol{\bar{B}}_d \begin{bmatrix}
        \boldsymbol{\Xi}_j(t_k^-) & \boldsymbol{0} \\
        \boldsymbol{0} & \boldsymbol{\Xi}_d(t_k^-)
    \end{bmatrix}^T
    \\
    \begin{bmatrix}
        \boldsymbol{\phi}_j^T(t) & \boldsymbol{\phi}_d^T(t)
    \end{bmatrix}&
    =\bar{\boldsymbol{C}}\boldsymbol{Y}(t)
\end{split}
\end{align}
\end{lemma}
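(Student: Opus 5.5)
The plan is to prove the lemma by \emph{defining} $\boldsymbol{\varepsilon}(t)$ as the discrepancy and then showing that this discrepancy obeys a hybrid linear system forced only by parameter-rate terms. Concretely, set $\boldsymbol{\zeta}(t) = \boldsymbol{z}(t) - \boldsymbol{X}(t)\hat{\boldsymbol{\theta}}_c(t) - \boldsymbol{Y}(t)\hat{\boldsymbol{\theta}}_{jd}(t_k^-)$ on $[t_k^+,t_{k+1}^-]$, where $\hat{\boldsymbol{\theta}}_{jd} = [\hat{\boldsymbol{\theta}}_j^T\ \hat{\boldsymbol{\theta}}_d^T]^T$. Then $\boldsymbol{\varepsilon}(t) = \bar{\boldsymbol{C}}\boldsymbol{\zeta}(t)$. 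With this choice, \eqref{eq: estimated output} holds identically, since $\hat{\boldsymbol{y}} = \bar{\boldsymbol{C}}\boldsymbol{z}$, $\boldsymbol{\phi}_c^T = \bar{\boldsymbol{C}}\boldsymbol{X}$ and $[\boldsymbol{\phi}_j^T\ \boldsymbol{\phi}_d^T] = \bar{\boldsymbol{C}}\boldsymbol{Y}$. The content of the lemma is therefore the claimed property of $\boldsymbol{\varepsilon}$, which I will establish by deriving the hybrid dynamics of $\boldsymbol{\zeta}$.

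\textbf{Continuous phase.} Differentiate $\boldsymbol{\zeta}$ using \eqref{eq: controller subsystem2}, \eqref{eq: continuous regressor matrix generator} and \eqref{eq: discrete regressor matrix generator}. The term $\hat{\boldsymbol{\theta}}_{jd}(t_k^-)$ is constant on the interval, so the forcing terms $\bar{\boldsymbol{B}}_c\boldsymbol{\Xi}_c^T\hat{\boldsymbol{\theta}}_c$ cancel. This leaves
\[
\dot{\boldsymbol{\zeta}} = \bar{\boldsymbol{A}}_c\boldsymbol{\zeta} - \boldsymbol{X}(t)\dot{\hat{\boldsymbol{\theta}}}_c(t),
\]
which is exactly the classical continuous swapping identity.

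\textbf{Jump.} At $t_{k+1}$, combine $\boldsymbol{z}(t_{k+1}^+) = \bar{\boldsymbol{A}}_d\boldsymbol{z}(t_{k+1}^-) + \bar{\boldsymbol{B}}_d\,\mathrm{blkdiag}(\boldsymbol{\Xi}_j,\boldsymbol{\Xi}_d)^T\hat{\boldsymbol{\theta}}_{jd}(t_{k+1}^-)$ with the jump laws $\boldsymbol{X}^+ = \bar{\boldsymbol{A}}_d\boldsymbol{X}^-$ and $\boldsymbol{Y}^+ = \bar{\boldsymbol{A}}_d\boldsymbol{Y}^- + \bar{\boldsymbol{B}}_d(\cdot)^T$. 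Since $\hat{\boldsymbol{\theta}}_c$ is continuous across events, the new $\boldsymbol{\zeta}(t_{k+1}^+)$ uses $\hat{\boldsymbol{\theta}}_{jd}(t_{k+1}^-)$, and the $\bar{\boldsymbol{B}}_d$ forcing terms cancel. The result is
\[
\boldsymbol{\zeta}(t_{k+1}^+) = \bar{\boldsymbol{A}}_d\boldsymbol{\zeta}(t_{k+1}^-) - \bar{\boldsymbol{A}}_d\boldsymbol{Y}(t_{k+1}^-)\big(\hat{\boldsymbol{\theta}}_{jd}(t_{k+1}^-) - \hat{\boldsymbol{\theta}}_{jd}(t_k^-)\big).
\]
The second term is the discrete swapping term. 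Solving this hybrid linear system by variation of constants with $\boldsymbol{\zeta}(t_0^+) = \boldsymbol{z}(t_0^+)$ (because $\boldsymbol{X}(t_0^+) = \boldsymbol{Y}(t_0^+) = \boldsymbol{0}$) gives the explicit form of $\boldsymbol{\varepsilon}$. It consists of the transition matrix of $(\bar{\boldsymbol{A}}_c,\bar{\boldsymbol{A}}_d)$ applied to the initial state, plus convolution sums and integrals of $\boldsymbol{X}\dot{\hat{\boldsymbol{\theta}}}_c$ and of $\bar{\boldsymbol{A}}_d\boldsymbol{Y}\Delta\hat{\boldsymbol{\theta}}_{jd}$.

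\textbf{Expected obstacle.} The hard step is showing that $\boldsymbol{\varepsilon}(t)\to\boldsymbol{0}$ once the parameters freeze, because the transition matrix of $(\bar{\boldsymbol{A}}_c,\bar{\boldsymbol{A}}_d)$ itself is not stable: the $\boldsymbol{\eta}_j$ and $\boldsymbol{I}$ blocks have zero continuous dynamics. My approach is to exploit the block-triangular structure of $\bar{\boldsymbol{A}}_c$ and $\bar{\boldsymbol{A}}_d$:
\begin{enumerate}[label=(\roman*)]
\item $\boldsymbol{\eta}_d$ evolves under $\boldsymbol{F}_d$, which is stable.
\item $\boldsymbol{\eta}_j$ evolves under $\boldsymbol{F}_j$, which is stable.
\item $\boldsymbol{\eta}_c$ evolves under $\exp(\boldsymbol{F}_c T)$, which is stable, driven by $\boldsymbol{H}_j\boldsymbol{\eta}_j$, which decays.
\item $\hat{\boldsymbol{x}}$ evolves under $\boldsymbol{A}_d\exp(\boldsymbol{A}_c T)$, which is stable by the plant assumption, driven by decaying compensator states.
\end{enumerate}
From these I conclude that the monodromy matrix $\bar{\boldsymbol{A}}_d\exp(\bar{\boldsymbol{A}}_c T)$ is Schur and the hybrid transition matrix decays exponentially. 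Hence the initial-condition term vanishes. Once $\dot{\hat{\boldsymbol{\theta}}}_c = \boldsymbol{0}$ and $\Delta\hat{\boldsymbol{\theta}}_{jd} = \boldsymbol{0}$ after some time, the swapping terms also stop being injected and decay. This uses the boundedness of $\boldsymbol{X}$ and $\boldsymbol{Y}$, which follows from the same stability and the boundedness of $\boldsymbol{d}$ given by Theorem~\ref{The: ctrl design}.
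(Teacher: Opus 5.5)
Your argument is correct, but it reaches the lemma by a different route than the paper. The paper unrolls the variation-of-constants formula for $\hat{\boldsymbol{y}}$ over all past intervals by induction and packages it into nested operators ($\mathcal{H}_0$, $\mathcal{G}_c$, $\mathcal{G}_d$, $\mathcal{F}_0$). It then applies the classical continuous-time and discrete-time swapping lemmas separately to move $\hat{\boldsymbol{\theta}}_c$ and $\hat{\boldsymbol{\theta}}_D$ outside the operators, collecting residuals $\boldsymbol{\varepsilon}_0,\dots,\boldsymbol{\varepsilon}_3$. It merely asserts that these vanish once the parameters stop changing.

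You instead define the residual state $\boldsymbol{\zeta}$ and derive its hybrid dynamics directly. Both computations check out: the continuous forcing is $-\boldsymbol{X}\dot{\hat{\boldsymbol{\theta}}}_c$, and the jump forcing is $-\bar{\boldsymbol{A}}_d\boldsymbol{Y}(t_{k+1}^-)(\hat{\boldsymbol{\theta}}_{jd}(t_{k+1}^-)-\hat{\boldsymbol{\theta}}_{jd}(t_k^-))$. This route buys several things:
\begin{itemize}
\item It is self-contained and needs no external swapping lemmas.
\item It collapses the paper's separate within-interval and across-event residuals for $\hat{\boldsymbol{\theta}}_c$ into one term, because $\hat{\boldsymbol{\theta}}_c$ is continuous across events and $\boldsymbol{X}$ already carries the jump map.
\item It produces the frozen value $\hat{\boldsymbol{\theta}}_{jd}(t_k^-)$ exactly as in the statement. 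The paper's combined expression is written with $\hat{\boldsymbol{\theta}}_D(t)$ and relies on $\boldsymbol{\varepsilon}_1$ to absorb the difference.
\item The identity carries over verbatim to the time-varying matrices the paper's proof allows; only your decay argument uses the LTI, constant-$T$ structure.
\end{itemize}
Above all, you actually justify the vanishing claim. The monodromy $\bar{\boldsymbol{A}}_d\exp(\bar{\boldsymbol{A}}_cT)$ is block upper triangular with diagonal blocks $\boldsymbol{A}_d\exp(\boldsymbol{A}_cT)$, $\exp(\boldsymbol{F}_cT)$, $\boldsymbol{F}_j$, $\boldsymbol{F}_d$, all of which are Schur. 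You should state explicitly that ``stable'' for the jump matrices $\boldsymbol{F}_j$ and $\boldsymbol{F}_d$ must mean Schur. The paper's $s$-polynomial phrasing is ambiguous, but Theorem~\ref{The: ctrl design} needs this reading too.

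The paper's operator form has its own advantage: it shows each swapping term in the familiar Ioannou format, so its proportionality to $\dot{\hat{\boldsymbol{\theta}}}_c$ or $\Delta\hat{\boldsymbol{\theta}}$ is visible term by term.

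One small fix: cite boundedness of $\boldsymbol{d}$, and hence of $\boldsymbol{X}$ and $\boldsymbol{Y}$, from the autonomous generator \eqref{eq: d generator}, which does not depend on the compensator parameters. Theorem~\ref{The: ctrl design} is the wrong source, since its boundedness claim is for constant parameters.
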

\textit{Proof: in the appendix.} 

Lemma~\ref{lem: hybrid swapping} shows that, when $t\rightarrow\infty$ and the compensator parameters stop changing (i.e., $\boldsymbol{\varepsilon}(t) = \boldsymbol{0}$), the observer's output $\hat{\boldsymbol{y}}(t)$ can be expressed as
\begin{equation}
    \hat{\boldsymbol{y}}(t)=-\boldsymbol{\phi}^T(t) \hat{\boldsymbol{\theta}}(t)
    \label{eq: y_hat}
\end{equation}
where $\boldsymbol{\phi}^T(t) = -[\boldsymbol{\phi}_c^T(t)\quad  \boldsymbol{\phi}_j^T(t) \quad \boldsymbol{\phi}_d^T(t)]$ is the regressor matrix that can be generated from~\eqref{eq: continuous regressor matrix generator} and \eqref{eq: discrete regressor matrix generator} whose inputs and matrices are all known and
$\hat{\boldsymbol{\theta}}^T(t) = [\hat{\boldsymbol{\theta}}_c^T(t) \quad \hat{\boldsymbol{\theta}}_j^T(t) \quad \hat{\boldsymbol{\theta}}_d^T(t)]$ is the combined vector of vectorized compensator parameters.

Next, recall the definition of $\boldsymbol{d}(t)$ in~\eqref{eq: def d(t)}, Lemma~\ref{lem: hybrid swapping}, and Theorem~\ref{The: ctrl design} that ensures the existence of constant compensator parameters, called $\boldsymbol{\theta}^*$, which reject the disturbance, under some conditions. 
We have $\boldsymbol{y}(t) = \boldsymbol{d}(t) - \boldsymbol{\phi}^T(t) \boldsymbol{\theta}^* = \boldsymbol{0}$ in the steady-state. This equation can be rearranged as
\begin{equation}
    \boldsymbol{d}(t)=\boldsymbol{\phi}^T(t) \boldsymbol{\theta}^*
    \label{eq: para estimation eq}
\end{equation}
and shows that the steady-state trajectory of $\boldsymbol{d}(t)$ has a known linear relationship to the unknown parameter vector which rejects the disturbance whereas the regressor matrix is known.
Thus, \eqref{eq: para estimation eq} is called the \emph{Linear parameter estimation equation} that is used for the design of the adaptive law introduced next.

\subsection{Robust adaptive law}
\label{sec: robust adaptive law}
As the linear parameter estimation equation~\eqref{eq: para estimation eq} is accurate under some conditions, i.e., steady-state and constant $\hat{\boldsymbol{\theta}}(t)$, the modified recursive least-squares adaptive law~\cite{ioannou2012robust,Ioannou2006tutorial,ioannou1993ltvSystems,salgado1988efra,han1997efraComment} with normalized estimation error is adopted to provide robustness against the estimation error, measurement noise, and model uncertainty.
Based on~\eqref{eq: para estimation eq}, the normalized estimation error is defined as:
\begin{align}
    \boldsymbol{\epsilon}(t) = \frac{\boldsymbol{d}(t)-\boldsymbol{\phi}^T(t)\hat{\boldsymbol{\theta}}(t)}{1 + \text{trace}\left(\boldsymbol{\phi}^T(t)\boldsymbol{K}(t)\boldsymbol{\phi}(t)\right)} \label{eq: estimation error}
\end{align}
where $\boldsymbol{K}(t)$ is the covariance matrix updated by
\begin{align}
\begin{split} \label{eq: cov update}
    \dot{\boldsymbol{K}}(t)
    =&
    -\alpha\frac{\boldsymbol{K}(t)\boldsymbol{\phi}(t)\boldsymbol{\phi}^T(t)\boldsymbol{K}(t)}{1 + \text{trace}\left(\boldsymbol{\phi}^T(t)\boldsymbol{K}(t)\boldsymbol{\phi}(t)\right)}
    \\
    &+ \gamma \boldsymbol{K}(t) + \beta \boldsymbol{I} - \delta \boldsymbol{K}^2(t)
    \\
    \boldsymbol{K}(t_k^+) =& \boldsymbol{K}(t_k^-)
\end{split}
\end{align}
where $\alpha$ is the learning rate, 
$\gamma$ is the forgetting factor,
$\beta$ and $\delta$ are used to ensure the eigenvalues of the covariance matrix remain bounded from below and above.
The initial condition $\boldsymbol{K}(0) = \boldsymbol{K}^T(0) > \boldsymbol{0}$ is chosen such that $\underline{\sigma}\boldsymbol{I} \le \boldsymbol{K}(0) \le \overline{\sigma}\boldsymbol{I}$ where $\underline{\sigma} = \frac{1}{2\delta}( (\gamma-\alpha) + \sqrt{ (\gamma-\alpha)^2+4\beta\delta})$ and $\overline{\sigma} = \frac{1}{2\delta}( \gamma + \sqrt{\gamma^2+4\beta\delta})$.
The parameter vector is updated by a projection method to protect against parameter drift \cite{ioannou2012robust}
\begin{align}
\begin{split} \label{eq: parameter update}
    \dot{\hat{\boldsymbol{\theta}}}(t)
    =&
    \text{proj}\left( \alpha\boldsymbol{K}(t)\boldsymbol{\phi}(t)\boldsymbol{\epsilon}(t) \right)
    \\
    \hat{\boldsymbol{\theta}}(t_k^+)
    =&
    \hat{\boldsymbol{\theta}}(t_k^-)
\end{split}    
\end{align}
where the initial condition is set to $\boldsymbol{0}$ as we have no prior knowledge of the disturbance and the projection is implemented as $\dot{\hat{\boldsymbol{\theta}}}(t) = \boldsymbol{v}(t)$ if $|\hat{\boldsymbol{\theta}}(t)| < \theta_m$ or $\hat{\boldsymbol{\theta}}(t)\cdot \boldsymbol{v}(t) \leq 0$ and $\dot{\hat{\boldsymbol{\theta}}}(t) = (\boldsymbol{I}-\frac{\boldsymbol{K}(t)\hat{\boldsymbol{\theta}}(t)\hat{\boldsymbol{\theta}}^T(t)}{\hat{\boldsymbol{\theta}}^T(t)\boldsymbol{K}(t)\hat{\boldsymbol{\theta}}(t)}) \boldsymbol{v}(t)$ otherwise, where $\boldsymbol{v}(t) = \alpha\boldsymbol{K}(t)\boldsymbol{\phi}(t)\boldsymbol{\epsilon}(t)$ and $\theta_m$ is the radius of the allowed parameter set.

\subsection{Performance of Hybrid Adaptive Controller}
Given the design of the observer-compensator and the robust adaptive law described in Sections~\ref{sec: observer-compensator subsystem} and \ref{sec: robust adaptive law}, we have the following result.

\begin{theorem}[Performance of Hybrid Adaptive Controller]
\label{The: adaptive ctrl performance}
Consider the system~\eqref{eq: prob} with the controller described by~\eqref{eq: ideal obsv}--\eqref{eq: disc comp}, and the parameters generated according to the proposed adaptive law~\eqref{eq: estimation error} - \eqref{eq: parameter update}.
Assume the conditions of Theorems~\ref{The: hybird regulator equation} and~\ref{The: ctrl design} are satisfied.
Then all signals in the closed-loop system are uniformly bounded and the measured output converges to zero asymptotically, i.e., $\lim_{t\to\infty}\boldsymbol{y}(t) = \boldsymbol{0}$.
\end{theorem}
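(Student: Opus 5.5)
Boundedness comes first and uses the structure exhibited in Theorem~\ref{The: ctrl design}. The signal $\boldsymbol{d}(t)=\boldsymbol{y}(t)-\hat{\boldsymbol{y}}(t)$ is generated by the autonomous subsystem \eqref{eq: d generator}. That subsystem does not depend on the controller, so $\boldsymbol{d}$ is uniformly bounded whatever $\hat{\boldsymbol{\theta}}(t)$ does. The projection in \eqref{eq: parameter update} keeps $|\hat{\boldsymbol{\theta}}(t)|\le\theta_m$. The modified covariance update \eqref{eq: cov update} keeps $\underline{\sigma}\boldsymbol{I}\le\boldsymbol{K}(t)\le\overline{\sigma}\boldsymbol{I}$, which I would check by a comparison argument on the extreme eigenvalues of the Riccati-like ODE, using that $\boldsymbol{K}$ is continuous across jumps. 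The observer-compensator subsystem \eqref{eq: controller subsystem1} is a cascade: the $\boldsymbol{\eta}_j$ and $\boldsymbol{\eta}_d$ blocks are stable discrete systems since $\boldsymbol{F}_j,\boldsymbol{F}_d$ are stable, $\boldsymbol{\eta}_c$ is driven stably by $\boldsymbol{F}_c$ and a bounded jump input, and $\hat{\boldsymbol{x}}$ is driven by the stable plant. With bounded input gains $\hat{\boldsymbol{G}}(t)$ and bounded input $\boldsymbol{d}$, $\boldsymbol{z}$ is therefore bounded. The regressor generators \eqref{eq: continuous regressor matrix generator}--\eqref{eq: discrete regressor matrix generator} have the same stable matrices, so $\boldsymbol{\phi}$ is bounded, and hence $\boldsymbol{x}=\boldsymbol{e}+\hat{\boldsymbol{x}}$ and $\boldsymbol{y}$ are bounded.

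Next I would obtain the properties of the adaptive law. By Lemma~\ref{lem: hybrid swapping} and Theorem~\ref{The: ctrl design}, $\boldsymbol{d}(t)=\boldsymbol{\phi}^T(t)\boldsymbol{\theta}^*+\boldsymbol{\nu}(t)$, where $\boldsymbol{\nu}$ collects the exponentially decaying transients: the initial conditions of the regressor filters, $\tilde{\boldsymbol{e}}$ off the steady state, and the $\boldsymbol{\theta}^*$ part of the swapping identity. These terms vanish exponentially because $\boldsymbol{\theta}^*$ is constant and the filters are stable. Take $V=\tilde{\boldsymbol{\theta}}^T\boldsymbol{K}^{-1}\tilde{\boldsymbol{\theta}}$ with $\tilde{\boldsymbol{\theta}}=\hat{\boldsymbol{\theta}}-\boldsymbol{\theta}^*$, using $\theta_m\ge|\boldsymbol{\theta}^*|$ so that projection only decreases $V$. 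The standard argument gives $\boldsymbol{\epsilon}m\in\mathcal{L}_2$ and $\dot{\hat{\boldsymbol{\theta}}}\in\mathcal{L}_2$, and since $\boldsymbol{\nu}$ is exponentially decaying these hold with no residual. The $\gamma\boldsymbol{K}$, $\beta\boldsymbol{I}$, $-\delta\boldsymbol{K}^2$ terms add only sign-indefinite contributions bounded by $c|\tilde{\boldsymbol{\theta}}|^2$. I expect this to be a delicate spot: with forgetting one may only get $\epsilon m$ in the mean-square sense $\mathcal{S}(\cdot)$ with a small residual. I would first try to show the extra terms are dominated, using the boundedness of $\tilde{\boldsymbol{\theta}}$ and the fact that the $\boldsymbol{K}^{-1}$ derivative contributes $-\gamma\boldsymbol{K}^{-1}-\beta\boldsymbol{K}^{-2}+\delta\boldsymbol{I}$.

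The remaining step, which I expect to be the main obstacle, is to convert these estimation properties into $\boldsymbol{y}\to\boldsymbol{0}$. The hybrid swapping lemma gives
\[
\hat{\boldsymbol{y}}(t)=\boldsymbol{\varepsilon}(t)-\boldsymbol{\phi}^T(t)\hat{\boldsymbol{\theta}}(t)+\boldsymbol{\phi}_j^T(t)\bigl(\hat{\boldsymbol{\theta}}_j(t_k^-)-\hat{\boldsymbol{\theta}}_j(t)\bigr)+\boldsymbol{\phi}_d^T(t)\bigl(\hat{\boldsymbol{\theta}}_d(t_k^-)-\hat{\boldsymbol{\theta}}_d(t)\bigr).
\]
Hence
\[
\boldsymbol{y}=\boldsymbol{d}+\hat{\boldsymbol{y}}=\boldsymbol{\epsilon}\bigl(1+\mathrm{tr}(\boldsymbol{\phi}^T\boldsymbol{K}\boldsymbol{\phi})\bigr)+\boldsymbol{\varepsilon}(t)+(\text{intra-phase parameter increments}).
\]
Here $\boldsymbol{\varepsilon}$ consists of stable hybrid filterings of $\dot{\hat{\boldsymbol{\theta}}}$ (the swapping terms) plus exponentially decaying initial-condition terms. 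A stable filter applied to an $\mathcal{L}_2$ input with bounded regressors gives an output in $\mathcal{L}_2$ with bounded derivative. The increments are bounded by $\int_{t_k}^{t}|\dot{\hat{\boldsymbol{\theta}}}|$ over a window of length $T$, and so are in $\mathcal{L}_2$ as well. Together with $\boldsymbol{\epsilon}m\in\mathcal{L}_2$ and bounded $m$, this puts $\boldsymbol{y}\in\mathcal{L}_2$. Between events, $\dot{\boldsymbol{y}}$ is bounded because all states and $\dot{\hat{\boldsymbol{\theta}}}$ are bounded, so $\boldsymbol{y}$ is piecewise uniformly continuous. I would then apply a hybrid version of Barbalat's lemma on each interval $[t_k^+,t_{k+1}^-]$ of fixed length $T$, where the jumps do not obstruct the argument, to conclude $\lim_{t\to\infty}\boldsymbol{y}(t)=\boldsymbol{0}$.
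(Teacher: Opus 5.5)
\textbf{The gap.} The step you flag as delicate decides the result, and the fix you sketch for it would fail. Suppose you bound the covariance-modification contribution $\tilde{\boldsymbol{\theta}}^T(-\gamma\boldsymbol{K}^{-1}-\beta\boldsymbol{K}^{-2}+\delta\boldsymbol{I})\tilde{\boldsymbol{\theta}}$ by $c|\tilde{\boldsymbol{\theta}}|^2$ and then use only boundedness of $\tilde{\boldsymbol{\theta}}$. You are left with $\dot V\le-\alpha n_s^2|\boldsymbol{\epsilon}|^2+c'$, a constant residual. That gives only a mean-square bound on $n_s\boldsymbol{\epsilon}$, so $\dot{\hat{\boldsymbol{\theta}}}$ is no longer in $\mathcal{L}_2$, and your later conclusions $\boldsymbol{y}\in\mathcal{L}_2$ and $\boldsymbol{y}\to\boldsymbol{0}$ are lost.

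The missing observation, which is what the paper uses, is that this term is not sign-indefinite:
\[
-\gamma\boldsymbol{K}^{-1}-\beta\boldsymbol{K}^{-2}+\delta\boldsymbol{I}=-\boldsymbol{K}^{-1}\bigl(\beta\boldsymbol{I}+\gamma\boldsymbol{K}-\delta\boldsymbol{K}^2\bigr)\boldsymbol{K}^{-1}\preceq\boldsymbol{0}.
\]
This holds because your first step already puts every eigenvalue $\mu$ of $\boldsymbol{K}(t)$ in $[\underline{\sigma},\overline{\sigma}]$, and $\overline{\sigma}$ is precisely the positive root of $\beta+\gamma\mu-\delta\mu^2$. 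Hence $\frac{d}{dt}\boldsymbol{K}^{-1}\preceq\alpha\boldsymbol{\phi}\boldsymbol{\phi}^T/n_s^2$. Combining this with the projection inequality and $n_s^2\boldsymbol{\epsilon}=-\boldsymbol{\phi}^T\tilde{\boldsymbol{\theta}}+\boldsymbol{\nu}$ gives $\dot V\le-\alpha n_s^2|\boldsymbol{\epsilon}|^2+\alpha|\boldsymbol{\nu}|^2/n_s^2$, and the last term is integrable. So $n_s\boldsymbol{\epsilon}\in\mathcal{L}_2$ and $\dot{\hat{\boldsymbol{\theta}}}\in\mathcal{L}_2$ with no residual. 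No jump terms enter, since $V$ is continuous at each $t_k$: neither $\hat{\boldsymbol{\theta}}$ nor $\boldsymbol{K}$ jumps.

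\textbf{Comparison with the paper.} With that filled in, your argument is complete, and in several respects it is tighter than the paper's.
\begin{itemize}
\item The paper's proof never argues closed-loop boundedness explicitly. It bounds $\boldsymbol{K}$ and leans implicitly on Theorem~\ref{The: ctrl design}, which is stated for fixed parameters. Your cascade argument covers time-varying estimates: $\boldsymbol{d}$ is controller-independent, and $\hat{\boldsymbol{G}}(t)$ enters only as a bounded input gain.
\item The paper treats $\boldsymbol{d}=\boldsymbol{\phi}^T\boldsymbol{\theta}^*$ as exact, and you correctly keep the exponentially decaying mismatch $\boldsymbol{\nu}$.
\item The paper passes directly from $\dot V\le-\alpha n_s^2|\boldsymbol{\epsilon}|^2$ to $n_s^2\boldsymbol{\epsilon}\to\boldsymbol{0}$ and $\dot{\hat{\boldsymbol{\theta}}}\to\boldsymbol{0}$. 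That tacitly needs a uniform-continuity argument, which is awkward here because $\boldsymbol{\phi}_j$, $\boldsymbol{\phi}_d$, and hence $\boldsymbol{\epsilon}$, jump at every $t_k$.
\item Your route works only with $\mathcal{L}_2$ properties. It handles the intra-phase increments and the discrete swapping terms through $|\Delta\hat{\boldsymbol{\theta}}|^2\le T\int_{t_k}^{t_{k+1}}|\dot{\hat{\boldsymbol{\theta}}}|^2\,dt$. It then applies Barbalat to $\boldsymbol{y}$ itself, whose derivative is uniformly bounded on inter-event intervals of fixed length $T$, which avoids that continuity issue entirely.
\end{itemize}
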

\textit{Proof: in the appendix.}

The analysis in this section shows that existing robust adaptive laws can be used with the controller structure proposed in Section~\ref{sec: ideal output feedback} to solve the hybrid disturbance rejection problem without knowledge of the disturbance parameters.
Although a comprehensive analysis proving robustness to noise, model uncertainty, and parameter variations in the hybrid case remains the subject of future work, we demonstrate that the proposed controller exhibits robustness against model uncertainty and parameter variations in an important use case, bipedal locomotion, as discussed next.

\section{Case study: Bipedal walking on a platform with multi-directional motion}
\label{sec:case study}

In this section, the proposed adaptive controller is applied to control a bipedal robot walking on a platform with unknown, multi-directional motion.
To achieve reactive behavior given the complex robot dynamics, the proposed walking controller is based on the standard three-level hierarchical control framework: high-level motion planner, mid-level motion generator, and low-level motion tracker~\cite{gao2024ALIPDRS}.
The key novelty of the proposed controller lies in the design of the high-level motion planner that integrates the proposed hybrid adaptive controller to achieve improved CoM tracking.

We first introduce the reduced-order model representing the system of a walking bipedal robot on moving surfaces, which is used to design the high-level motion planner and enable efficient and effective real-time motion planning. The model is shown in Fig.~\ref{fig: control pipeline}(a).
Then, our proposed hierarchical control framework, as shown in Fig.~\ref{fig: control pipeline}(b), is introduced while emphasizing the high-level motion planner where the proposed adaptive controller is used.

\subsection{Reduced-order modeling}
\label{sec: reduced-order modeling}
To ensure reactive real-time control, a computationally efficient and fairly faithful reduced-order model is used to approximate the complete robot dynamics.
Consider a linear inverted pendulum (LIP) model on a moving surface~\cite{3D_LIP, stewart2025adaptive} with state
$\boldsymbol{p}(t) = [x_{sc}(t), \dot{x}_{sc}(t)]^T$, where $x_{sc}(t) \in \mathbb{R}$ is the CoM position relative to the supporting point at time $t$, as shown in Fig.\ref{fig: control pipeline}(a):
\begin{align}
    \dot{\boldsymbol{p}}(t)
    =&
    \begin{bmatrix}
        0 & 1 \\
        \frac{g+\ddot{z}_{ws}(t)}{z_{sc}} & 0
    \end{bmatrix}
    \boldsymbol{p}(t)
    +
    \begin{bmatrix}
        0 \\
        - \ddot{x}_{ws}(t)
    \end{bmatrix}
    +
    \begin{bmatrix}
        0 \\
        -\frac{\tau(t)}{mz_{sc}}
    \end{bmatrix}
    \label{eq: lip model conti}
    \\
    \boldsymbol{p}(t_k^+)
    =&
    \boldsymbol{p}(t_k^-)
    +
    \begin{bmatrix}
        - s(t_k^-) \\
        0
    \end{bmatrix}
    + 
    \begin{bmatrix}
        \delta_s(t_k^-) \\
        \delta_r(t_k^-)
    \end{bmatrix}
    \label{eq: lip model discr}
\end{align}
\normalsize
where
$g$ is the gravitational constant,
$m$ is the mass of the robot,
$z_{sc}$ is the constant CoM height relative to the supporting point,
$\ddot{x}_{ws}(t), \ddot{z}_{ws}(t)$ are the unknown horizontal and vertical surface accelerations in the inertial world frame,
$\tau(t)$ is the ankle torque,
$s(t_k^-)$ is the step location command,
$\delta_{s}(t_k^-)$ is the unknown step location error caused by imperfect swing foot control, and
$\delta_{r}(t_k^-)$ is the unknown velocity jump caused by the touchdown impact.
We assume that $\ddot{x}_{ws}(t), \ddot{z}_{ws}(t), \delta_{s}(t_k^-), \delta_{r}(t_k^-)$ are the sums of sinusoidal components and a constant bias, satisfying the assumptions in equation \eqref{eq: prob}.
We also assume that the continuous phase duration is constant, i.e., $t_{k+1} - t_{k} = T$ for all step indices $k \in \mathbb{N}$ where $t_{k}$ denotes the touchdown timing of the stance foot for the $k^{\text{th}}$ step. 

The high-level motion planner regulates two variables that characterize the $k^{\text{th}}$ step's locomotion behavior.
The averaged walking speed for the $k^{\text{th}}$ step is defined as
\begin{align}
    v_{\text{walk}}(k) = 
    \frac{x_{sc}(t_k+T) - x_{sc}(t_k)}{T}
\end{align}
and the walking offset for the $k^{\text{th}}$ step is defined as
\begin{align}
    o_{\text{walk}}(k) =
    \frac{x_{sc}(t_k+T) + x_{sc}(t_k)}{2}
\end{align}
The walking offset describes the horizontal distance between the supporting foot and the mid-point of the CoM position at the beginning and end of the step, as illustrated in Fig.~\ref{fig: control pipeline}(a). 
By selecting a different desired walking offset, the robot can walk using a period-1 or period-2 orbit \cite{xiong_3-d_2022}, unifying the control for both forward and lateral walking.

\subsection{Proposed hierarchical control framework}
\label{sec: hier controller}

\begin{figure*}
    \centering
    \includegraphics[width=0.95\linewidth]{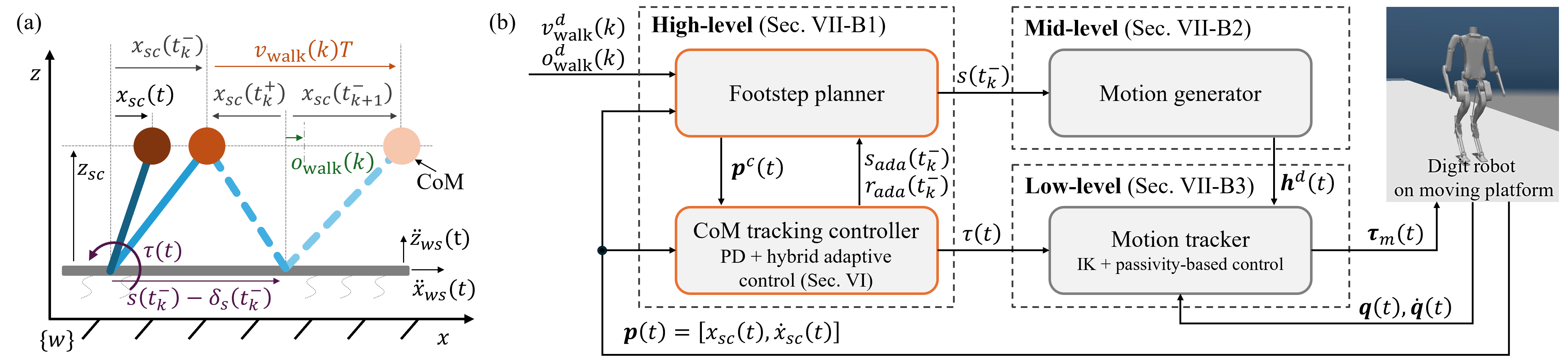}
    \caption{(a) The inverted pendulum model on a moving platform (see Section~\ref{sec: reduced-order modeling}).
    (b) The proposed control framework is a hierarchical control structure (see Section~\ref{sec: hier controller}). 
    }
    \vspace{-0.1in}
    \label{fig: control pipeline}
\end{figure*}

As shown in Fig.~\ref{fig: control pipeline}(b), the proposed hierarchical control framework includes three layers. 
The high-level motion planner takes the user commands as inputs and determines the desired step location and ankle torque.
The mid-level motion generator generates the desired task-space trajectory that respects the reduced-order model assumptions and the step location imposed by the high-level motion planner.
The low-level motion tracker ensures the desired task-space motion plan and the desired ankle torque command are executed.

\subsubsection{High-level motion planner}
The high-level motion planner comprises two sub-modules, i.e., the footstep planner and the CoM tracking controller. 
The footstep planner generates the desired step location and the desired CoM trajectory while the CoM tracking controller determines the ankle torque to track the desired CoM trajectory.

\textbf{Footstep planner:} 
For any $t \in (t_k^+, t_{k+1}^-]$ (i.e., during the $k^{\text{th}}$ step), the control objective of the footstep planner is to minimize the cost function $\mathcal{L}(k+1)$ by generating the step location command $s(t_{k+1}^-)$ and determining the commanded CoM trajectory $\boldsymbol{p}^{c}(t) = [ x_{sc}^{c}(t),  \dot{x}_{sc}^{c}(t)]^T$ for $t \in [t_{k+1}^+ , t_{k+2}^-]$.

Given the user-defined desired averaged walking speed $v_{\text{walk}}^{d}(k)$ and offset $o_{\text{walk}}^{d}(k)$, the cost function $\mathcal{L}(k)$ is defined as
$\mathcal{L}(k) = 
    \boldsymbol{e}_{walk}^T (k)
    \boldsymbol{Q}_{walk}
    \boldsymbol{e}_{walk} (k)$
where $\boldsymbol{e}_{walk}(k) = [v_{\text{walk}}(k), o_{\text{walk}}(k)]^T - [v_{\text{walk}}^{d}(k), o_{\text{walk}}^{d}(k)]^T$ is the output error in walking behavior and $\boldsymbol{Q}_{walk} \in \mathbb{R}^{2\times2}$ is a positive semi-definite matrix determining the weights.

To achieve the objective, the commanded step location is computed by solving the following optimization problem
\begin{align}
    \underset{s(t_{k+1}^-)}{\text{minimize}}
    \quad
    \mathcal{L}(k+1)
    \label{eq:min_L}
\end{align}
The cost function of the next step $\mathcal{L}(k+1)$ is predicted using the LIP model in equation \eqref{eq: lip model conti} with no surface acceleration or ankle torque since we assume no knowledge about the ground motion and the future ankle torque profile.
More details can be found in the appendix. 

To ensure the hybrid error system has a full rank discrete phase input matrix so the conditions of Theorem~\ref{The: hybird regulator equation} are satisfied and to minimize the post-impact ankle torque request, the commanded CoM trajectory $\boldsymbol{p}^c(t)$ of the ${(k+1)}^{\text{th}}$ step is determined in two steps: First, the post-impact commanded CoM state $\boldsymbol{p}^c(t_{k+1}^+)$ is computed as:
\begin{equation}
    \boldsymbol{p}^c(t_{k+1}^+)
    =
    \boldsymbol{p}(t_{k+1}^+)
    +
    [s_{ada}(t_{k+1}^-), r_{ada}(t_{k+1}^-)]^T
    \label{eq: xc update}
\end{equation}
where $\boldsymbol{p}(t_{k+1}^+)$ is the actual post-impact state, $s_{ada}(t_{k+1}^-)$ is the commanded position offset, and $r_{ada}(t_{k+1}^-)$ is the commanded velocity offset. 
$s_{ada}(t_{k+1}^-), r_{ada}(t_{k+1}^-)$ are generated by the adaptive controller as introduced later in~\eqref{eq: jump dyn of e_sc}.
Second, as we assume no knowledge of platform motion and aim to reduce the ankle torque request, the commanded CoM trajectory $\boldsymbol{p}^c(t)$ is computed using the initial condition $\boldsymbol{p}^{c}(t_{k+1}^+)$ from~\eqref{eq: xc update} and the passive LIP dynamics on a static ground:
\begin{align}
    \dot{\boldsymbol{p}}^c(t)
    =
    \begin{bmatrix}
        0 & 1 \\
        \frac{g}{z_{sc}} & 0
    \end{bmatrix}
    \boldsymbol{p}^c(t)
    \label{eq: cmd. conti.}
\end{align}
Although the use of passive LIP on a static ground is a simplification, it remains effective when the platform's acceleration is centered at zero, for example, on an oscillating ship~\cite{chang2026survey}.

\textbf{CoM tracking controller:}
Given the commanded CoM trajectory $\boldsymbol{p}^c(t)$ determined by the footstep planner, the CoM tracking controller generates the ankle torque to track the commanded CoM trajectory.

The CoM tracking error is defined as $\epsilon(t) = x_{sc}(t) - x_{sc}^{c}(t)$, and the ankle torque is designed to ensure the stability assumed in our problem formulation (see Section~\ref{sec:problem formulation}) using proportional and derivative (PD) feedback and gravity compensation: 
\begin{align}
    \tau(t)
    = 
    -mz_{sc}
    \left[
    -\tfrac{g}{z_{sc}} \epsilon(t)
    - k_p \epsilon(t)
    - k_d \dot{\epsilon}(t)
    +
    \tau_{ada}(t)
    \right]
    \label{eq: tau}
\end{align}
\normalsize
where $k_p, k_d$ are the PD gains and $\tau_{ada}(t)$ is the output from the adaptive controller. 

Given the tracking error definition, along with the actual moving-ground LIP dynamics \eqref{eq: lip model conti}, \eqref{eq: lip model discr} and the commanded CoM state dynamics \eqref{eq: cmd. conti.}, \eqref{eq: xc update}, the CoM tracking error dynamics are formulated as
\begin{align}
    \frac{d}{dt}
    \begin{bmatrix}
        \epsilon(t) \\ \dot{\epsilon}(t)
    \end{bmatrix}
    = &
    \boldsymbol{A}_{\Delta}(t)
    \begin{bmatrix}
        \epsilon(t) \\ \dot{\epsilon}(t)
    \end{bmatrix}
    +
    \begin{bmatrix}
        0 \\
        \tilde{d}(t) + \tau_{ada}(t)
    \end{bmatrix}
    \label{eq: conti dyn of e_sc}
    \\
    \begin{bmatrix}
        \epsilon(t_k^+) \\ \dot{\epsilon}(t_k^+)
    \end{bmatrix}
    = &
    \begin{bmatrix}
        0 & 0 \\ 0 & 0
    \end{bmatrix}\begin{bmatrix}
        \epsilon(t_k^-) \\ \dot{\epsilon}(t_k^-)
    \end{bmatrix}
    +
    \begin{bmatrix}
        -1 & 0 \\ 0 & -1
    \end{bmatrix}
    \begin{bmatrix}
        s_{ada}(t_k^-) \\
        r_{ada}(t_k^-)
    \end{bmatrix}
    \label{eq: jump dyn of e_sc}
\end{align}
\normalsize
where the continuous state matrix $\boldsymbol{A}_{\Delta}(t)$ is a stable matrix plus a time-varying parameter uncertainty, i.e., $\boldsymbol{A}_{\Delta}(t) = 
    \begin{bsmallmatrix}
        0 & 1 \\ -k_p & -k_d
    \end{bsmallmatrix}
    +
    \begin{bsmallmatrix}
        0 & 0 \\ \tfrac{\ddot{z}_{ws}(t)}{z_{sc}} & 0
    \end{bsmallmatrix}$, and $\tilde{d}(t) = 
    \tfrac{\ddot{z}_{ws}(t)}{z_{ws}} x_{sc}^{c}(t)
    - \ddot{x}_{ws}(t)$ is the disturbance caused by the ground motion.

By assuming the effect of vertical ground motion is negligible (i.e., $\ddot{z}_{ws}(t)\approx0$), the above hybrid error dynamics match the plant system in equation~\eqref{eq: prob}.
Therefore, the adaptive controller proposed in Section~\ref{sec: ada output feedback} can be applied to drive $\epsilon \to 0$ as $t \to \infty$ by computing the adaptive control signal $(\tau_{ada}(t), s_{ada}(t_k^-), r_{ada}(t_k^-))$.
Note that the above formulation is applicable to both the $x$- and $y$-direction walking and a subscript ${\square}_{x}$ or ${\square}_{y}$ is used to indicate whether the variable is used for the $x$- or $y$-direction.
In addition, the use of the robust adaptive law and the stability provided by the PD feedback in~\eqref{eq: tau} provide robustness to address non-trivial vertical ground motion~\cite{stewart2025adaptive} as we will demonstrate later in Sections~\ref{sec:simulation} and~\ref{sec:experiment}.

\subsubsection{Mid-level motion generator}
The mid-level motion generator~\cite{gao2024ALIPDRS} generates the desired task-space trajectory $\boldsymbol{h}^d(t) \in \mathbb{R}^{18}$ that respects the reduced-order model assumption (e.g., constant CoM height and zero angular momentum) and the step location commands determined by the high-level planner. 
Specifically, to respect the reduced-order model assumption and maintain simplicity, the CoM height $z_{sc}$, trunk orientation $\boldsymbol{\gamma}_{tr}$, upper body joint $\boldsymbol{q}_{up}$, and swing-foot orientation $\boldsymbol{\gamma}_{sw}$ are regulated to be constant.
The desired trajectory for the swing-foot position $\boldsymbol{p}_{sw}(t)$ is directly controlled
to execute the step location command determined by the high-level planner. 
As the key novelty of this work lies in the high-level planner, we refer the readers to~\cite{gao2024ALIPDRS} for the exact formulation.

\subsubsection{Low-level motion tracker}
\label{sec: low-level motion tracker}
The goal of the low-level motion tracker is to generate the motor torque command $\boldsymbol{\tau}_m(t) \in \mathbb{R}^{20}$ to fulfill the motion plan generated by the mid-level generator and the ankle torque command from the high-level planner. 
To ensure robustness against real-world uncertainty, the iterative inverse dynamics and passivity-based control scheme~\cite{gong2022zero, dosunmu2023stair} is adopted. 
Then, the passivity-based controller is used to track the actuator joint trajectory while ensuring the ankle torque applied follows the command $\tau_x(t), \tau_y(t)$ from the high-level planner~\cite{dosunmu2023stair}.
As the major contribution of this work does not involve the low-level motion tracker, interested readers can refer to~\cite{gong2022zero, dosunmu2023stair} for the exact control law.

\subsubsection{Implementation remarks}
\label{sec: implementation remarks}
A few essential implementation details that enable the performance of the system are introduced here.

\textbf{State estimation}:
The full robot state $\boldsymbol{q}(t) \in \mathbb{R}^{36}$ includes a 6-degree-of-freedom floating-base pose and the 30 joint angles on the Digit robot.
In simulation, the MuJoCo ground truth is used to determine the robot state $\boldsymbol{q}(t)$.
In the real-world experiments, the robot state $\boldsymbol{q}(t)$ is provided by the default state estimator of the Digit robot.
The reduced-order model states (e.g., $x_{sc}, y_{sc}, z_{sc}$) are computed using the kinematics model of Digit with the estimated full robot state $\boldsymbol{q}(t)$.

\textbf{Contact detection}:
In practice, the contact timing $t_k$ is determined by the timing when the estimated ground reaction force exceeds a threshold.
This design is made to ensure each stance is started with firm contact between the foot and the ground.
However, this inevitably injects modeling error, i.e., the variation of $t_k$, into the controller, but we show that the controller is robust to such variation.

\textbf{Ankle torque gating mechanism}:
In real-world experiments and simulations, the foot-ground contact may not be firmly established around the impact timing $t_k$. 
Therefore, it is not desirable to apply ankle torque when the foot is about to make or break contact since it might cause foot flipping or oscillation. 
To avoid these situations, a gating mechanism that switches off the ankle torque from $t = t_k-T_{off}^-$ to $t_k+ T_{off}^+$ is implemented, where $T_{off}^-$ and $T_{off}^+$ are determined based on empirical performance.

\textbf{Clipping of adaptive control signal}:
In addition, to avoid the adaptive law injecting aggressive control inputs during the learning process to destabilize the system, a clipping mechanism that clips the adaptive control inputs to ensure $|\tau_{ada}|\le \bar{\tau}_{ada}$, $|s_{ada}|\le\bar{s}_{ada}$, and $|r_{ada}|\le\bar{r}_{ada}$ is added.
While these practical implementations may violate the assumption made in the proposed controller, they are vital to the overall system and the simulation and experimental results show that the controller is robust to these modeling errors.


\section{Simulation Validation}
\label{sec:simulation}

This section evaluates the control framework of Section~\ref{sec:case study} in high-fidelity MuJoCo~\cite{todorov2012mujoco} simulations of a Digit humanoid robot walking on a moving platform whose motion is unknown to the controller. The test cases cover periodic horizontal motion within the disturbance model of Section~\ref{sec:problem formulation} as well as vertical and non-periodic motion outside it, all with one set of controller parameters.

\subsection{Simulation Setup}

\subsubsection{Setup of high-level motion planner}
The parameters of the LIP model are set as: $m = 47$\,kg, $z_{sc} = 0.95$\,m, $T = 0.35$\,s, according to the weight of Digit and its nominal pose.
The desired walking behaviors are chosen as: $o_{\text{walk},x}^d(k) = 0$\,m, $v_{\text{walk},y}^d(k) = 0$\,m/s, and $o_{\text{walk},y}^d(k) = \pm0.13$\,m to have a symmetric walking pattern in the $x$-direction, zero walking speed in the $y$-direction, and a $0.26$\,m step width. 
We test two walking speeds in the forward walking direction as shown in Tab.~\ref{tab: motion profile}.
The weighting matrix $\boldsymbol{Q}_{walk}$ for the footstep planner and the PD gains for the CoM tracking controller $(k_p,k_d)$ are empirically tuned for good performance, for both $x$- and $y$-directions.
The parameters of the ankle torque gating mechanism are set as: $T_{off}^- = 0.053$\,s and $T_{off}^+= 0.007$\,s.

Regarding the adaptive controller, the compensator orders are selected according to Theorem~\ref{The: ctrl design} and to balance the computation load and the performance, and are set as: $m_j=5$, $m_c=10$, and $m_d=5$.
The observable pairs $(\boldsymbol{F}_c,\boldsymbol{H}_c)$, $(\boldsymbol{F}_j,\boldsymbol{H}_j)$, and $(\boldsymbol{F}_d,\boldsymbol{H}_d)$ are designed so that each realizes a low-pass filter bank with a cutoff frequency of approximately 1.9 Hz, which lies above the dominant frequency content of platform motion in typical maritime applications~\cite{chang2026survey}.
The parameter tuning for the adaptive law can be guided by~\cite{ioannou2012robust,Ioannou2006tutorial,salgado1988efra,han1997efraComment}, and the adaptive law parameters are selected as: $\alpha=5\times10^{-1}$, $\beta = 1\times10^{-5}$, $\gamma=1\times10^{-5}$, and $\delta=1\times10^{-9}$.
The parameters of adaptive control input clipping are set as: 
$\bar{\tau}_{ada,x}=3$\,Nm and $\bar{\tau}_{ada,y}=2$\,Nm for first 10 s, 
$\bar{\tau}_{ada,x}=5$\,Nm and $\bar{\tau}_{ada,y}=6$\,Nm for the rest of the time, and 
$\bar{s}_{ada,x}=10$\,mm, 
$\bar{s}_{ada,y}=20$\,mm, 
$\bar{r}_{ada,x}=10$\,mm/s, 
$\bar{r}_{ada,y}=20$\,mm/s for all time. 
One set of parameters is used for all the cases tested below.




\subsubsection{Baseline controller setup}
We compare our proposed control method with two baselines: a) the prior work in~\cite{stewart2025adaptive} and b) the proposed hierarchical control framework without the adaptive controller turned on, called the PD baseline.
For fair comparison, the high-level footstep planner and the ankle torque controller proposed in~\cite{stewart2025adaptive} are integrated with the mid- and low-level controllers used within our proposed hierarchical framework (see Section~\ref{sec: hier controller}).
The performance of the PD baseline is evaluated and recorded before the proposed adaptive controller is turned on.
Both baseline methods are tuned to have their best performance in our simulations.

\subsubsection{Simulation cases}
We test the proposed and baseline controllers under 6 unknown ground motions emulating real-world ship motions~\cite{chang2026survey,han2021data,tannuri2003estimating,zhang2020ship}, as summarized in Table~\ref{tab: motion profile}. 
To test the robustness of our methods, we also include the cases when the surface moves vertically, injecting model uncertainty into the system, see equation~\eqref{eq: conti dyn of e_sc}, and when the surface motion is non-periodic, with slowly-varying amplitude and frequency \cite{stewart2025adaptive, gao2024ALIPDRS}.

\begin{table*}[t]
    \centering
    \caption{Motion profile of dynamic ground and Desired walking speed for simulation study.}
        \vspace{-0.1 in}
    \begin{tabular}{ccccc}
        \toprule
         \textbf{Case} & \multicolumn{3}{c}{\textbf{Ground motion profiles}} &  \textbf{Walking speed}
         \\
            & $x_{ws}(t)$ (m) & $y_{ws}(t)$ (m) & $z_{ws}(t)$ (m) & $v_{\text{walk},x}^{d}$ (m/s) \\ 
        \midrule
        1/2 & 0 & 0 & 0 & 0/0.15
        \\ 
        3/4 & $0.15(1-\cos{(\frac{2\pi}{6} t)})$ & 0 & 0 & 0/0.15
        \\ 
        5/6 & 0 & $0.08(1-\cos{(\frac{2\pi}{5.3} t)})$ & 0 & 0/0.15
        \\ 
        7/8 & $0.15(1-\cos{(\frac{2\pi}{6} t)})$ & 0 & $0.06(1-\cos{(\frac{2\pi}{5} t)})$ & 0/0.15
        \\ 
        9/10 & $0.06(1-\cos{(\frac{2\pi}{4} t)})$ 
        & $0.08(1-\cos{(\frac{2\pi}{5.3} t)})$ 
        & $0.02(1-\cos{(\frac{2\pi}{5} t)})$ & 0/0.15
        \\ 
        11/12 & $0.04(1-\cos{(\frac{2\pi}{5+0.25\sin{(\sqrt{t})}}t)}$ 
        & $\frac{0.05}{1+0.3\sin{(0.7\sqrt{t})}}(1-\cos{(\frac{2\pi}{5}t)})$ 
        & $0.05(1-\cos{(\frac{2\pi}{50}t)})+0.01(1-\cos{(\frac{2\pi}{4.3}t)})$ & 0/0.15
        \\ 
        \bottomrule
    \end{tabular}
    \label{tab: motion profile}
\end{table*}

\subsubsection{Performance metrics}
The CoM tracking error is chosen as a performance metric since it is the output that the hybrid adaptive controller regulates. 
Meanwhile, the total ankle torque request and the errors of the walking speed and walking offset are also reported to provide insights about how the proposed controller improves the performance of robot walking on unknown moving ground.
In simulations, the control framework is operated at 1000\,Hz, and the adaptive controller is turned on when $t=30$\,s and run until $t=80$\,s or longer.
We record performance data at the sample time $t_i=\frac{i-1}{1000}$\,s indexed by $i \in \{1,...,80000\}$.
The performance of the controller without and with adaptive control is evaluated at their steady state, specifically during the evaluation interval $E_{off}=\{ t_i: 0 \leq t_i \leq20\}$ and $E_{on} = \{t_i: 60\leq t_i \leq80\}$, respectively.
The performance metrics as listed in Table~\ref{tab:sim_summary} are defined as follows:
\begin{itemize}
    \item The root-mean-square (RMS) CoM position tracking error $\mathcal{E}_{\epsilon}$ $ = \sum_{t_i \in E} \sqrt{ \frac{1}{M} |\epsilon(t_i)|^2}$, where $M$ is the number of time samples $t_i$ in the evaluation interval $E$.
    \item The RMS total ankle torque $ \mathcal{E}_{{\tau}} = \sum_{t_i \in E} \sqrt{ \frac{1}{M} |\tau(t_i)|^2}$.
    \item The RMS error of walking velocity $ \mathcal{E}_{v}= \sum_{t_i \in E} \sqrt{ \frac{1}{M} |v_{\text{walk}}(t_i) - v_{\text{walk}}^{d}(k(t_i))|^2}$.
    \item The RMS error of walking offset $\mathcal{E}_{o}= \sum_{t_i \in E} \sqrt{ \frac{1}{M} |o_{\text{walk}}(t_i) - o_{\text{walk}}^{d}(k(t_i))|^2}$.
\end{itemize}
Note that these metrics can be applied to both the $x$- and $y$-directions and evaluated at $E_{off}$ and $E_{on}$.

\subsection{Simulation results}

\begin{table*}[!t]
\centering
\caption{Comparison of simulation performance, grouped by ground-motion type. ``PD'' and ``Prop.'' denote the metric evaluated over $E_{\text{off}}$ and $E_{\text{on}}$, respectively. 
$\Delta$ is the mean of the per-case percentage changes within the group; $\Delta_{\min}$ and $\Delta_{\max}$ are the extreme per-case values.
The \textbf{bolded} number highlights the positive improvement.}
\label{tab:sim_summary}
\renewcommand{\arraystretch}{1.15}
\setlength{\tabcolsep}{2.5pt}
\begin{tabular}{@{}cl rrrrr rrrrr rrrrr@{}}
\toprule
& & \multicolumn{5}{c}{Stationary ground (Cases 1--2)}
& \multicolumn{5}{c}{Horizontal, periodic (Cases 3--6)}
& \multicolumn{5}{c}{Vertical and/or non-periodic (Cases 7--12)}\\
\cmidrule(lr){3-7}\cmidrule(lr){8-12}\cmidrule(lr){13-17}
Dir. & Metric
 & PD & Prop. & $\Delta$ (\%)& $\Delta_{\min}$ (\%) & $\Delta_{\max}$ (\%)
 & PD & Prop. & $\Delta$ (\%)& $\Delta_{\min}$ (\%) & $\Delta_{\max}$ (\%)
 & PD & Prop. & $\Delta$ (\%)& $\Delta_{\min}$ (\%) & $\Delta_{\max}$ (\%)\\
\midrule
\multirow{4}{*}{$x$}
 & $\mathcal{E}_{\epsilon}$ (mm)
 & 1.95 & 0.53 & $\mathbf{-73}$ & $\mathbf{-73}$ & $\mathbf{-73}$
 & 2.57 & 0.78 & $\mathbf{-72}$ & $\mathbf{-85}$ & $\mathbf{-61}$
 & 2.97 & 1.18 & $\mathbf{-63}$ & $\mathbf{-85}$ & $\mathbf{-50}$\\
 & $\mathcal{E}_{\tau}$ (N$\cdot$m)
 & 1.95 & 0.89 & $\mathbf{-49}$ & $\mathbf{-64}$ & $\mathbf{-34}$
 & 2.71 & 2.35 & $\mathbf{-20}$ & $\mathbf{-66}$ & $+54$
 & 3.15 & 3.36 & $+5$ & $\mathbf{-49}$ & $+53$\\
 & $\mathcal{E}_{v}$ (mm/s)
 & 26.90 & 35.19 & $+15$ & $\mathbf{-33}$ & $+63$
 & 37.09 & 30.84 & $\mathbf{-23}$ & $\mathbf{-76}$ & $+18$
 & 42.96 & 40.43 & $\mathbf{-5}$ & $\mathbf{-36}$ & $+24$\\
 & $\mathcal{E}_{o}$ (mm)
 & 2.10 & 8.16 & $+310$ & $+224$ & $+397$
 & 5.40 & 9.05 & $+100$ & $+17$ & $+272$
 & 6.59 & 8.21 & $+30$ & $\mathbf{-6}$ & $+71$\\
\addlinespace[2pt]
\multirow{4}{*}{$y$}
 & $\mathcal{E}_{\epsilon}$ (mm)
 & 4.47 & 0.44 & $\mathbf{-90}$ & $\mathbf{-92}$ & $\mathbf{-88}$
 & 4.80 & 1.56 & $\mathbf{-68}$ & $\mathbf{-91}$ & $\mathbf{-39}$
 & 4.98 & 1.41 & $\mathbf{-71}$ & $\mathbf{-90}$ & $\mathbf{-55}$\\
 & $\mathcal{E}_{\tau}$ (N$\cdot$m)
 & 4.39 & 6.82 & $+59$ & $+34$ & $+83$
 & 4.64 & 6.43 & $+39$ & $+29$ & $+44$
 & 4.76 & 5.84 & $+23$ & $+4$ & $+36$\\
 & $\mathcal{E}_{v}$ (mm/s)
 & 17.52 & 14.07 & $\mathbf{-20}$ & $\mathbf{-46}$ & $+7$
 & 23.58 & 19.37 & $\mathbf{-16}$ & $\mathbf{-29}$ & $+0.3$
 & 22.95 & 16.41 & $\mathbf{-26}$ & $\mathbf{-43}$ & $\mathbf{-10}$\\
 & $\mathcal{E}_{o}$ (mm)
 & 31.17 & 28.66 & $\mathbf{-6}$ & $\mathbf{-21}$ & $+9$
 & 33.09 & 18.12 & $\mathbf{-43}$ & $\mathbf{-81}$ & $\mathbf{-3}$
 & 33.24 & 13.67 & $\mathbf{-57}$ & $\mathbf{-77}$ & $\mathbf{-0.2}$\\
\bottomrule
\end{tabular}
\end{table*}

\subsubsection{Comparison on walking stability}
For all cases listed in Table~\ref{tab: motion profile}, the robot walks stably for the full 80\,s trial under both the proposed method and the PD baseline, whereas the robot using the controller of~\cite{stewart2025adaptive} falls within the first 10 steps, even on stationary ground and with gains tuned specifically for that condition. This failure is caused by two structural design choices: First, the step location commands in~\cite{stewart2025adaptive} for all $t>0$ are not updated using the current CoM state, preventing the robot from recovering stability using foot steps. Second, their formulation assumes the actual step location is exactly the same as the commanded step location, the key to transforming the hybrid error system to a continuous system. However, this is an unrealistic assumption for high-fidelity simulation and real-world deployment whose dynamics are more complex than those used in the low-level motion tracker (see Section~\ref{sec: low-level motion tracker}). As the prior work~\cite{stewart2025adaptive} fails to maintain stable walking, we omit the comparison to~\cite{stewart2025adaptive} in the rest of the paper.

\subsubsection{Comparison on CoM tracking error}
Compared to the PD baseline, the proposed adaptive controller reduces the RMS CoM tracking errors $\mathcal{E}_\epsilon$ in both $x$- and $y$-directions by more than $39\%$ and up to $92\%$ across all ground motions. 
This validates the disturbance rejection capability in hybrid linear systems under the hybrid sinusoidal disturbance with unknown parameters.

\subsubsection{Comparison on ankle torque request}
The adaptive controller can reduce the request of control input to save energy given a feasible desired trajectory to track.
The proposed adaptive controller reduces the $x$-direction ankle torque by up to $66\%$ on moving platforms, as it precisely fits the disturbance profile instead of over-activating PD feedback. 
Conversely, the $y$-direction torque increases for all cases, a necessary trade-off for the control law to counteract unmodeled dynamics and modeling error and drive the system toward the passive LIP trajectory.

\subsubsection{Comparison on walking speed and offset tracking}
Integrating the adaptive loop into the CoM controller may also reduce the RMS walking speed and offset errors, thereby enhancing high-level gait execution.
In the $y$-direction, both the walking speed error $\mathcal{E}_{v}$ and the offset error $\mathcal{E}_o$ are improved by the proposed adaptive controller across the three ground motion types. 
In the $x$-direction, however, $\mathcal{E}_v$ is improved in some cases and $\mathcal{E}_o$ increases in almost all cases.

These results reflect the trade-off of the proposed control framework where the adaptive controller adjusts the commanded CoM trajectory~\eqref{eq: xc update}.
This design ensures that the input matrix in~\eqref{eq: jump dyn of e_sc} is full rank, allowing the hybrid adaptive controller to reduce the CoM tracking error, but it also interferes with the minimization of cost function in~\eqref{eq:min_L}. 
As the baseline offset tracking error in the $x$-direction is small (sub-centimeter), the effect of the adaptive control pulls the error to a higher but acceptable value; in the $y$-direction, the same effect reduces the tracking error as the baseline error is larger.

\subsubsection{Robustness to unknown vertical and non-periodic ground motion} 
The proposed control framework remains robust to the unmodeled unknown vertical ground motion and the non-periodic ground motion in all three directions. 
Based on Table~\ref{tab:sim_summary}, the performance improvements, including the reduction in the CoM tracking error in the $x$- and $y$-directions and the improvement in the walking speed and offset tracking errors in the $y$-direction, are preserved in the presence of vertical ground motion and non-periodic ground motion.

\subsubsection{Remarks on hybrid error dynamics and the adaptive control inputs}
Fig.~\ref{fig: case37 x} illustrates the output characteristics under two representative scenarios: Case 3 aligns with the ideal formulation in Section~\ref{sec:problem formulation}, whereas Case 7 injects vertical surface-induced model uncertainty~\eqref{eq: conti dyn of e_sc}.
Prior to adaptation ($t < 30$\,s), the CoM tracking error $\epsilon_{x}$ exhibits a fish-bone pattern with a sinusoidal envelope; this explicitly reflects the hybrid linear dynamics under ground disturbances, where the state periodically resets to zero~\eqref{eq: jump dyn of e_sc}.
Upon activation of the adaptive law at $t = 30$\,s, the controller generates an almost continuous sinusoidal ankle torque $\tau_{ada}$ and discrete updates $(s_{ada}, r_{ada})$, yielding a rapid reduction in $\epsilon_x$ after a 10-second learning transient.
Notably, $\tau_{ada}$ self-synchronizes with the 6-second ground motion period specified in Table~\ref{tab: motion profile}, confirming the controller's capacity to implicitly estimate and reject the disturbance.



\begin{figure*}[t]
    \centering
    \includegraphics{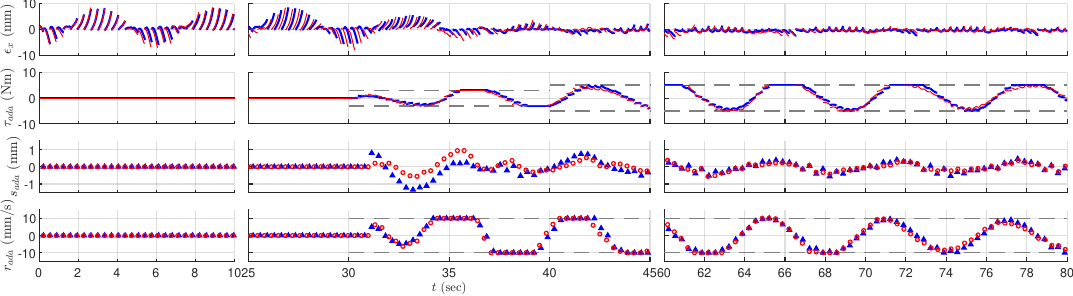}
    \vspace{-0.25in}
    \caption{The CoM tracking error $\epsilon_x$, adaptive ankle torque request $\tau_{ada}$, and discrete control input $s_{ada}, r_{ada}$ in the $x$-direction for case 3 (thicker blue line or triangles) and case 7 (thinner red line or hollow circles) of the simulation validation.
    The adaptive controller is turned on at $t=30$ and the dashed gray lines indicate the limit of control input request.
    }
    \label{fig: case37 x}
\end{figure*}

\section{Real-world experiment}
\label{sec:experiment}
Hardware experiments on the Digit humanoid robot validate the controller's disturbance rejection and CoM tracking capabilities under real-world uncertainties. 
An experiment movie is provided in the supplementary material \url{https://youtu.be/2wP6VvRaFQc}.

\subsection{Experiment setup}

\begin{figure}
    \centering
    \includegraphics[width=0.65\linewidth]{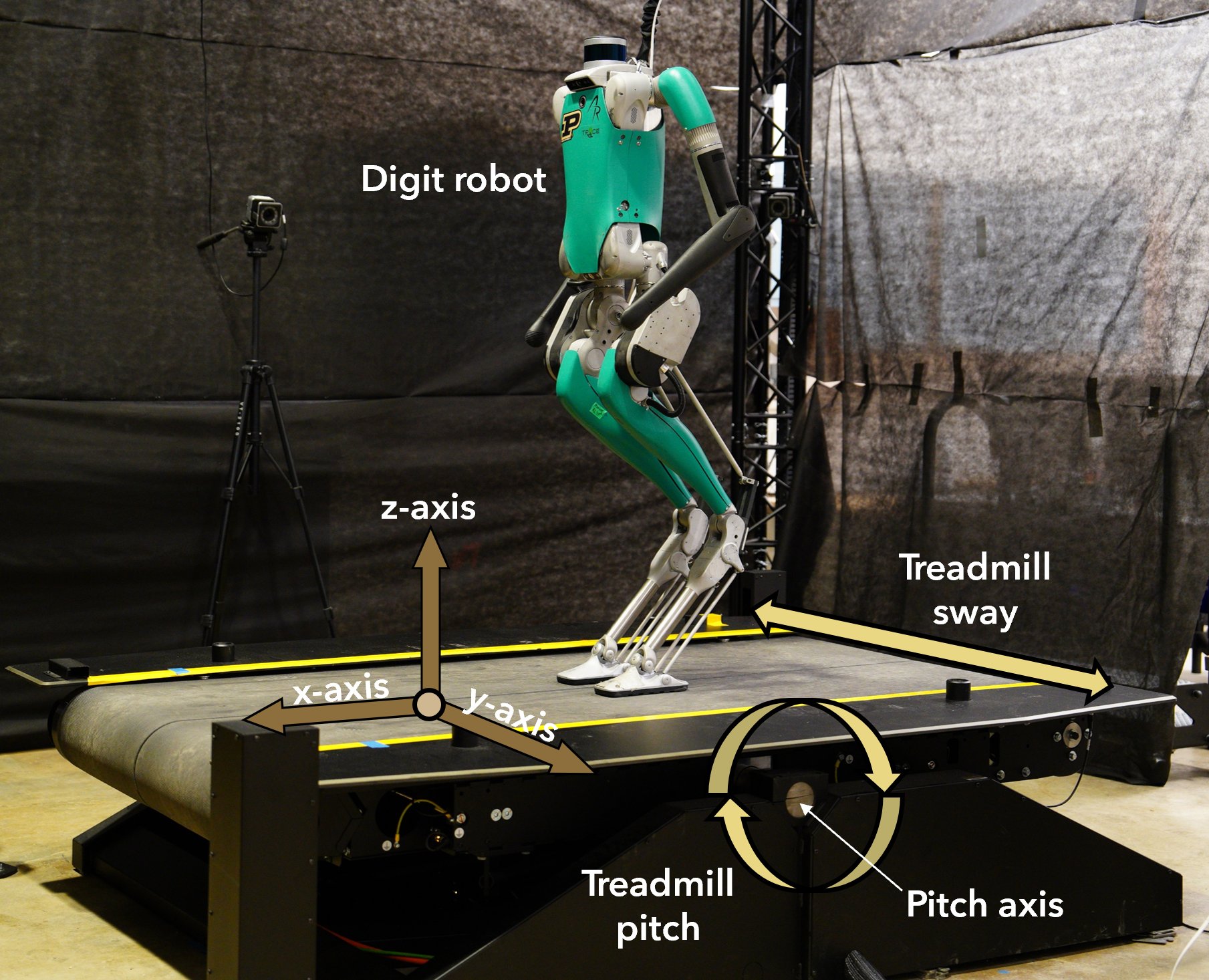}
    \caption{Experiment setup: A humanoid robot walking on a treadmill that can pitch and sway to emulate the ground motion. The motion profile of the treadmill is unknown to the robot controller.}
    \label{fig:exp setup}
    \vspace{-0.1 in}
\end{figure}

\subsubsection{Setup of high-level motion planner}

The setup of the high-level motion planner remains largely unchanged other than the following: 
The $\boldsymbol{Q}_{walk}$ for the footstep planner and the $(k_p,k_d)$ for the CoM tracking controller are re-tuned for hardware experiments, for both $x$- and $y$-directions.
To allow a wider tolerance around the impact timing, the ankle torque gating parameters are adjusted: $T_{off}^{-}= 0.07$\,s and $T_{off}^{+}= 0.07$\,s.
The adaptive parameters remain the same as those used in the simulation validation.
The proposed framework is running on a laptop with an Intel Ultra 9 185H CPU and the whole control framework, including the high-level motion planner, the adaptive law, and the low-level motion tracking, can be stably run at 1k\,Hz.


\subsubsection{Baseline controller setup}
As the prior work~\cite{stewart2025adaptive} is not stable in the simulation, the baseline controller for the experiment only includes the PD baseline, which is the proposed framework without the adaptive controller.


\begin{table*}[!t]
\centering
\caption{Motion profile of the treadmill and the corresponding real-world
experimental performance. ``PD'' and ``Prop.'' denote the
metric for PD baseline and the proposed adaptive framework, respectively, and
$\Delta$ is the corresponding percentage change. 
The bolded numbers highlight the positive improvement.}
\label{tab:exp_summary}
\renewcommand{\arraystretch}{1.15}
\setlength{\tabcolsep}{3pt}
\begin{tabular}{@{}cl rrr rrr rrr rrr@{}}
\toprule
& & \multicolumn{3}{c}{\textbf{Case A}} & \multicolumn{3}{c}{\textbf{Case B}}
& \multicolumn{3}{c}{\textbf{Case C}} & \multicolumn{3}{c}{\textbf{Case D}}\\
& Pitch (deg) & \multicolumn{3}{c}{$4\sin{(\frac{2\pi}{8}t)}$}
& \multicolumn{3}{c}{$4\sin{(\frac{2\pi}{6}t)}$}
& \multicolumn{3}{c}{}
& \multicolumn{3}{c}{$3\sin{(\frac{2\pi}{8}t)}$} \\
& Sway (cm) & \multicolumn{3}{c}{}
& \multicolumn{3}{c}{}
& \multicolumn{3}{c}{$5(1-\cos{(\frac{2\pi}{8}t)})$}
& \multicolumn{3}{c}{$3(1-\cos{(\frac{2\pi}{10}t)})$} \\
\cmidrule(lr){3-5}\cmidrule(lr){6-8}\cmidrule(lr){9-11}\cmidrule(lr){12-14}
Dir. & Metric
 & PD & Prop. & $\Delta$ (\%)
 & PD & Prop. & $\Delta$ (\%)
 & PD & Prop. & $\Delta$ (\%)
 & PD & Prop. & $\Delta$ (\%)\\
\midrule
\multirow{4}{*}{$x$}
 & $\mathcal{E}_{\epsilon}$ (mm)
 & 5.60 & 4.30 & $\mathbf{-23}$ & 7.70 & 4.10 & $\mathbf{-47}$
 & 10.27 & 8.92 & $\mathbf{-13}$ & 7.48 & 4.15 & $\mathbf{-45}$\\
 & $\mathcal{E}_{\tau}$ (N$\cdot$m)
 & 3.39 & 3.98 & $+17$ & 4.54 & 4.79 & $+6$
 & 5.75 & 5.09 & $\mathbf{-11}$ & 4.39 & 3.59 & $\mathbf{-18}$\\
 & $\mathcal{E}_{v}$ (mm/s)
 & 79.50 & 81.10 & $+2$ & 58.10 & 73.60 & $+27$
 & 32.60 & 41.33 & $+27$ & 50.22 & 74.53 & $+48$\\
 & $\mathcal{E}_{o}$ (mm)
 & 20.50 & 19.00 & $\mathbf{-7}$ & 18.30 & 16.70 & $\mathbf{-9}$
 & 11.47 & 11.78 & $+3$ & 18.59 & 16.86 & $\mathbf{-9}$\\
\addlinespace[2pt]
\multirow{4}{*}{$y$}
 & $\mathcal{E}_{\epsilon}$ (mm)
 & 4.90 & 5.70 & $+16$ & 4.90 & 4.80 & $\mathbf{-2}$
 & 5.62 & 5.32 & $\mathbf{-5}$ & 4.61 & 4.49 & $\mathbf{-3}$\\
 & $\mathcal{E}_{\tau}$ (N$\cdot$m)
 & 1.64 & 1.96 & $+20$ & 1.66 & 1.59 & $\mathbf{-4}$
 & 1.96 & 3.18 & $+62$ & 1.63 & 2.56 & $+57$\\
 & $\mathcal{E}_{v}$ (mm/s)
 & 31.00 & 40.50 & $+31$ & 42.10 & 32.60 & $\mathbf{-23}$
 & 51.87 & 48.15 & $\mathbf{-7}$ & 42.60 & 26.72 & $\mathbf{-37}$\\
 & $\mathcal{E}_{o}$ (mm)
 & 57.90 & 51.80 & $\mathbf{-11}$ & 62.00 & 60.20 & $\mathbf{-3}$
 & 68.45 & 46.71 & $\mathbf{-32}$ & 61.17 & 45.32 & $\mathbf{-26}$\\
\bottomrule
\end{tabular}
\end{table*}

\subsubsection{Experiment cases}
The hardware experiment setup is shown in Fig.~\ref{fig:exp setup}.
The Digit robot steps in-place on a moving treadmill configured to pitch and sway, emulating vessel motions derived from empirical sea-state data~\cite{chang2026survey,han2021data,tannuri2003estimating,zhang2020ship}. The motion profile of the treadmill used to test the robot is shown in Table~\ref{tab:exp_summary}.
As the platform's pitching axis is below the platform surface, the rotational motion induces coupled multi-directional base accelerations in both $x$- and $z$-directions, while the sway motion drives lateral ($y$-direction) disturbances. 
For each case, one trial of data with a duration of more than 100\,s is recorded to capture representative behavior of the system.

\subsubsection{Performance metrics}
The same performance metrics used in simulations are adopted for the experiments. The measurement used to compute the metrics is obtained from the proprioceptive sensor on the Digit robot and its default estimator.

\subsection{Experiment results}

\subsubsection{Comparison on CoM tracking error}
The hardware performance results shown in Table~\ref{tab:exp_summary} reveal that the adaptive law reduces the RMS CoM tracking error $\mathcal{E}_\epsilon$ in the $x$-direction by $13\sim47\%$ across all four test cases, confirming effective disturbance rejection under real-world uncertainties. 
In the $y$-direction, $\mathcal{E}_\epsilon$ remains comparable to the baseline, showing slight reductions in Case~B--D and a minor increase in Case~A. 
The limited performance improvement in the $y$-direction is caused by the noisy CoM lateral position and velocity estimation, and the limited ankle torque associated with the smaller support region in the $y$-direction.

\subsubsection{Comparison on ankle torque request}
The ankle torque request in the $x$-direction is reduced by up to $18\%$ in Case D but is increased in Case A--B. 
Similar to the case in the simulation, the increase of the ankle torque request can be explained by the additional torque required to counteract the modeling error and external disturbance.
In addition, Table~\ref{tab:exp_summary} shows that the hardware ankle torque request in the $y$-direction is smaller than that in simulations.
This is caused by the parameter tuning to avoid foot flipping in the real-world experiments caused by Digit's narrow foot pad, explaining the limited performance improvement in the $y$-direction.

\subsubsection{Comparison on walking speed and offset}
Due to the design of the proposed framework, the improvement in RMS CoM tracking error $\mathcal{E}_\epsilon$ may not necessarily improve the walking speed and offset tracking.
In the $y$-direction, the walking speed and offset tracking are all improved except the walking speed in Case A, showing a similar trend observed in the simulation study.
In the $x$-direction, the walking speed errors for all four cases increase while the walking offset errors reduce except for Case C, demonstrating the trade-off inherent to the design of adjusting the commanded trajectory~\eqref{eq: xc update}.

\subsubsection{Remarks on time-series data}
Using Case A as a representative scenario, Fig.~\ref{fig:real exp caseA} shows the trajectory of the CoM tracking error in the $x$-direction and the adaptive control input $\tau_{ada}, s_{ada}, r_{ada}$.
Prior to adaptation ($t < 30$\,s), $\epsilon_x$ displays a periodic envelope superimposed on a $-5$ mm constant bias and a high-frequency, V-shaped perturbation, capturing the combined effect of the treadmill's pitching, unmodeled constant bias, and high-frequency disturbances caused by the low-level controller. 
Upon activating the adaptive law at $t=30$\,s, the continuous ankle torque $\tau_{ada}$ and discrete signals $(s_{ada}, r_{ada})$ are turned on and strictly bounded to prevent aggressive behavior. 
After $t=60$s, the adaptive controller successfully suppresses the primary low-frequency disturbance components by generating periodic ankle torque $\tau_{ada}$ and discrete control signals $s_{ada}, r_{ada}$. 
Given the physical power and bandwidth limitations of the hardware ankle actuators, high-frequency perturbations are deliberately filtered by restricting the compensator parameters $\boldsymbol{F}_c, \boldsymbol{F}_j,$ and $\boldsymbol{F}_d$ to a conservative cut-off frequency rather than attempting aggressive regulation.

\begin{figure*}[t]
    \centering
    \includegraphics{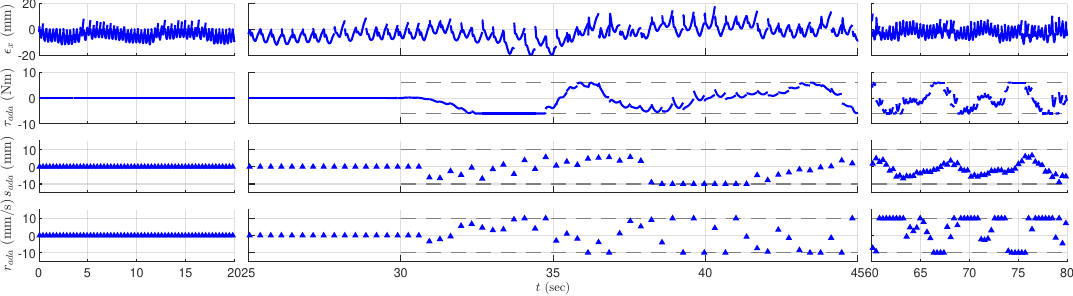}
    \vspace{-0.25in}
    \caption{The CoM tracking error $\epsilon_x$, adaptive ankle torque request $\tau_{ada}$, and discrete control input $s_{ada}, r_{ada}$ in the $x$-direction for case A.
    The adaptive controller is turned on at $t=30$ and the dashed gray lines indicate the limit of control input request.}
    \label{fig:real exp caseA}
\end{figure*}


\section{Discussion}
\label{sec:discussion}
\subsection{Effect of stance period variation}
While the stance period $T$ in both simulations and hardware experiments is not a fixed value but determined by the actual contact forces, the validation results show that the proposed framework is robust to the stance period variation of about $4\%$.
The fixed stance period $T$ assumption used in the problem formulation~\eqref{eq: prob} and the LIP model~\eqref{eq: lip model discr} ensures the step-to-step dynamics can be easily computed, e.g., $\boldsymbol{x}(t_k^+) = \boldsymbol{A}_d \exp(\boldsymbol{A}_{c}T) \boldsymbol{x}(t_{k-1}^+)$ for~\eqref{eq: prob} without control and disturbance inputs.
Variation in $T$ induces errors in the cost function prediction $\mathcal{L}(k+1)$ in~\eqref{eq:min_L} and time-variation in the compensator parameters $(\boldsymbol{G}_c, \boldsymbol{G}_j,\boldsymbol{G}_d)$ in Theorem~\ref{The: ctrl design}, making it difficult for the adaptive law to estimate the noisy and time-varying controller parameters~\cite{stewart2024ltvSystem, stewart2026mimo}.
The stance-period variation also affects the time-varying properties of the solution to Theorem~\ref{The: ctrl design} and the robustness of the proposed controller (Section~\ref{sec: ada output feedback}). A formal analysis of these effects remains open and is left for future work. 

\subsection{Effect of ankle torque gating mechanism}
To ensure stable walking behavior and smooth contact transition in simulations and hardware experiments, the ankle torque is clipped to zero when the foot is about to lift-off or touchdown (see Section~\ref{sec: implementation remarks}).
However, clipping the ankle torque in~\eqref{eq: tau} induces additional unmodeled error in~\eqref{eq: jump dyn of e_sc}, removes the control authority of the adaptive control law intermittently, i.e., making $\boldsymbol{B}_c$ in~\eqref{eq: prob} time-varying, and causes an error when using the systems in~\eqref{eq: continuous regressor matrix generator} and~\eqref{eq: discrete regressor matrix generator} to estimate the regressor matrices in~\eqref{eq: y_hat}.
While the current validation shows positive results with the direct clipping, a formal analysis and proper treatment of input clipping are highly valuable for systems with input constraints, e.g., limited ankle torque for legged systems, and we leave these topics to future work.

\section{Conclusion}
\label{sec:conclusion}

This paper addressed the problem of rejecting unknown hybrid sinusoidal disturbances for a class of hybrid linear systems, motivated by bipedal locomotion in non-inertial environments.
We established necessary and sufficient solvability conditions of the problem via the derived \emph{Hybrid Regulator Equations}. 
To solve the problem, we developed an adaptive output feedback controller by extending the observer-compensator structure of the YK parameterization to hybrid systems and deriving the \emph{Hybrid Swapping Lemma}, which yields a linear parameter estimation equation for estimating the controller parameters that reject the disturbance.
The proposed adaptive controller was integrated into a hierarchical control framework for bipedal locomotion and validated through high-fidelity MuJoCo simulations and real-world experiments on the Digit humanoid robot walking on a moving treadmill with multi-directional unknown motion, reducing center-of-mass tracking error by an average of over 60\% in simulations and up to 47\% in real-world experiments.

\section*{Acknowledgments}
This work has been supported by the National Science Foundation under Grants CMMI-1934280 and CMMI-2046562 and by the Office of Naval Research under Grant N00014-24-1-2028.

\section*{Appendix}
\subsection{Proof of Lemma \ref{lem: ss of hybrid sys.}}
To begin, rewrite \eqref{eq: state feedback closed loop} using the change of variables
$
    \tilde{\boldsymbol{x}}
    =
    \boldsymbol{x} 
    -
    \boldsymbol{\Pi}_c 
    \boldsymbol{w}
$
where $\boldsymbol{\Pi}_c$ is the solution to the Sylvester equation~\eqref{eq: Sylvester Equations for ss}.
The system in the new coordinates is described by
\footnotesize
\begin{align}
\begin{split}
    \begin{bmatrix}
        \dot{\tilde{\boldsymbol{x}}}(t) \\
        \dot{\boldsymbol{w}}(t)
    \end{bmatrix}
    &=
    \begin{bmatrix}
        \boldsymbol{A}_c & 
        \boldsymbol{0} \\
        \boldsymbol{0} & \boldsymbol{S}_c
    \end{bmatrix}
    \begin{bmatrix}
    \tilde{\boldsymbol{x}}(t) \\
    \boldsymbol{w}(t)
    \end{bmatrix}
    \\
    \begin{bmatrix}
        \tilde{\boldsymbol{x}}(t_k^+) \\
        \boldsymbol{w}(t_k^+)
    \end{bmatrix}
    &=
    \begin{bmatrix}
        \boldsymbol{A}_d & \boldsymbol{P}_\Psi \\
        \boldsymbol{0} & \boldsymbol{S}_d
    \end{bmatrix}
    \begin{bmatrix}
        \tilde{\boldsymbol{x}}(t_k^-) \\
        \boldsymbol{w}(t_k^-)
    \end{bmatrix}
    \\
    \boldsymbol{y} 
    &= 
    \begin{bmatrix}
    \boldsymbol{C} & (\boldsymbol{C} \boldsymbol{\Pi}_c +\boldsymbol{Q} )
    \end{bmatrix}
    \begin{bmatrix}
    \tilde{\boldsymbol{x}} \\
    \boldsymbol{w}
    \end{bmatrix}
\end{split} \label{eq: diagonal continuous phase}
\end{align}
\normalsize
where
$
    \boldsymbol{P}_\Psi = \boldsymbol{P}_d + \boldsymbol{B}_d \boldsymbol{\Psi}_d + \boldsymbol{A}_d \boldsymbol{\Pi}_c - \boldsymbol{\Pi}_c\boldsymbol{S}_d
$.
Since the continuous phase dynamics are diagonal, we have $\tilde{\boldsymbol{x}}(t_{k+1}^-) = \exp{(\boldsymbol{A}_cT)} \tilde{\boldsymbol{x}}(t_{k}^+)$ and $\boldsymbol{w}(t_{k+1}^-) = \exp{(\boldsymbol{S}_c T)} \boldsymbol{w}(t_{k}^+)$
which may be combined with the discrete phase of \eqref{eq: diagonal continuous phase} to obtain the step-to-step dynamics
\footnotesize
\begin{equation*}
    \begin{bmatrix}
    \tilde{\boldsymbol{x}}(t_{k+1}^+) \\ \boldsymbol{w}(t_{k+1}^+)
    \end{bmatrix}
    =
    \begin{bmatrix}
    \boldsymbol{A}_c\exp{(\boldsymbol{A}_cT)} & \boldsymbol{P}_\Psi \exp{(\boldsymbol{S}_c T)}
    \\
    \boldsymbol{0} & \boldsymbol{S}_d \exp{(\boldsymbol{S}_c T)} 
    \end{bmatrix}
    \begin{bmatrix}
    \tilde{\boldsymbol{x}}(t_{k}^+) \\ \boldsymbol{w}(t_{k}^+)
    \end{bmatrix}
\end{equation*}
\normalsize

This equation may be diagonalized to obtain $\boldsymbol{x}(t)$ as a function of $\boldsymbol{w}(t)$.
Consider the change of variables
$
    \Delta \tilde{\boldsymbol{x}}
    =
    \tilde{\boldsymbol{x}}
    -
    \boldsymbol{\Pi}_d \boldsymbol{w}
$
where $\boldsymbol{\Pi}_d$ is the solution to the Sylvester equation~\eqref{eq: Sylvester Equations for ss}.
Expressing the step-to-step dynamics using the new variables yields
\footnotesize
\begin{align*}
    \begin{bmatrix}
        \Delta\tilde{\boldsymbol{x}}(t_{k+1}^+) \\
        \boldsymbol{w}(t_{k+1}^+)
    \end{bmatrix}
    =
    \begin{bmatrix}
        \boldsymbol{A}_d \exp{(\boldsymbol{A}_cT)} & \boldsymbol{0} \\
        \boldsymbol{0} & \boldsymbol{S}_d \exp{(\boldsymbol{S}_c T)} 
    \end{bmatrix}
    \begin{bmatrix}
        \Delta\tilde{\boldsymbol{x}}(t_{k}^+) \\
        \boldsymbol{w}(t_{k}^+)
    \end{bmatrix}
\end{align*}
\normalsize
From the fact that $\boldsymbol{A}_d \exp{(\boldsymbol{A}_cT)}$ is assumed to be stable and $\boldsymbol{S}_d \exp{(\boldsymbol{S}_c T)}$ contains 2D rotation matrices along the diagonal, we conclude that:
$\Delta\tilde{\boldsymbol{x}}(t_{k}^+) \to \boldsymbol{0}$ as $k\to\infty$,
which is equivalent to
$\tilde{\boldsymbol{x}}(t_k^+) \to \boldsymbol{\Pi}_d \boldsymbol{w}(t_k^+)$ as $t\to\infty$.

Observing that for all $t$ in the interval $[t_k^+, t_{k+1}^-]$,
$
    \boldsymbol{x}(t)
    =
    \exp{\left(\boldsymbol{A}_c (t-t_k) \right)} \tilde{\boldsymbol{x}}(t_k^+)
    +
    \boldsymbol{\Pi}_c \boldsymbol{w}(t)
$, we conclude that, for all $t$ in the interval $[t_k^+, t_{k+1}^-]$, the steady-state trajectory of the system is~\eqref{eq: ss of hybrid sys signals}.


\subsection{Additional Lemma for Theorem~\ref{The: ctrl design}}
\begin{lemma}[Solvability of constrained Sylvester equation] \label{lem: comp design}
Let the pair $(\boldsymbol{F},\boldsymbol{H})$ be in observable canonical form with the characteristic polynomial $(s^m + f_1 s^{m-1} + f_2 s^{m-2} + \cdots + f_{m-1} s + f_m)^n$, let $\boldsymbol{S}\in\mathbb{R}^{p \times p}$, and let $\boldsymbol{\Psi}$ and $\boldsymbol{Q}$ be known matrices with conformable dimensions.
There exists at least one solution pair 
$(\boldsymbol{\Sigma},\boldsymbol{G})$
to the constrained Sylvester equation
\footnotesize
\begin{align}
\begin{split}
    \boldsymbol{\Sigma} \boldsymbol{S} =& \boldsymbol{F}\boldsymbol{\Sigma} + \boldsymbol{G}\boldsymbol{Q}
    \\
    \boldsymbol{H}\boldsymbol{\Sigma} =& \boldsymbol{\Psi}
\end{split} \label{eq: comp design lemma equations}
\end{align}
\normalsize
if $m \ge p$ and the pair $(\boldsymbol{S},\boldsymbol{Q})$ is observable.
\end{lemma}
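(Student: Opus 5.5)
We will prove the lemma by direct construction, exploiting the block structure of the observable canonical form \eqref{eq: F and H canonical form}. Partition the unknowns conformably with $\boldsymbol{F}$ as
$\boldsymbol{\Sigma} = [\boldsymbol{\Sigma}_1^T,\ldots,\boldsymbol{\Sigma}_m^T]^T$ with $\boldsymbol{\Sigma}_i\in\mathbb{R}^{n\times p}$, and $\boldsymbol{G} = [\boldsymbol{G}_1^T,\ldots,\boldsymbol{G}_m^T]^T$ with $\boldsymbol{G}_i\in\mathbb{R}^{n\times q}$, where $q$ is the number of rows of $\boldsymbol{Q}$.

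\textbf{Step 1 (scalarize the equations).} Since $\boldsymbol{H}=[\boldsymbol{I}\ \boldsymbol{0}\cdots\boldsymbol{0}]$, the constraint $\boldsymbol{H}\boldsymbol{\Sigma}=\boldsymbol{\Psi}$ is simply $\boldsymbol{\Sigma}_1=\boldsymbol{\Psi}$. Reading the block rows of $\boldsymbol{\Sigma}\boldsymbol{S}=\boldsymbol{F}\boldsymbol{\Sigma}+\boldsymbol{G}\boldsymbol{Q}$ gives
\begin{align*}
\boldsymbol{\Sigma}_i\boldsymbol{S} &= -f_i\boldsymbol{\Psi}+\boldsymbol{\Sigma}_{i+1}+\boldsymbol{G}_i\boldsymbol{Q}, \quad i=1,\ldots,m-1,\\
\boldsymbol{\Sigma}_m\boldsymbol{S} &= -f_m\boldsymbol{\Psi}+\boldsymbol{G}_m\boldsymbol{Q}.
\end{align*}
The first $m-1$ block rows are used as definitions: once the $\boldsymbol{G}_i$ are fixed, they determine
\begin{equation*}
\boldsymbol{\Sigma}_{i+1}=\boldsymbol{\Sigma}_i\boldsymbol{S}+f_i\boldsymbol{\Psi}-\boldsymbol{G}_i\boldsymbol{Q},
\end{equation*}
starting from $\boldsymbol{\Sigma}_1=\boldsymbol{\Psi}$.

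\textbf{Step 2 (reduce to one condition).} By induction, this recursion gives
\begin{equation*}
\boldsymbol{\Sigma}_{i}=\boldsymbol{\Psi}\Big(\boldsymbol{S}^{i-1}+\sum_{l=1}^{i-1}f_l\boldsymbol{S}^{i-1-l}\Big)-\sum_{l=1}^{i-1}\boldsymbol{G}_l\boldsymbol{Q}\boldsymbol{S}^{i-1-l}.
\end{equation*}
Substituting $i=m$ into the last block row, all remaining freedom collapses into a single linear condition on $\boldsymbol{G}$:
\begin{equation*}
\boldsymbol{\Psi}\,\alpha(\boldsymbol{S}) = \sum_{i=1}^{m}\boldsymbol{G}_i\boldsymbol{Q}\boldsymbol{S}^{m-i}
= [\boldsymbol{G}_1\ \cdots\ \boldsymbol{G}_m]\,\boldsymbol{\mathcal{O}}_m,
\end{equation*}
where $\alpha(s)=s^m+f_1s^{m-1}+\cdots+f_m$ and $\boldsymbol{\mathcal{O}}_m=[(\boldsymbol{Q}\boldsymbol{S}^{m-1})^T,\ldots,(\boldsymbol{Q}\boldsymbol{S})^T,\boldsymbol{Q}^T]^T\in\mathbb{R}^{mq\times p}$.

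\textbf{Step 3 (solvability from observability).} Because $m\ge p$, the block rows of $\boldsymbol{\mathcal{O}}_m$ include $\boldsymbol{Q},\boldsymbol{Q}\boldsymbol{S},\ldots,\boldsymbol{Q}\boldsymbol{S}^{p-1}$, which form the observability matrix of $(\boldsymbol{S},\boldsymbol{Q})$. By the observability assumption, $\boldsymbol{\mathcal{O}}_m$ therefore has full column rank $p$. Hence it admits a left inverse $\boldsymbol{L}$ with $\boldsymbol{L}\boldsymbol{\mathcal{O}}_m=\boldsymbol{I}_p$, for instance $\boldsymbol{L}=(\boldsymbol{\mathcal{O}}_m^T\boldsymbol{\mathcal{O}}_m)^{-1}\boldsymbol{\mathcal{O}}_m^T$. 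Choosing
\begin{equation*}
[\boldsymbol{G}_1\ \cdots\ \boldsymbol{G}_m]=\boldsymbol{\Psi}\,\alpha(\boldsymbol{S})\,\boldsymbol{L}
\end{equation*}
satisfies the condition of Step 2. Defining $\boldsymbol{\Sigma}_i$ by the recursion of Step 1 then yields a pair $(\boldsymbol{\Sigma},\boldsymbol{G})$ satisfying \eqref{eq: comp design lemma equations}.

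The main obstacle is the index bookkeeping in Step 2, i.e., confirming that the recursion closes exactly into $\boldsymbol{\Psi}\alpha(\boldsymbol{S})=[\boldsymbol{G}_i]\boldsymbol{\mathcal{O}}_m$ with the powers of $\boldsymbol{S}$ in the stated order. The construction needs neither invertibility of $\alpha(\boldsymbol{S})$ nor any spectral separation between $\boldsymbol{F}$ and $\boldsymbol{S}$.
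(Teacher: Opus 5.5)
Your proof is correct and follows the paper's argument step for step. You partition $\boldsymbol{\Sigma}$ and $\boldsymbol{G}$ into $n$-row blocks, use $\boldsymbol{H}\boldsymbol{\Sigma}=\boldsymbol{\Psi}$ and the first $m-1$ block rows as a recursion, and collapse the last block row into the single condition $\boldsymbol{\Psi}\alpha(\boldsymbol{S})=[\boldsymbol{G}_1\ \cdots\ \boldsymbol{G}_m]\boldsymbol{\mathcal{O}}_m$, which is solvable because $m\ge p$ and observability give $\boldsymbol{\mathcal{O}}_m$ full column rank; your left-inverse choice of $\boldsymbol{G}$ just makes explicit the existence step the paper only asserts.
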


\textit{Proof:}
To begin, let the partitions of $\boldsymbol{\Sigma}$ and $\boldsymbol{G}$ be $\boldsymbol{\Sigma}
    =
    \begin{bsmallmatrix}
        \boldsymbol{\sigma}_1 \\
        \boldsymbol{\sigma}_2 \\
        \vdots \\
        \boldsymbol{\sigma}_m
    \end{bsmallmatrix}$ and $\boldsymbol{G}
    =
    \begin{bsmallmatrix}
        \boldsymbol{g}_1 \\
        \boldsymbol{g}_2 \\
        \vdots \\
        \boldsymbol{g}_m
    \end{bsmallmatrix}$.
Then combining equations \eqref{eq: F and H canonical form} and \eqref{eq: comp design lemma equations} with rearrangement yields the system of equations
\footnotesize
\begin{align}
\begin{split}
    \boldsymbol{\sigma}_1 =& \boldsymbol{\Psi}
    \\
    \boldsymbol{\sigma}_2 =& \boldsymbol{\sigma}_1\boldsymbol{S} + f_1\boldsymbol{\sigma}_1 - \boldsymbol{g}_1\boldsymbol{Q}
    \\
    \vdots
    \\
    \boldsymbol{\sigma}_m =& \boldsymbol{\sigma}_{m-1}\boldsymbol{S} + f_{m-1}\boldsymbol{\sigma}_1 - \boldsymbol{g}_{m-1}\boldsymbol{Q}
    \\
    \boldsymbol{0} =& \boldsymbol{\sigma}_m\boldsymbol{S} + f_m\boldsymbol{\sigma}_1 - \boldsymbol{g}_{m}\boldsymbol{Q}
\end{split} \label{eq: how to solve for sigma}
\end{align}
\normalsize
Observe that substituting the first row into the second row yields
$
\boldsymbol{\sigma}_2 = \boldsymbol{\Psi}\boldsymbol{S} + f_1\boldsymbol{\Psi} - \boldsymbol{g}_1\boldsymbol{Q}
$.
Substituting this result into the third row yields
$
    \boldsymbol{\sigma}_3 = \boldsymbol{\Psi}\boldsymbol{S}^2 + f_1\boldsymbol{\Psi}\boldsymbol{S} + f_2\boldsymbol{\Psi} - \boldsymbol{g}_1\boldsymbol{Q}\boldsymbol{S} - \boldsymbol{g}_2\boldsymbol{Q}
$.
Proof by induction may be used to show that repeating the operation until the last equation is reached yields
$
    \boldsymbol{0} = \boldsymbol{\Psi}\boldsymbol{S}^m + \sum_{k=1}^m f_k\boldsymbol{\Psi}\boldsymbol{S}^{m-k} - \sum_{k=1}^m \boldsymbol{g}_k\boldsymbol{Q}\boldsymbol{S}^{m-k}
$
which can be rearranged and written in matrix form as
\footnotesize
\begin{align}
    \begin{bmatrix}
        \boldsymbol{g}_{1}^T \\ \boldsymbol{g}_{2}^T \\ \cdots \\ \boldsymbol{g}_{m-1}^T \\ \boldsymbol{g}_{m}^T
    \end{bmatrix}^T
    \begin{bmatrix}
        \boldsymbol{Q}\boldsymbol{S}^{m-1}
        \\
        \boldsymbol{Q}\boldsymbol{S}^{m-2}
        \\
        \vdots
        \\
        \boldsymbol{Q}\boldsymbol{S}
        \\
        \boldsymbol{Q}
    \end{bmatrix}
    =
    \boldsymbol{\Psi}\boldsymbol{S}^m + \sum_{k=1}^m f_k\boldsymbol{\Psi}\boldsymbol{S}^{m-k}
\end{align}
\normalsize
When $m \ge p$, the lower part of the matrix is the observability matrix corresponding to the pair $(\boldsymbol{S},\boldsymbol{Q})$.
Therefore, at least one $\boldsymbol{G}$ will exist when $m \ge p$ and the pair $(\boldsymbol{S},\boldsymbol{Q})$ is observable \cite{hespanha2018linear}.  The solution $\boldsymbol{G}$ may be used to obtain $\boldsymbol{\Sigma}$ using \eqref{eq: how to solve for sigma}.

\subsection{Proof of Theorem~\ref{The: ctrl design}}

To begin, the existence of the parameters of the discrete phase compensator that reject the disturbance is analyzed as follows.
To analyze the steady-state behavior of the discrete phase compensator, we construct a subsystem composed of the discrete phase compensator~\eqref{eq: disc comp} and the dynamics of $\boldsymbol{d}(t)$ in~\eqref{eq: d generator}. The subsystem's state is defined as $[\boldsymbol{\eta}_d^T, \tilde{\boldsymbol{e}}^T, \boldsymbol{w}^T]^T$. 
The expression is omitted here.
By applying Lemma~\ref{lem: ss of hybrid sys.} to this subsystem, we have, for $t \in [t_k^+,t_{k+1}^-]$, the steady-state trajectories
\footnotesize
\begin{align}
\begin{split}
    \boldsymbol{\eta}_d(t) =& \boldsymbol{\Sigma}_d \boldsymbol{w}(t_k^+) \\
    \tilde{\boldsymbol{e}}(t) =& \exp{(\boldsymbol{A}_c(t-t_k)})\boldsymbol{\Pi}_e \boldsymbol{w}(t_k^+) \\
    \boldsymbol{u}_d(t) =& \boldsymbol{H}_d \boldsymbol{\Sigma}_d\boldsymbol{w}(t_k^+)
\end{split}
\end{align}
\normalsize
where $\boldsymbol{\Sigma}_d$ and $\boldsymbol{\Pi}_e$ are part of the solution to the Sylvester equation
\footnotesize
\begin{align}
\begin{split}
    \begin{bmatrix}
        \boldsymbol{\Sigma}_d \\ \boldsymbol{\Pi}_e
    \end{bmatrix}
    \boldsymbol{S}_d \exp{(\boldsymbol{S}_cT)}
    =&
    \begin{bmatrix}
        \boldsymbol{F}_d & \boldsymbol{G}_d\boldsymbol{C}\exp(\boldsymbol{A}_cT) \\
        \boldsymbol{0} & \boldsymbol{A}_d\exp(\boldsymbol{A}_cT)
    \end{bmatrix}
    \begin{bmatrix}
        \boldsymbol{\Sigma}_d \\ \boldsymbol{\Pi}_e
    \end{bmatrix}
    \\ +&
    \begin{bmatrix}
        \boldsymbol{G}_d \boldsymbol{Q}_0 \\ \boldsymbol{P}_0
    \end{bmatrix}
    \exp{(\boldsymbol{S}_cT)}
\end{split}
\end{align}
\normalsize
The upper diagonal structure of the Sylvester equation allows it to be separated into \eqref{eq: Pie} and the Sylvester equation
\footnotesize
\begin{align}
    \boldsymbol{\Sigma}_d \boldsymbol{S}_d \exp{(\boldsymbol{S}_cT)}
    =
    \boldsymbol{F}_d \boldsymbol{\Sigma}_d
    +
    \boldsymbol{G}_d
    \boldsymbol{Q}_e \label{eq: jump Sylvester equation}
\end{align}
\normalsize
The discrete phase compensator produces a control input which is constant across the continuous phase
    $\boldsymbol{u}_d(t_{k+1}^-)
    =
    \boldsymbol{H}_d\boldsymbol{\eta}(t_{k+1}^-)
    =
    \boldsymbol{H}_d\boldsymbol{\eta}(t_k^+)
    =
    \boldsymbol{H}_d\boldsymbol{\Sigma}_d\boldsymbol{w}(t_k^+)$
and the input will reject the disturbance if
\footnotesize
\begin{align}
    \boldsymbol{H}_d\boldsymbol{\Sigma}_d\boldsymbol{w}(t_k^+)
    =
    \boldsymbol{\Psi}_d \boldsymbol{w}(t_{k+1}^-)
    =
    \boldsymbol{\Psi}_d \exp(\boldsymbol{S}_cT) \boldsymbol{w}(t_k^+) \label{eq: jump constraint}
\end{align}
\normalsize
Combining \eqref{eq: jump Sylvester equation} and \eqref{eq: jump constraint} yields
\footnotesize
\begin{align}
\begin{split} \label{eq: disc comp constrained Sylvester equation}
    \boldsymbol{\Sigma}_d \boldsymbol{S}_d \exp{(\boldsymbol{S}_cT)}
    =&
    \boldsymbol{F}_d \boldsymbol{\Sigma}_d
    +
    \boldsymbol{G}_d
    \boldsymbol{Q}_e
    \\
    \boldsymbol{H}_d\boldsymbol{\Sigma}_d =& \boldsymbol{\Psi}_d\exp(\boldsymbol{S}_cT)
\end{split}
\end{align}
\normalsize
Lemma \ref{lem: comp design} can then be applied to show that there exists a solution $(\boldsymbol{\Sigma}_d,\boldsymbol{G}_d)$
if
$n_d \ge \text{dim}(\boldsymbol{w})$
and the pair 
$(\boldsymbol{S}_d\exp(\boldsymbol{S}_cT),\boldsymbol{Q}_e)$
is observable.

Similarly, to analyze the steady-state behavior of the continuous and jump compensators, we construct a subsystem composed of~\eqref{eq: cont comp} and \eqref{eq: d generator} and whose state is defined as $[\boldsymbol{\eta}_c^T, \boldsymbol{\eta}_j^T, \tilde{\boldsymbol{e}}^T,\boldsymbol{w}^T]^T$.
The expression is omitted here for simplicity. 
Since the control system designer has chosen $\boldsymbol{F}_c$ such that $\sigma(\boldsymbol{A}_c) \cap \sigma(\boldsymbol{F}_c) = \emptyset$, it is possible to diagonalize the continuous phase of the sub-system \cite{isidori2017lectures} using the change of variables
    $\tilde{\boldsymbol{\eta}}_c
    =
    \boldsymbol{\eta}_c
    -
    \begin{bmatrix}
        \boldsymbol{\Sigma}_e & \boldsymbol{\Sigma}_w
    \end{bmatrix}
    \begin{bmatrix}
        \tilde{\boldsymbol{e}} \\ \boldsymbol{w}
    \end{bmatrix}$
where $\begin{bmatrix} \boldsymbol{\Sigma}_e & \boldsymbol{\Sigma}_w \end{bmatrix}$ is the solution to the Sylvester equation
\footnotesize
\begin{align} \label{eq: cont comp Sylvester}
    \begin{bmatrix}
        \boldsymbol{\Sigma}_e & \boldsymbol{\Sigma}_w
    \end{bmatrix}
    \begin{bmatrix}
        \boldsymbol{A}_c & \boldsymbol{0} \\
        \boldsymbol{0} & \boldsymbol{S}_c
    \end{bmatrix}
    =
    \boldsymbol{F}_c
    \begin{bmatrix}
        \boldsymbol{\Sigma}_e & \boldsymbol{\Sigma}_w
    \end{bmatrix}
    +
    \boldsymbol{G}_c
    \begin{bmatrix}
        \boldsymbol{C} & \boldsymbol{Q}_0
    \end{bmatrix}
\end{align}
\normalsize
In the new coordinates, the sub-system is described by
\footnotesize
\begin{align}
\begin{split}
    \begin{bmatrix}
        \dot{\tilde{\boldsymbol{\eta}}}_c \\ \dot{\boldsymbol{\eta}}_j \\ \dot{\tilde{\boldsymbol{e}}} \\ \dot{\boldsymbol{w}}
    \end{bmatrix}
    &=
    \begin{bmatrix}
        \boldsymbol{F}_c & \boldsymbol{0} & \boldsymbol{0} & \boldsymbol{0} \\
        \boldsymbol{0} & \boldsymbol{0} & \boldsymbol{0} & \boldsymbol{0} \\
        \boldsymbol{0} & \boldsymbol{0} & \boldsymbol{A}_c & \boldsymbol{0} \\
        \boldsymbol{0} & \boldsymbol{0} & \boldsymbol{0} & \boldsymbol{S}_c
    \end{bmatrix}
    \begin{bmatrix}
        \tilde{\boldsymbol{\eta}}_c \\\boldsymbol{\eta}_j \\ \tilde{\boldsymbol{e}} \\ \boldsymbol{w}
    \end{bmatrix}
    \\
    \begin{bmatrix}
        \tilde{\boldsymbol{\eta}}_c(t_k^+) \\\boldsymbol{\eta}_j(t_k^+) \\ \tilde{\boldsymbol{e}}(t_k^+) \\ \boldsymbol{w}(t_k^+)
    \end{bmatrix}
    &=
    \begin{bmatrix}
        \boldsymbol{I} & \boldsymbol{H}_j & \boldsymbol{P}_{he} & \boldsymbol{P}_{hw} \\
        \boldsymbol{0} & \boldsymbol{F}_j & \boldsymbol{G}_j\boldsymbol{C} & \boldsymbol{G}_j\boldsymbol{Q}_0 \\
        \boldsymbol{0} & \boldsymbol{0} & \boldsymbol{A}_d & \boldsymbol{P}_0 \\
        \boldsymbol{0} & \boldsymbol{0} & \boldsymbol{0} & \boldsymbol{S}_d
    \end{bmatrix}
    \begin{bmatrix}
        \tilde{\boldsymbol{\eta}}_c(t_k^-) \\ \boldsymbol{\eta}_j(t_k^-) \\ \tilde{\boldsymbol{e}}(t_k^-) \\ \boldsymbol{w}(t_k^-)
    \end{bmatrix}
    \\
    \boldsymbol{u}_c
    &=
    \begin{bmatrix}
        \boldsymbol{H}_c & \boldsymbol{0} & \boldsymbol{H}_c \boldsymbol{\Sigma}_e & \boldsymbol{H}_c \boldsymbol{\Sigma}_w
    \end{bmatrix}
    \begin{bmatrix}
        \tilde{\boldsymbol{\eta}}_c \\ \boldsymbol{\eta}_j \\ \tilde{\boldsymbol{e}} \\ \boldsymbol{w}
    \end{bmatrix}
\end{split}
\end{align}
\normalsize
where $\boldsymbol{P}_{he} = \boldsymbol{\Sigma}_e - \boldsymbol{\Sigma}_e\boldsymbol{A}_d$ and $\boldsymbol{P}_{hw} = \boldsymbol{\Sigma}_w - \boldsymbol{\Sigma}_w\boldsymbol{S}_d - \boldsymbol{\Sigma}_e\boldsymbol{P}_0$.
Then Lemma \ref{lem: ss of hybrid sys.} may be used to show that, for $t \in [t_k^+,t_{k+1}^-]$, we have the steady-state trajectories
\footnotesize
\begin{align}
\begin{split}
    \tilde{\boldsymbol{\eta}}_c(t) =& \exp{(\boldsymbol{F}_c(t-t_k))} \boldsymbol{\Sigma}_c \boldsymbol{w}(t_k^+) \\
    \boldsymbol{\eta}_j(t) =& \boldsymbol{\Sigma}_j \boldsymbol{w}(t_k^+) \\
    \boldsymbol{\tilde{e}}(t) =& \exp{(\boldsymbol{A}_c(t-t_k))} \boldsymbol{\Pi}_e \boldsymbol{w}(t_k^+) \\
    \boldsymbol{u}_c(t) =& \boldsymbol{H}_c \exp{(\boldsymbol{F}_c(t-t_k))} \boldsymbol{\Sigma}_c \boldsymbol{w}(t_k^+)
    \\ +&
    \boldsymbol{H}_c \boldsymbol{\Sigma}_e \exp{(\boldsymbol{A}_c(t-t_k))} \boldsymbol{\Pi}_e \boldsymbol{w}(t_k^+)
    \\ +&
    \boldsymbol{H}_c \boldsymbol{\Sigma}_w \boldsymbol{w}(t)
\end{split} \label{eq: cont comp signal}
\end{align}
\normalsize
where $\boldsymbol{\Sigma}_c$, $\boldsymbol{\Sigma}_j$, and $\boldsymbol{\Pi}_e$ are part of the solution to the Sylvester equation
\footnotesize
$\begin{bmatrix}
        \boldsymbol{\Sigma}_c \\ \boldsymbol{\Sigma}_j \\ \boldsymbol{\Pi}_e
    \end{bmatrix}
    \boldsymbol{S}_d \exp{(\boldsymbol{S}_cT)}
    =
    \begin{bmatrix}
        \exp{(\boldsymbol{F}_cT)} & \boldsymbol{H}_j & \boldsymbol{P}_{he} \exp{(\boldsymbol{A}_cT)} \\
        \boldsymbol{0} & \boldsymbol{F}_j & \boldsymbol{G}_j\boldsymbol{C} \exp{(\boldsymbol{A}_cT)} \\
        \boldsymbol{0} & \boldsymbol{0} & \boldsymbol{A}_d \exp(\boldsymbol{A}_cT)
    \end{bmatrix}
    \begin{bmatrix}
        \boldsymbol{\Sigma}_c \\ \boldsymbol{\Sigma}_j \\ \boldsymbol{\Pi}_e
    \end{bmatrix}
     +
    \begin{bmatrix}
        \boldsymbol{P}_{hw} \\ \boldsymbol{G}_j\boldsymbol{Q}_0 \\ \boldsymbol{P}_0
    \end{bmatrix}
    \exp{(\boldsymbol{S}_cT)}$
\normalsize
which can be separated into (\ref{eq: Pie}) and
\footnotesize
\begin{align} 
    \label{eq: trans comp Sylvester equation}
    \boldsymbol{\Sigma}_j \boldsymbol{S}_d \exp{(\boldsymbol{S}_cT)}
    =&
    \boldsymbol{F}_j \boldsymbol{\Sigma}_j
    +
    \boldsymbol{G}_j\boldsymbol{Q}_e
    \\
    \boldsymbol{\Sigma}_c \boldsymbol{S}_d \exp{(\boldsymbol{S}_cT)}
    =&
    \exp{(\boldsymbol{F}_cT)} \boldsymbol{\Sigma}_c
    +
    \boldsymbol{H}_j\boldsymbol{\Sigma}_j - \boldsymbol{\Psi}_j
\end{align}
\normalsize
where
$
    \boldsymbol{\Psi}_j
    =
    -\left( \boldsymbol{P}_{he} \exp{(\boldsymbol{A}_cT)} \boldsymbol{\Pi}_e
    +
    \boldsymbol{P}_{hw} \exp{(\boldsymbol{S}_cT)} \right)
$.

The steady-state trajectory of $\boldsymbol{u}_c$ is $\boldsymbol{\Psi}_c\boldsymbol{w}(t)$, the control input which rejects the disturbance (recall Theorem~\ref{The: hybird regulator equation}), if
\footnotesize
\begin{align}
\begin{split} \label{eq: cont comp constraints}
    \boldsymbol{H}_c \begin{bmatrix}
        \boldsymbol{\Sigma}_e & \boldsymbol{\Sigma}_w
    \end{bmatrix}
    =&
    \begin{bmatrix}
        \boldsymbol{0} & \boldsymbol{\Psi}_c
    \end{bmatrix}
    \\
    \boldsymbol{H}_j\boldsymbol{\Sigma}_j = \boldsymbol{\Psi}_j
\end{split}
\end{align}
\normalsize

Collecting \eqref{eq: cont comp Sylvester}, \eqref{eq: trans comp Sylvester equation}, and \eqref{eq: cont comp constraints} yields
\footnotesize
\begin{align}
\begin{split} \label{eq: cont comp constrained Sylvester equation}
    \begin{bmatrix}
        \boldsymbol{\Sigma}_e & \boldsymbol{\Sigma}_w
    \end{bmatrix}
    \begin{bmatrix}
        \boldsymbol{A}_c & \boldsymbol{0} \\
        \boldsymbol{0} & \boldsymbol{S}_c
    \end{bmatrix}
    =&
    \boldsymbol{F}_c
    \begin{bmatrix}
        \boldsymbol{\Sigma}_e & \boldsymbol{\Sigma}_w
    \end{bmatrix}
    +
    \boldsymbol{G}_c
    \begin{bmatrix}
        \boldsymbol{C} & \boldsymbol{Q}_0
    \end{bmatrix}
    \\
    \boldsymbol{H}_c \begin{bmatrix}
        \boldsymbol{\Sigma}_e & \boldsymbol{\Sigma}_w
    \end{bmatrix}
    =&
    \begin{bmatrix}
        \boldsymbol{0} & \boldsymbol{\Psi}_c
    \end{bmatrix}
\end{split}
\end{align}
\normalsize
and
\footnotesize
\begin{align}
\begin{split} \label{eq: trans comp constrained Sylvester equation}
    \boldsymbol{\Sigma}_j \boldsymbol{S}_d \exp{(\boldsymbol{S}_cT)}
    =&
    \boldsymbol{F}_j \boldsymbol{\Sigma}_j
    +
    \boldsymbol{G}_j\boldsymbol{Q}_e
    \\
    \boldsymbol{H}_j\boldsymbol{\Sigma}_j =& \boldsymbol{\Psi}_j
\end{split}
\end{align}
\normalsize
Lemma \ref{lem: comp design} can be used to show that there exists a solution pair 
$(\begin{bmatrix} \boldsymbol{\Sigma}_e & \boldsymbol{\Sigma}_w \end{bmatrix} , \boldsymbol{G}_c)$
if 
$m_c \ge \text{dim}(\boldsymbol{x}) + \text{dim}(\boldsymbol{w})$
and the pair 
$( \text{diag}( \boldsymbol{A}_c,\boldsymbol{S}_c) , \begin{bmatrix} \boldsymbol{C} & \boldsymbol{Q}_0 \end{bmatrix} )$
is observable.
Similarly, there exists a solution pair 
$(\boldsymbol{\Sigma}_j , \boldsymbol{G}_j)$
if 
$m_j \ge \text{dim}(\boldsymbol{w})$
and the pair 
$(\boldsymbol{S}_d\exp(\boldsymbol{S}_cT),\boldsymbol{Q}_e)$
is observable.

\subsection{Proof of Lemma \ref{lem: hybrid swapping}}
To simplify the expression of~\eqref{eq: controller subsystem2}, we define
\footnotesize
$$
\boldsymbol{\Xi}_D(t_k^-) = \begin{bmatrix}
    \boldsymbol{\Xi}_j(t_k^-) & \boldsymbol{0} \\
    \boldsymbol{0} & \boldsymbol{\Xi}_d(t_k^-)
\end{bmatrix}, \quad
\hat{\boldsymbol{\theta}}_D(t_k^-)=\begin{bmatrix}
    \hat{\boldsymbol{\theta}}_j(t_k^-) \\
    \hat{\boldsymbol{\theta}}_d(t_k^-)
\end{bmatrix}
$$
\normalsize
To provide a more general lemma, $\bar{\boldsymbol{A}}_c, \bar{\boldsymbol{A}}_d, \bar{\boldsymbol{B}}_c, \bar{\boldsymbol{B}}_d, \bar{\boldsymbol{C}}$ are assumed to be time-varying and continuous in this proof so the lemma can be generalized to time-varying hybrid linear systems.

Observe that the solution to \eqref{eq: controller subsystem2} over the interval $[t_k^+,t_{k+1}^-]$ is
$
\hat{\boldsymbol{y}}(t) = \bar{\boldsymbol{C}}(t)\boldsymbol{\Phi}(t,t_k^+)\boldsymbol{z}(t_k^+) + \bar{\boldsymbol{C}}(t)\mathcal{H}_0[ \bar{\boldsymbol{B}}_c \boldsymbol{\Xi}_c^T\hat{\boldsymbol{\theta}}_c ](t,t_k^+)
$
where $\boldsymbol{\Phi}(\cdot,\cdot)$ is the state-transition matrix corresponding to $\bar{\boldsymbol{A}}_c(t)$~\cite{hespanha2018linear} and 
$\mathcal{H}_0[ \bar{\boldsymbol{B}}_c \boldsymbol{\Xi}_c^T\hat{\boldsymbol{\theta}}_c ](t,t_k^+) = \int\nolimits_{t_k^+}^{t}\boldsymbol{\Phi}(t,\tau)\bar{\boldsymbol{B}}_c(\tau)\boldsymbol{\Xi}_c^T(\tau)\hat{\boldsymbol{\theta}}_c(\tau) \partial \tau$.
Next, observe that the discrete phase of \eqref{eq: controller subsystem2} may be used to rewrite $\hat{\boldsymbol{y}}(t)$ as
$
\hat{\boldsymbol{y}}(t)
=
\bar{\boldsymbol{C}}(t)\boldsymbol{\Phi}(t,t_k^+)
\bar{\boldsymbol{A}}_d(t_k)\boldsymbol{z}(t_k^-)
 +
\bar{\boldsymbol{C}}(t)\boldsymbol{\Phi}(t,t_k^+)
\bar{\boldsymbol{B}}_d(t_k)\boldsymbol{\Xi}_D^T(t_k^-)
\hat{\boldsymbol{\theta}}_D(t_k^-)
 +
\bar{\boldsymbol{C}}(t)\mathcal{H}_0[ \bar{\boldsymbol{B}}_c \boldsymbol{\Xi}_c^T\hat{\boldsymbol{\theta}}_c ](t,t_k^+)
$.

Proof by induction can be used to show that repeating the above procedures allows $\hat{\boldsymbol{y}}(t)$ to be expressed as a function of the inputs and the initial state $\boldsymbol{z}(t_0)$, assuming $t_0=t_0^+$. That is,
$
    \hat{\boldsymbol{y}}(t)
    =
    \boldsymbol{\varepsilon}_0(t)
    +
    \mathcal{G}_c[ \mathcal{H}_0 [\bar{\boldsymbol{B}}_c \boldsymbol{\Xi}_c^T\hat{\boldsymbol{\theta}}_c ] ](t)
    +
    \mathcal{G}_d[ \bar{\boldsymbol{B}}_d \boldsymbol{\Xi}_D^T\hat{\boldsymbol{\theta}}_D ](t)
$
where $\boldsymbol{\varepsilon}_0(t) = \bar{\boldsymbol{C}}(t) \boldsymbol{\Phi}(t,t_k^+) \boldsymbol{\psi}(t_k^+,t_0) \boldsymbol{z}(t_0)$ is the effect of the initial condition, $\boldsymbol{\psi}(t_k^+,t_i^+) = \Pi_{j=i+1}^k \bar{\boldsymbol{A}}_d(t_j)\boldsymbol{\Phi}(t_j^-,t_{j-1}^+)$ is the state transition matrix from $t_i^+$ to $t_k^+$, 
    $\mathcal{G}_c [ 
    \mathcal{H}_0 [  
    \bar{\boldsymbol{B}}_c \boldsymbol{\Xi}_c^T\hat{\boldsymbol{\theta}}_c 
    ] 
    ](t)
    = 
    \bar{\boldsymbol{C}}(t)
    \mathcal{H}_0
    [ 
    \bar{\boldsymbol{B}}_c \boldsymbol{\Xi}_c^T\hat{\boldsymbol{\theta}}_c 
    ](t,t_k^+) + \mathcal{G}_d[ \bar{\boldsymbol{A}}_d\mathcal{H}_0[ \bar{\boldsymbol{B}}_c \boldsymbol{\Xi}_c^T\hat{\boldsymbol{\theta}}_c] ](t)$ captures the effect of the continuous control inputs from $t_0$ to $t$, 
    $\mathcal{G}_d[ \bar{\boldsymbol{B}}_d \boldsymbol{\Xi}_D^T\hat{\boldsymbol{\theta}}_D ](t)
    = \bar{\boldsymbol{C}}(t)\boldsymbol{\Phi}(t,t_k^+)\bar{\boldsymbol{B}}_d(t_k)\boldsymbol{\Xi}_D^T(t_k^-)\hat{\boldsymbol{\theta}}_D(t_k^-) + \bar{\boldsymbol{C}}(t)\boldsymbol{\Phi}(t,t_k^+)\mathcal{F}_0[ \bar{\boldsymbol{B}}_d \boldsymbol{\Xi}_D^T\hat{\boldsymbol{\theta}}_D ](t_k^+)$ describes the effect of the discrete control inputs from $t_1^-$ to $t_k^-$, and
    $\mathcal{F}_0[ \bar{\boldsymbol{B}}_d \boldsymbol{\Xi}_D^T\hat{\boldsymbol{\theta}}_D ](t_k^+)
    = \sum\nolimits_{i=1}^{k-1}\boldsymbol{\psi}(t_k^+,t_i^+)\bar{\boldsymbol{B}}_d(t_i)\boldsymbol{\Xi}_D^T(t_i^-)\hat{\boldsymbol{\theta}}_D^T(t_i^-)$ simplifies the expression and captures the effect of control inputs before $t_{k-1}^-$ on $\hat{\boldsymbol{y}}(t_k^+)$. 

Next, the discrete-time swapping lemma~\cite{Ioannou2006tutorial} may be applied to obtain
$\mathcal{G}_d[ \bar{\boldsymbol{B}}_d \boldsymbol{\Xi}_D^T\hat{\boldsymbol{\theta}}_D ](t)
=
\mathcal{G}_d[ \bar{\boldsymbol{B}}_d \boldsymbol{\Xi}_D^T ](t) \hat{\boldsymbol{\theta}}_D(t)
 -
\boldsymbol{\varepsilon}_1[\Delta\hat{\boldsymbol{\theta}}_D](t)$
where 
$\boldsymbol{\varepsilon}_1 [\Delta \hat{\boldsymbol{\theta}}_D ] (t) = \mathcal{G}_d [\bar{\boldsymbol{B}}_d \boldsymbol{\Xi}_D^T ] (t)(\hat{\boldsymbol{\theta}}_D(t) - \hat{\boldsymbol{\theta}}_D(t_k^-)) 
\quad + \bar{\boldsymbol{C}}(t) \boldsymbol{\Phi}(t,t_k^+) \mathcal{F}_0 [\mathcal{F}_0 [\bar{\boldsymbol{B}}_d \boldsymbol{\Xi}_D^T ] \Delta \hat{\boldsymbol{\theta}}_D ] (t_k^+)$ is the swapping term and $\Delta\hat{\boldsymbol{\theta}}_D(t_j^-) = \hat{\boldsymbol{\theta}}_D(t_{j+1}^-)-\hat{\boldsymbol{\theta}}_D(t_j^-)$ is the change of the controller parameters.

Similarly, the continuous-time swapping lemma~\cite{ioannou1993ltvSystems}, the linearity and definition of $\mathcal{G}_c$, and the discrete-time swapping lemma~\cite{Ioannou2006tutorial} may be applied to obtain
$
    \mathcal{G}_c [ 
    \mathcal{H}_0 [  
    \bar{\boldsymbol{B}}_c \boldsymbol{\Xi}_c^T\hat{\boldsymbol{\theta}}_c 
    ] 
    ](t)
    =
    \mathcal{G}_c [ 
    \mathcal{H}_0 [  
    \bar{\boldsymbol{B}}_c \boldsymbol{\Xi}_c^T 
    ] 
    ](t)
    \hat{\boldsymbol{\theta}}_c(t)
     -
    \boldsymbol{\varepsilon}_2[\Delta\hat{\boldsymbol{\theta}}_c](t)
    -
    \boldsymbol{\varepsilon}_3[\dot{\hat{\boldsymbol{\theta}}}_c](t)
$
where 
$\boldsymbol{\varepsilon}_2 [\Delta \hat{\boldsymbol{\theta}}_c ] (t) = \mathcal{G}_d [\bar{\boldsymbol{A}}_d \mathcal{H}_0 [\bar{\boldsymbol{B}}_c \boldsymbol{\Xi}_c^T ] ] (t)(\hat{\boldsymbol{\theta}}_c(t) - \hat{\boldsymbol{\theta}}_c(t_k^-)) 
\quad + \bar{\boldsymbol{C}}(t) \boldsymbol{\Phi}(t,t_k^+) \mathcal{F}_0 [\mathcal{F}_0 [\bar{\boldsymbol{A}}_d \mathcal{H}_0 [\bar{\boldsymbol{B}}_c \boldsymbol{\Xi}_c^T ] ] \Delta \hat{\boldsymbol{\theta}}_c ] (t)$ and
$\boldsymbol{\varepsilon}_3 [\dot{\hat{\boldsymbol{\theta}}}_c ] (t) = \mathcal{G}_c [\mathcal{H}_0 [\mathcal{H}_0 [\bar{\boldsymbol{B}}_c \boldsymbol{\Xi}_c^T ] \dot{\hat{\boldsymbol{\theta}}}_c ] ] (t)$ are the swapping terms and
$\Delta\hat{\boldsymbol{\theta}}_c(t_j^-) = \hat{\boldsymbol{\theta}}_c(t_{j+1}^-)-\hat{\boldsymbol{\theta}}_c(t_j^-)$ is the change of the controller parameters.
Combining these results yields
    $\hat{\boldsymbol{y}}(t)
    =
    \mathcal{G}_c[ \mathcal{H}_0[ \bar{\boldsymbol{B}}_c \boldsymbol{\Xi}_c^T ] ](t)\hat{\boldsymbol{\theta}}_c(t)
    +
    \mathcal{G}_d[ \bar{\boldsymbol{B}}_d \boldsymbol{\Xi}_D^T ](t)\hat{\boldsymbol{\theta}}_D(t)
    +
    \boldsymbol{\varepsilon}_0(t)
    -
    \boldsymbol{\varepsilon}_1[\Delta\hat{\boldsymbol{\theta}}_d](t)
    -
    \boldsymbol{\varepsilon}_2[\Delta\hat{\boldsymbol{\theta}}_c](t)
    -
    \boldsymbol{\varepsilon}_3[\dot{\hat{\boldsymbol{\theta}}}_c](t)$.
To complete the proof, let $\boldsymbol{\varepsilon}(t)=\boldsymbol{\varepsilon}_0(t) -\boldsymbol{\varepsilon}_1[\Delta\hat{\boldsymbol{\theta}}_d](t)-\boldsymbol{\varepsilon}_2[\Delta\hat{\boldsymbol{\theta}}_c](t)-\boldsymbol{\varepsilon}_3[\dot{\hat{\boldsymbol{\theta}}}_c](t)$, which goes to zero when the parameters stop changing and $t\rightarrow\infty$, and observe that $\mathcal{G}_c[ \mathcal{H}_0[ \bar{\boldsymbol{B}}_c \boldsymbol{\Xi}_c^T ] ](t)$ and $\mathcal{G}_d[ \bar{\boldsymbol{B}}_d \boldsymbol{\Xi}_D^T ](t)$ are the outputs of the systems~\eqref{eq: continuous regressor matrix generator}~and~\eqref{eq: discrete regressor matrix generator}, respectively.
\normalsize

\subsection{Proof of Theorem \ref{The: adaptive ctrl performance}}
To begin, we show that $\boldsymbol{K}(t)$ is uniformly bounded from above and below.
Recall that symmetric and positive definite matrices $\boldsymbol{K}(t)$ permit an orthonormal eigenvalue decomposition \cite{horn1990matrix}, which can be used to show that
$
    0 \le \tfrac{\boldsymbol{K}(t)\boldsymbol{\phi}^T(t)\boldsymbol{\phi}(t)\boldsymbol{K}(t)}{1+\text{trace}[\boldsymbol{\phi}^T(t)\boldsymbol{K}(t)\boldsymbol{\phi}(t)]} \le \boldsymbol{K}(t)
$
and
$
    f_L(\boldsymbol{K}(t))
    \le
    \dot{\boldsymbol{K}}(t)
    \le
    f_U(\boldsymbol{K}(t))
$
where the lower and upper bounding quadratic polynomials are given by $f_L(\boldsymbol{K}(t)) = \beta \boldsymbol{K}^0(t) + (\gamma - \alpha)\boldsymbol{K}(t) - \delta \boldsymbol{K}^2(t)$ and $f_U(\boldsymbol{K}(t)) = \beta \boldsymbol{K}^0(t) + \gamma\boldsymbol{K}(t) - \delta \boldsymbol{K}^2(t)$.
Letting $\mu_i(t)$ and $\boldsymbol{v}_i(t)$ be the $i^{\text{th}}$ eigenvalue and eigenvector of $\boldsymbol{K}(t)$ respectively, we obtain $f_L(\boldsymbol{K}(t)) = \sum_{\mu_i(t)\in\sigma(\boldsymbol{K}(t))}f_L(\mu_i(t))\boldsymbol{v}_i(t)\boldsymbol{v}_i^T(t)$ and $f_U(\boldsymbol{K}(t)) = \sum_{\mu_i(t)\in\sigma(\boldsymbol{K}(t))}f_U(\mu_i(t))\boldsymbol{v}_i(t)\boldsymbol{v}_i^T(t)$
and
$
    f_L(\mu_i(t)) \le \dot{\mu}_i(t) \le f_U(\mu_i(t))
$.
Observe that $\dot{\mu}_i \le f_U(\mu_i)<0$ for $\mu_i >\overline{\sigma}$ implies that the $i^{\text{th}}$ eigenvalue of $\boldsymbol{K}(t)$ is guaranteed to stop growing as $\mu_i(t)$ approaches $\overline{\sigma}$.
Similarly, $\dot{\mu}_i > f_L(\mu_i) > 0$ for $\mu_i\in[0,\underline{\sigma})$ implies that the $i^{\text{th}}$ eigenvalue of $\boldsymbol{K}(t)$ is guaranteed to stop shrinking as $\mu_i(t)$ approaches $\underline{\sigma}$.
Thus we conclude that $\underline{\sigma}\boldsymbol{I} \le \boldsymbol{K}(t) \le \overline{\sigma}\boldsymbol{I}$ for all $t$ provided the initial condition is chosen such that $\underline{\sigma}\boldsymbol{I} \le \boldsymbol{K}(0) \le \overline{\sigma}\boldsymbol{I}$.

Next, let the normalization signal be
$
    n_s^2(t) = (1 + \text{trace}(\boldsymbol{\phi}^T(t)\boldsymbol{K}(t)\boldsymbol{\phi}(t)))
$
and observe that, using~\eqref{eq: def d(t)} and \eqref{eq: estimated output}, $\boldsymbol{y}(t)$ may be re-written as
    $\boldsymbol{y}(t)
    =
    n_s^2(t)\boldsymbol{\epsilon}(t)
    -
    \begin{bmatrix}
         \boldsymbol{\phi}_j^T(t) & \boldsymbol{\phi}_d^T(t)
    \end{bmatrix}
    (
    \begin{bmatrix}
         \hat{\boldsymbol{\theta}}_j(t) \\ \hat{\boldsymbol{\theta}}_d(t)
    \end{bmatrix}
    -
    \begin{bmatrix}
         \hat{\boldsymbol{\theta}}_j(t_k^+) \\ \hat{\boldsymbol{\theta}}_d(t_k^+)
    \end{bmatrix}) 
    +
    \boldsymbol{\varepsilon}(t)$
Therefore, the measured output will converge to zero if the adaptive law is designed to drive the estimation error $\boldsymbol{\epsilon}(t)$ to zero and stop adaptation.
Consider the Lyapunov candidate
$
    V = \tilde{\boldsymbol{\theta}}^T(t)\boldsymbol{K}^{-1}(t)\tilde{\boldsymbol{\theta}}(t)
$
where
$
    \tilde{\boldsymbol{\theta}}(t) = \hat{\boldsymbol{\theta}}(t) - \boldsymbol{\theta}^*
$
and where $\boldsymbol{\theta}^*$ is one of the constant solutions described by Theorem \ref{The: ctrl design} such that $|\boldsymbol{\theta}^*| \le \theta_m$.

Using the identity
$
    \frac{\partial}{\partial t}\left[ \boldsymbol{K}^{-1}(t) \right] = - \boldsymbol{K}^{-1}(t)\dot{\boldsymbol{K}}(t)\boldsymbol{K}^{-1}(t)
$
and the fact that
$
    \beta \boldsymbol{I} + \gamma\boldsymbol{K}(t) - \delta\boldsymbol{K}^2(t) \ge 0, \forall t
$
yields
$
    \dot{V} \le \alpha \frac{\left(\boldsymbol{\phi}^T\tilde{\boldsymbol{\theta}}\right)^T\left(\boldsymbol{\phi}^T\tilde{\boldsymbol{\theta}}\right)}{n_s^2} + \dot{\tilde{\boldsymbol{\theta}}}^T\boldsymbol{K}^{-1}\tilde{\boldsymbol{\theta}} + \tilde{\boldsymbol{\theta}}\boldsymbol{K}^{-1}\tilde{\boldsymbol{\theta}}
$.
Using the fact that $\boldsymbol{\theta}^*$ is constant and projection can only make $\dot{V}$ more negative \cite{ioannou2012robust} implies
$
    \tilde{\boldsymbol{\theta}}^T \boldsymbol{K}^{-1} \dot{\tilde{\boldsymbol{\theta}}}
    =
    \tilde{\boldsymbol{\theta}}^T \boldsymbol{K}^{-1} 
    \dot{\hat{\boldsymbol{\theta}}}
    \le
    \alpha(\boldsymbol{\phi}^T\tilde{\boldsymbol{\theta}})^T\boldsymbol{\epsilon}
$.
Then
$
    \dot{V} 
    \le
    \alpha
    (\frac{(\boldsymbol{\phi}^T\tilde{\boldsymbol{\theta}})^T(\boldsymbol{\phi}^T\tilde{\boldsymbol{\theta}})}{n_s^2}
    +
    2(\boldsymbol{\phi}^T\tilde{\boldsymbol{\theta}})^T\boldsymbol{\epsilon}
    )
$.
Next, use equation~\eqref{eq: para estimation eq} and \eqref{eq: estimation error} to obtain
$
    n_s^2(t)\boldsymbol{\epsilon}(t) = -\boldsymbol{\phi}^T(t)\tilde{\boldsymbol{\theta}}(t)
$
and
$
    \dot{V} 
    \le
    \alpha
    \left(
    \frac{n_s^4(t)|\boldsymbol{\epsilon}(t)|^2}{n_s^2}
    -
    2n_s^2(t)|\boldsymbol{\epsilon}|^2
    \right)
$
which may be simplified to
$
    \dot{V} \le - \alpha n_s^2(t) |\boldsymbol{\epsilon}(t)|^2
$.
Thus $\alpha n_s^2(t) \boldsymbol{\epsilon}(t) \to 0$ as $t \to \infty$.
The projection operation in (\ref{eq: parameter update}) can only make $|\dot{\hat{\boldsymbol{\theta}}}|$ smaller, so
    $|\dot{\hat{\boldsymbol{\theta}}}|
    \le
    \frac{\|\boldsymbol{K}(t)\boldsymbol{\phi}(t)\|}{n_s^2(t)}
    \alpha n_s^2(t) |\boldsymbol{\epsilon}(t)|$
and $|\dot{\hat{\boldsymbol{\theta}}}| \to 0$ as $t \to \infty$ because $\frac{\|\boldsymbol{K}(t)\boldsymbol{\phi}(t)\|}{n_s^2(t)}$ is uniformly bounded.
As a result, the term $\boldsymbol{\varepsilon}(t)$, which is proportional to $|\dot{\hat{\boldsymbol{\theta}}}|$, and the measured output $\boldsymbol{y}(t)$ converge to zero as $t \to \infty$.

\subsection{Estimating the cost of the next step}
\label{sec: app est Lkp1}
This section details the method to efficiently estimate the cost function $\mathcal{L}(k+1)$ of the next step when $t\in[t_k, t_{k+1}]$.
We first observe that the analytical solution of equation \eqref{eq: cmd. conti.} takes the form of 
\footnotesize
\begin{align}
    x_{sc}^{c}(t) =
    m_k e^{\lambda(t-t_k)} 
    + n_k e^{-\lambda(t-t_k)}
    \label{eq: x_sc^c}
\end{align}
\normalsize
where
$\lambda = \sqrt{g/z_{sc}}$, and $m_k, n_k$ are the coefficients of the commanded trajectory for the $k^{\text{th}}$ step.
$\boldsymbol{z}(k) = [m_k, n_k]^T$ can be used to represent the commanded trajectory for the $k^{\text{th}}$ step. 
For all time $t\in \mathbb{R}$, an estimated $\hat{\boldsymbol{z}}_{\boldsymbol{p}(t)}(k)$ can be obtained from the current state $\boldsymbol{p}(t)$ at time $t$.
Given $\hat{\boldsymbol{z}}_{\boldsymbol{p}(t)}(k)$ and equation \eqref{eq: x_sc^c}, the estimated pre-impact CoM state $\hat{\boldsymbol{p}}^c(t_{k+1}^-)$ can be computed.
In addition, the current walking output $\boldsymbol{y}(k) = [v_{\text{walk}}(k), o_{\text{walk}}(k)]^T$ for the $k^{\text{th}}$ step can be described as a linear transformation of $\boldsymbol{z}(k)$
\footnotesize
\begin{align}
    \boldsymbol{y}(k)
     = 
    \boldsymbol{G} 
    \boldsymbol{z}(k)
    =
    \begin{bmatrix}
        \frac{e^{\lambda T}-1}{T} 
        &
        \frac{e^{-\lambda T}-1}{T} 
        \\
        \frac{e^{\lambda T}+1}{2} 
        &
        \frac{e^{-\lambda T}+1}{2} 
    \end{bmatrix}
    \boldsymbol{z}(k)
    \label{eq: Gz}
\end{align}
\normalsize

To predict the cost function of the next step, the discrete event~\eqref{eq: xc update} needs to be considered. 
As $\boldsymbol{p}(t_{k+1}^+)$ remains unknown until the post-impact instant, it is estimated by 
$\boldsymbol{p}(t_{k+1}^+)
\approx
\hat{\boldsymbol{p}}(t_{k+1}^+)
=
\hat{\boldsymbol{p}}^c(t_{k+1}^-)
+
\begin{bmatrix}
    -s(t_{k+1}^-) \\
    0
\end{bmatrix}
+
\begin{bmatrix}
    s_{ada}(t_k^-) \\
    r_{ada}(t_k^-)
\end{bmatrix}$
where $s(t_{k+1}^-)$ is the newest solution from the footstep planner and $s_{ada}(t_k^-), r_{ada}(t_k^-)$ are the newest control outputs from the adaptive controller.
This approximation can be expressed as a linear transformation between the current estimated coefficient $\hat{\boldsymbol{z}}_{\boldsymbol{p}(t)}(k)$ and the estimated coefficient of the next step $\hat{\boldsymbol{z}}(k+1)$:
\footnotesize
\begin{align}
    \boldsymbol{L} \boldsymbol{z}(k+1)
    =
    \boldsymbol{L} \boldsymbol{E} \hat{\boldsymbol{z}}_{\boldsymbol{p}(t)}(k) 
    +
    \boldsymbol{\Delta}_z(k)
    \label{eq: z update}
\end{align}
\normalsize
where
\footnotesize
$   \boldsymbol{L} = \begin{bmatrix}
        1 & 1 \\ \lambda & - \lambda
    \end{bmatrix}
    , 
    \boldsymbol{E} = \begin{bmatrix}
        e^{\lambda T} & 0 \\
        0 & e^{- \lambda T}
    \end{bmatrix}
$, 
$
    \boldsymbol{\Delta}_z(k) 
    =
    \begin{bmatrix}
        s_{ada} \\ r_{ada}
    \end{bmatrix}
    +
    \begin{bmatrix}
        -s(t_k^-) \\ 0
    \end{bmatrix}
$.
\normalsize
By equations \eqref{eq: Gz}, \eqref{eq: z update}, and the definition of $e_{walk}(k)$, the next step's output error in walking behavior can be efficiently estimated by
$\boldsymbol{e}_{walk}(k+1)
    =
    \mathbf{A}_{e}\boldsymbol{e}_{walk}(k)
    +
    \mathbf{B}_{e} [ \mathbf{C}_e(k) + \begin{bmatrix}
        -s(t_k^-) \\ 0
    \end{bmatrix} ]$
where
$\boldsymbol{A}_e = \boldsymbol{GEG}^{-1}$,
$\boldsymbol{B}_{e} = \boldsymbol{GL}^{-1}$, 
$\boldsymbol{C}_e(k) = - \boldsymbol{LEG}^{-1} \boldsymbol{y}^{d}(k)
+ \boldsymbol{LG}^{-1} \boldsymbol{y}^{d}(k+1) + [s_{ada}, r_{ada}]^T$, and
$\boldsymbol{y}^{d}(k)$ is the desired walking behavior $\boldsymbol{y}(k)$ at the $k^{\text{th}}$ step.

\bibliographystyle{IEEEtran}
\bibliography{Reference.bib}


\vspace{11pt}




\vfill

\end{document}